\newcolumntype{?}{!{\vrule width 1pt}}
\newcolumntype{?}{!{\vrule width 1pt}}
    \newcolumntype{P}[1]{>{\centering\arraybackslash}p{#1}}
    \newcolumntype{M}[1]{>{\centering\arraybackslash}m{#1}}
\title{The Tempered Hilbert Simplex Distance and Its Application To
Non-linear Embeddings of TEMs}
\author{
   Ehsan Amid$^\dagger$\thanks{Alphabetical author order.} \quad Frank Nielsen$^\ddagger$ \quad Richard Nock$^\star$ \quad Manfred K. Warmuth$^\star$ \\
 $^\dagger$Google DeepMind\\
 $^\ddagger$Sony Computer Science Laboratories Inc\\
 $^\star$Google Research\\
{\normalsize eamid@google.com, frank.nielsen@acm.org, \{richardnock,manfred\}@google.com} \\
}
\date{}
\newcolumntype{?}{!{\vrule width 2pt}}
\newcolumntype{C}[1]{>{\centering\let\newline\\\arraybackslash\hspace{-5pt}}m{#1}}
\newcommand\reallywidecheck[1]{%
\savestack{\tmpbox}{\stretchto{%
  \scaleto{%
    \scalerel*[\widthof{\ensuremath{#1}}]{\kern-.6pt\bigwedge\kern-.6pt}%
    {\rule[-\textheight/2]{1ex}{\textheight}}
  }{\textheight}%
}{0.5ex}}%
\stackon[1pt]{#1}{\scalebox{-1}{\tmpbox}}%
}
\definecolor{ao}{rgb}{0.0, 0.5, 0.0}
\newcommand*\bigcdot{\mathpalette\bigcdot@{.5}}
\newcommand*\bigcdot@[2]{\mathbin{\vcenter{\hbox{\scalebox{#2}{$\m@th#1\bullet$}}}}}
\definecolor{darkgreen}{RGB}{0,205,0}
\definecolor{Gray}{gray}{0.85}
\definecolor{LightCyan}{rgb}{0.88,1,1}
\definecolor{Gray}{gray}{0.85}
\definecolor{LightCyan}{rgb}{0.88,1,1}
\newcommand{\answeryes}[1]{\textcolor{blue}{\textbf{Yes}}}
\newcommand{\answerno}[1]{\textcolor{red}{\textbf{No}}}
\newcommand{\answerna}[1]{\textcolor{orange}{\textbf{Not Applicable}}}
\newcolumntype{a}{>{\columncolor{Gray}}c}
\newcolumntype{b}{>{\columncolor{white}}c}
\newcolumntype{d}{>{\columncolor{Gray}}r}
\renewcommand{\epsilon}{\varepsilon}
\renewcommand{\phi}{\varphi}
\newcommand{\Mat}[1]{\mathbf{#1}}
\newcommand{\Prj}[2]{{
  \left.\kern-\nulldelimiterspace 
  #1 
  \vphantom{\big|} 
  \right|_{#2} 
}}
\newcommand{\defeq}{\stackrel{\mathrm{.}}{=}}
\newcommand{\ve}[1]{\bm{#1}}
\newtheorem{definition}{Definition}
\newtheorem{proposition}{Proposition}
\def\cqfd{\hfill\hbox{$\hbox{\vrule width 0.8pt
\vbox to6pt{\hrule depth 0.8pt width 5.2pt
\vfill\hrule depth 0.8pt}\vrule width 0.8pt}$}} 
\newtheorem{theorem}{Theorem}
\newtheorem{lemma}{Lemma}
\newtheorem{remark}{Remark}
\newcommand{\intset}[1]{\cbr{1..n}}
\newcommand{\node}{\nu}
\newcommand{\acrotem}{TEM}
\newcommand{\HG}{\text{\tiny HG}}
\newcommand{\tHG}{\text{\tiny $t$-HG}}
\newcommand{\tFD}{\text{\tiny $t$-FD}}
\newcommand{\tsFD}{\text{\tiny $t^*$-FD}}
\newcommand{\tNH}{\text{\tiny $t$-NH}}
\newcommand{\tvar}{\text{\tiny $t$-var}}
\newcommand{\tHGdiff}{\text{\tiny $t$-dHG}}
\newcommand{\ttpHGdiff}{\text{\tiny $t,\!\delta$-dHG}}
\newcommand{\tFDdiff}{\text{\tiny $t$-dFD}}
\newcommand\riemannint[3]{\sideset{^{(#1)\hspace{-0.1cm}}}{}{\int_{#2}^{#3}}}
\newcommand{\semitnorm}[2]{\sideset{^{(#1)}}{}{\mathop{\Vert}} {#2} \Vert}
\def\st{\ :\ }
\def\calC{\mathcal{C}}
\def\arccosh{\mathrm{arccosh}}
\def\bbR{\mathbb{R}}
\def\HG{\mathrm{HG}}
\def\Bi{\mathrm{Bi}}
\def\eps{\epsilon}
\def\st{{\ :\ }}  
\def\du{\mathrm{d}u}
\begin{document}

\maketitle
\begin{abstract}
Tempered Exponential Measures (\acrotem s) are a parametric generalization of the exponential family of distributions maximizing the tempered entropy function among positive measures subject to a probability normalization of their power densities. Calculus on \acrotem s relies on a deformed algebra of arithmetic operators induced by the deformed logarithms used to define the tempered entropy. In this work, we introduce three different parameterizations of finite discrete \acrotem s via Legendre functions of the negative tempered entropy function. In particular, we establish an isometry between such parameterizations in terms of a generalization of the Hilbert log cross-ratio simplex distance to a tempered Hilbert co-simplex distance. Similar to the Hilbert geometry, the tempered Hilbert distance is characterized as a $t$-symmetrization of the oriented tempered Funk distance. We motivate our construction by introducing the notion of $t$-lengths of smooth curves in a tautological Finsler manifold. We then demonstrate the properties of our generalized structure in different settings and numerically examine the quality of its differentiable approximations for optimization in machine learning settings.
\end{abstract}

\section{Introduction}
Tempered Exponential Measures (\acrotem s)~\citep{amid2023clustering} are an alternate generalization of the exponential family via the parametric variants of the standard logarithm and exponential functions, initially introduced in thermostatics~\citep{texp1,texp2}. Similar parametric families of statistical models, such as the $q$-exponential~\citep{naudts2009q,amari2011geometry} and the deformed exponential family~\citep{amari2012geometry}, have been proposed before via a max (Tsallis) entropy~\citep{tsallis1988possible} principle. However, the main modification in the axiomatic characterization of \acrotem s that sets them apart from similar formulations is the normalization constraint, imposed not on the measure itself but a closely related quantity called the \emph{co-density}. This alteration, although seemingly cosmetic at first, induces major implications, including the canonical form, information geometry, and other implicit properties highly relevant to machine learning. Consequently, \acrotem s have been emerging as a parametric family with applications in machine learning as diverse as clustering~\citep{amid2023clustering}, boosting~\citep{nock2023boosting} and optimal transport~\citep{amid2023optimal}.

In this paper, we focus on the parametric characterization of the discrete \acrotem s via the dual functions of the negative tempered entropy. We establish isometries between such parameterizations via a generalization of the Hilbert simplex distance~\citep{HB-OriginHilbert-2014}. Hilbert discovered the now-called Hilbert geometry during the summer of 1894~\citep{Hilbert-1895} 
when he investigated his 4th problem~\citep{HB-fourthproblem-2014}:   
 The study of metrics on projective space subsets for which line segments are geodesics.
 The Hilbert distance $\rho_{\HG}^\Omega(\ve{r}, \ve{s})$ between two distinct points $\ve{r}$ and $\ve{s}$ induced by a 
 bounded convex set $\Omega$ with boundary $\partial\Omega$ is defined according 
 to the logarithm of the cross-ratio of the four ordered collinear points $\bar{\ve{r}}, \ve{r}, \ve{s}, \bar{\ve{s}}$ where $\bar{\ve{r}}$ and $\bar{\ve{s}}$ are the intersection of the line $(\ve{r} \ve{s})$ with $\partial\Omega$:
 $$
\rho_{\HG}^\Omega(\ve{r}, \ve{s})=\chi\, \log \frac{\|\ve{r}-\bar{\ve{s}}\| \, \|\ve{s}-\bar{\ve{r}}\| }{\|\ve{r}-\bar{\ve{r}}\| \, \|\ve{s}-\bar{\ve{s}}\|},
 $$
 where $\chi>0$ is a scalar factor and $\|\cdot\|$ is any arbitrary norm (e.g., $\|\cdot\|_2$). The points 
 When $\chi=\frac{1}{2}$ and $\Omega=\{ \ve{x}\in\bbR^d \st \|\ve{x}\|_2<1\}$ is the open unit ball, 
 the Hilbert distance coincides with Klein hyperbolic distance~\citep{ProjectiveBook-2011}.
 Furthermore, when $\Omega$ is an open ellipsoid, the Hilbert geometry amounts to the Cayley-Klein geometry~\citep{CKgeometry-2006}.
 \citet{Birkhoff-1957}, motivated by studying the contraction factor of linear maps,
 defined a distance between pairs of rays
in a closed cone $\calC=\{(\lambda,\lambda \Omega) \st \lambda\geq 0\}$ 
which coincides with the Hilbert metric on any affine slice $\lambda\Omega$ of the cone.
The Perron-Frobenius theorem is obtained  as a corollary of the Birkhoff theorem.
Birkhoff distance is nowadays commonly called Hilbert projective distance~\citep{HB-BirkhoffHilbert-2014} and is used in machine learning to study the convergence of  Sinkhorn-type algorithms solving various regularized optimal transport~\citep{PC-OT-2019}.
Hilbert geometry can be studied from the Finslerian viewpoint 
and becomes Riemannian only when $\Omega$ is an ellipsoid~\citep{HB-FinslerHilbert-2014}.
When $\Omega$ is a simplex of $\bbR^d$, \citet{deLaHarpe-2000} proved that 
Hilbert geometry is isometric to a polytopal normed vector space.
This Hilbert simplex geometry has been considered in machine learning for clustering tasks~\citep{ClusteringHilbert-2019}
 and a differentiable approximation of the Hilbert simplex distance 
 has been used for non-linear embeddings~\citep{NLEHilbert-2023} 
 which experimentally performs better than hyperbolic graph embeddings for some classes of graphs.

\section{Discrete Tempered Exponential Measures}

\subsection{A Primer on \acrotem s}
\paragraph{Definitions} We start by reviewing the tempered logarithm and exponential functions~\citep{nGT},
\begin{gather}
    \log_t x = \frac{1}{1 - t}(x^{1 - t} - 1)\,,  \label{eq:log}\\
    \exp_t x = [1 + (1 -t)\, x]_+^{\frac{1}{1 - t}}\,\,\, \big([\cdot]_+ = \max(\cdot, 0)\big)\,, \label{eq:exp}
\end{gather}
parameterized by a \emph{temperature} $t\in \mathbb{R}$. Throughout our construction, we restrict $t < 2$. For $t = 1$, we recover the standard $\log$ and $\exp$ functions as limit cases. Au contraire, the sum-product property $\log_t a + \log_t b \neq \log_t (a\cdot b)$, among others, no longer holds for $t\neq 1$. Rather, such property requires switching standard operators to a generalized $t$-algebra, generalizing their counterparts at $t=1$,
\begin{align}
    a \oplus_t b & = \log_t(\exp_t a\cdot \exp_t b)\,,\label{eq:t-plus}\\   
    a \ominus_t b & = \log_t\Big(\frac{\exp_t a}{\exp_t b}\Big)\,,\,\,\, \text{s.t.}\,\,\exp_t b \neq 0\,.\label{eq:t-minus}    
\end{align}
Both operations\footnote{Similar operations were initially introduced by \citet{nlwGA}. See \citet{amid2023clustering} for a comprehensive list.} are associative (while $\oplus_t$ being commutative), with the neutral element $0$,
\begin{align}
    & x \oplus_t\, 0 = x \ominus_t 0 = \ominus_t (\ominus_t x) = x\,.\label{eq:zero-element}
\end{align}
where $\ominus_t x = 0 \ominus_t x$. For instance, $(x \ominus_t w) \ominus_t (y \ominus_t z) = x \ominus_t w \ominus_t y \oplus_t z$. Predominantly relevant to our construction, we have $\log_t a \oplus_t \log_t b = \log_t (a\cdot b)$ and accordingly for $\ominus_t$, replacing the product with division. 

\begin{figure*}[t!]
\vspace{-0.4cm}
\begin{center}
    \subfigure[$t=0.8$]{\includegraphics[width=0.258\linewidth]{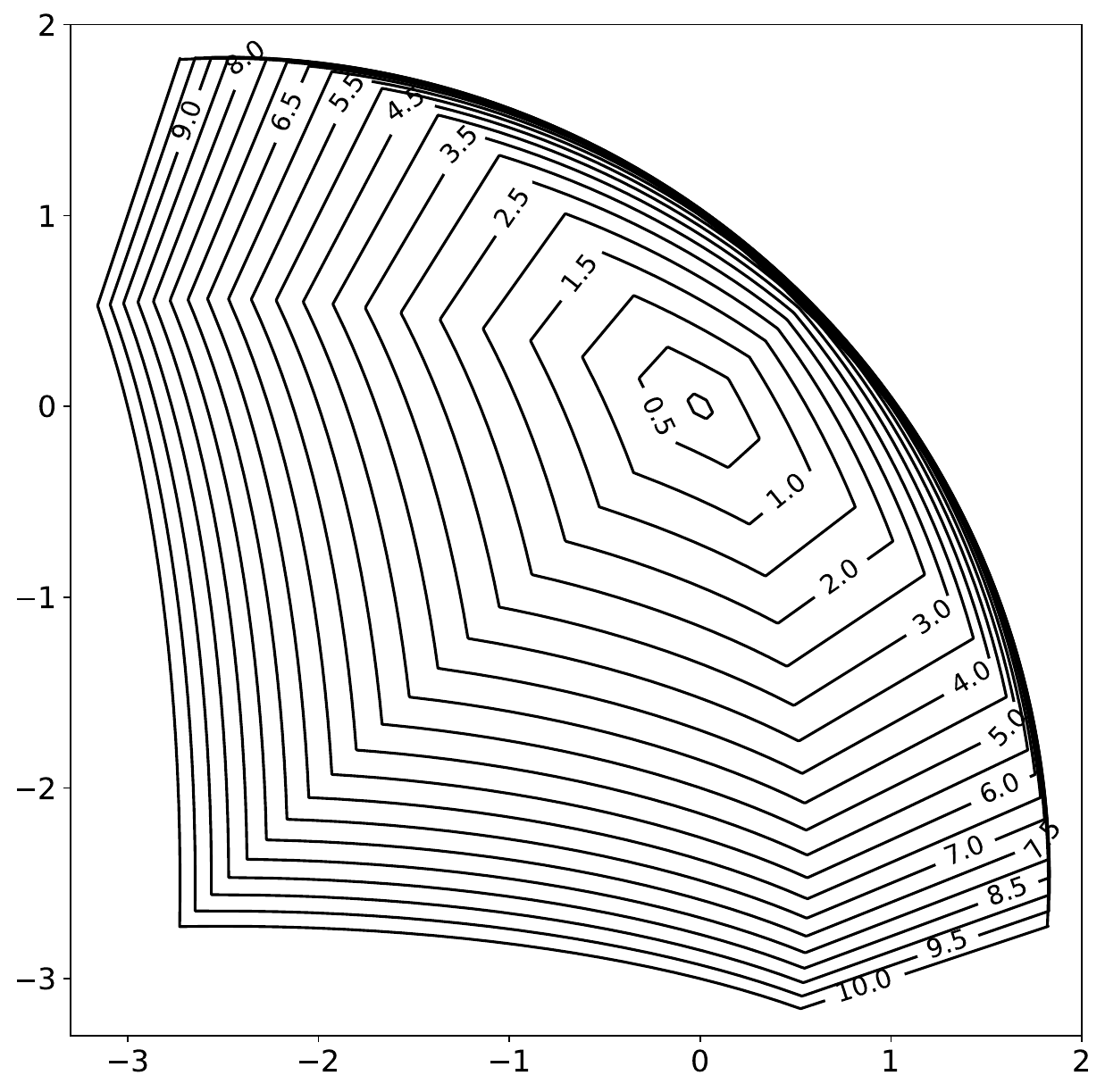}}
    \subfigure[$t=1.0$]{\includegraphics[width=0.255\linewidth]{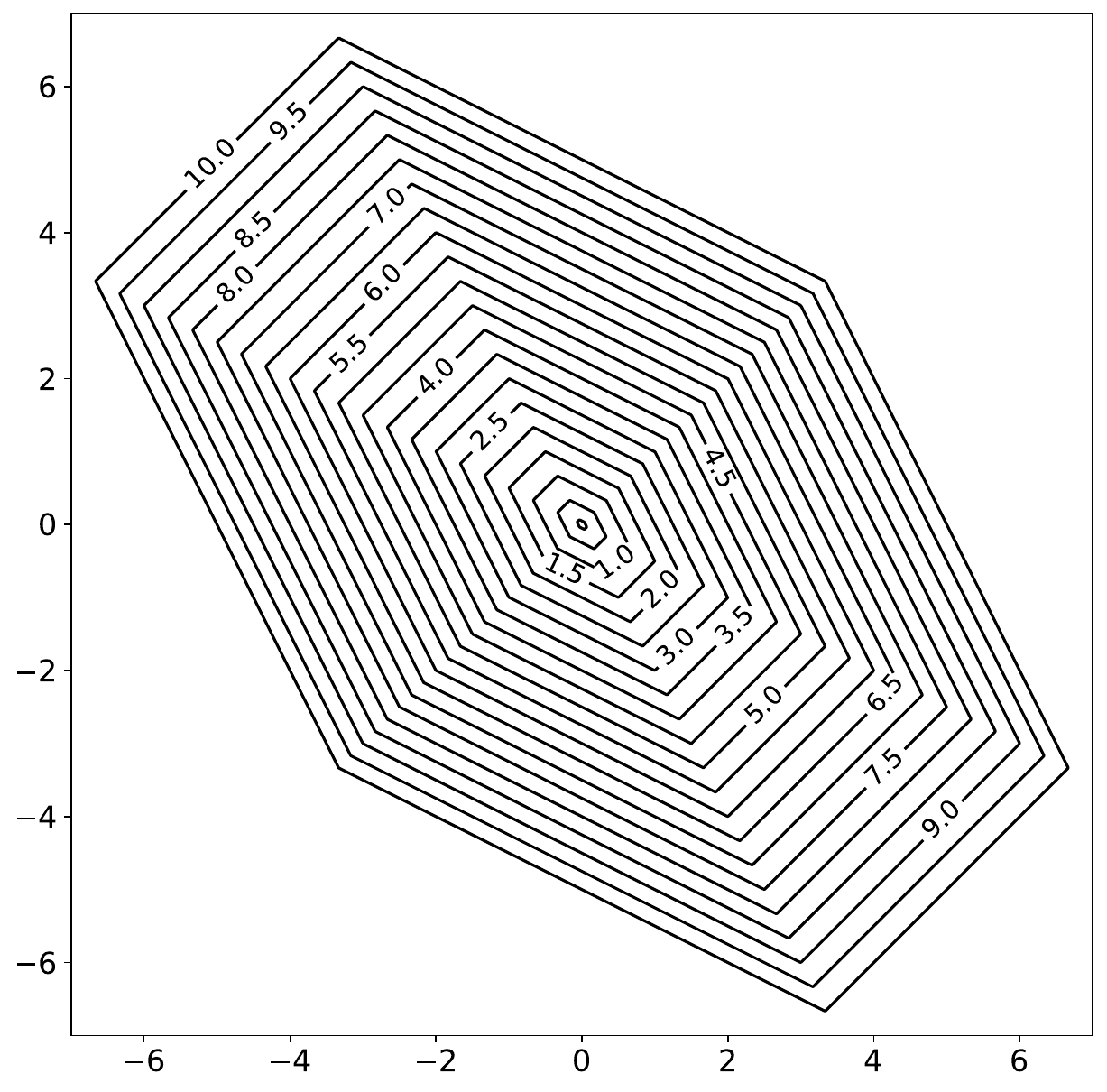}}
    \subfigure[$t=1.1$]{\includegraphics[width=0.255\linewidth]{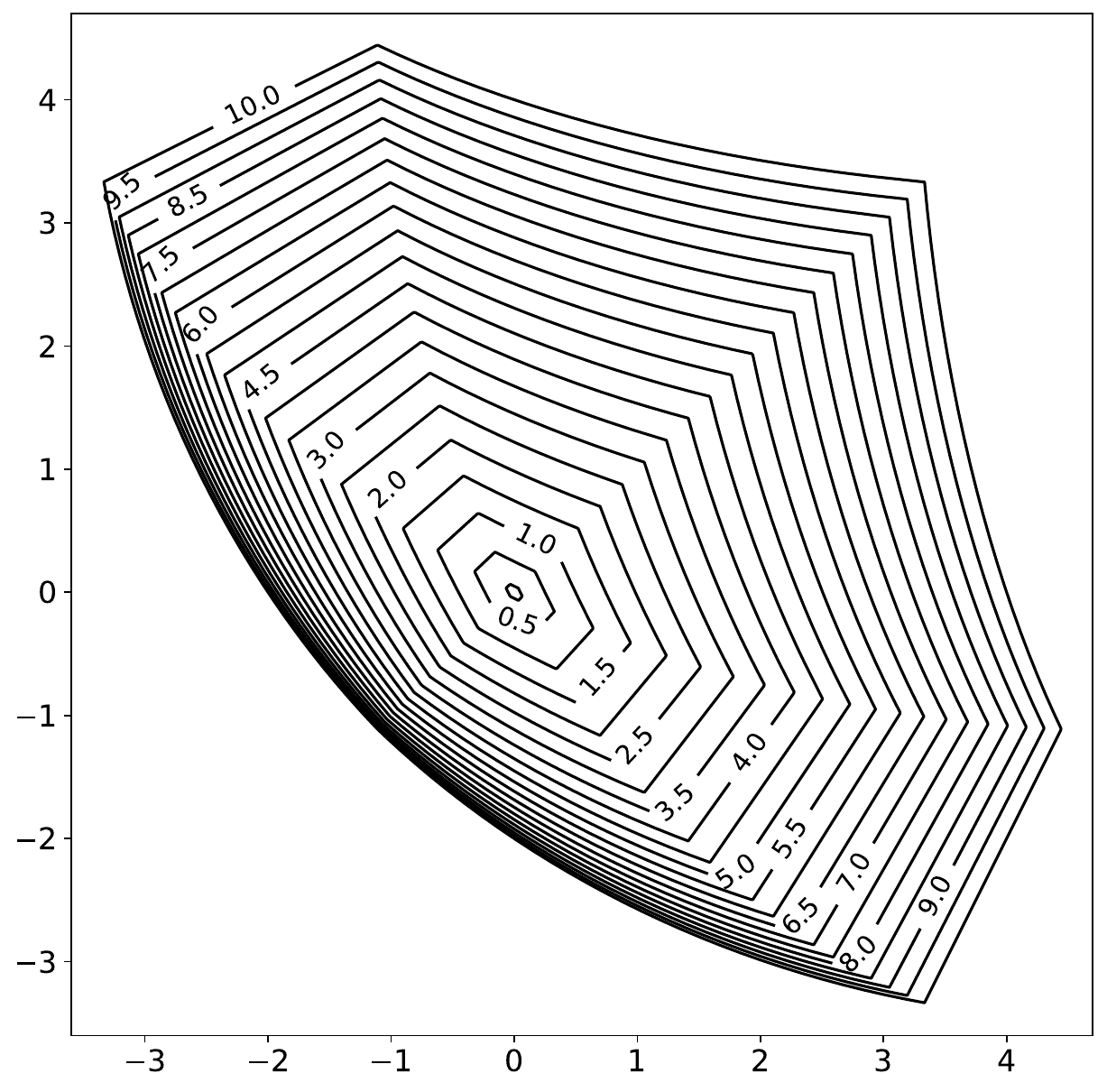}}
    \vspace{-0.2cm}
    \caption{$B^r_{\tilde{V}^d_t}$ balls of $\|\cdot\|_{\tNH}$ distance of different radii ($r \leq 10$) on $\tilde{V}^3_t$ shown along the first two dimensions.}
    \label{fig:relative}
    \end{center}
    \vspace{-0.5cm}
\end{figure*}

\paragraph{Tempered Exponential Measures} \acrotem s~\citep{amid2023clustering} are defined as a generalization of the exponential family with unnormalized densities. \acrotem s admit the following canonical form 
\begin{equation}
    \label{eq:tem}
    \begin{split}
    \tilde{p}_{t|\ve{\theta}}(\ve{x}) & = \frac{\exp_t\big(\ve{\theta}\cdot \ve{\phi}(\ve{x})\big)}{\exp_t (G_t(\ve{\theta}))} = \exp_t\big(\ve{\theta}\cdot \ve{\phi}(\ve{x}) \ominus_t G_t(\ve{\theta})\big)\,,
    \end{split}
\end{equation}
where $\ve{\theta}$ is the \emph{natural parameters}, $\ve{\phi}(\ve{x})$ is the \emph{sufficient statistics}, and $G_t(\ve{\theta})$
is the \emph{cumulant} function. The exponential family of distributions is a special form when $t=1$. However, \acrotem s impose the unit mass constraint (ensured by the cumulant function) not on the measure itself but on a relevant quantity called the \emph{co-density}, $p_{t|\ve{\theta}} = \tilde{p}^{1/t^*}_{t|\ve{\theta}}$, that is, $\int p_{t|\ve{\theta}}(\ve{x})\mathrm{d}\xi = \int \tilde{p}^{1/t^*}_{t|\ve{\theta}}(\ve{x})\mathrm{d}\xi = 1$ for $t^*=\frac{1}{2-t}$.

Remarkably, any discrete probability distribution $\ve{p} \in \Delta^d$, in which $\Delta^d = \{\ve{p} \in \mathbb{R}_+^d\vert \sum_i p_i = 1\}$ denotes the $(d-1)$-dimensional probability simplex, can be characterized uniquely via a discrete \acrotem, i.e., $\tilde{\ve{p}} \in \tilde{\Delta}_t^d$ in the \emph{co-simplex} $\tilde{\Delta}_t^d = \{\tilde{\ve{p}} \in \mathbb{R}_+^d\vert \sum_i \tilde{p}^{1/t^*}_i = 1\}$ via the transformation $\tilde{\ve{p}} = \ve{p}^{t^*}$. Note that for $t=1$, we have $\tilde{\Delta}_1^d = \Delta^d$ and the identification is trivial. In this paper, we focus on different parameterizations of discrete \acrotem s induced by a closely related concept, the \emph{negative tempered entropy} function, which in the discrete case can be written as
\begin{equation}
    \label{eq:entropy}
    F_t(\tilde{\ve{p}}) = \sum_{i \in [d]}\big(\tilde{p}_i \log_t \tilde{p}_i - \log_{t-1} \tilde{p}_i\big)\,.
\end{equation}
\acrotem s are motivated by maximizing the tempered relative entropy subject to a moment constraint (and unit mass of the corresponding co-density). For $t=1$, \eqref{eq:entropy} reduces to the negative Shannon entropy function (extended to positive measures), and thus, we recover the exponential family formulation.

\subsection{Parameterizations of Discrete \acrotem s}
We first discuss a parameterization for discrete \acrotem s based on the dual of the negative tempered entropy function~\eqref{eq:entropy} in the reduced form. This parameterization is a generalization of the reduction of the number of parameters for discrete distributions by one by utilizing the normalization constraint. Next, we consider the overparameterized form where the parameterizations are attained via the unconstrained and constrained Legendre dual of \eqref{eq:entropy}.

\paragraph{Minimal Form}
The family $\tilde{\ve{p}} \in \tilde{\Delta}_t^d$ of discrete \acrotem s can be written in the canonical form~\eqref{eq:tem} by taking an arbitrary nonzero component, e.g., $\tilde{p}_d$, where $\tilde{p}_d = \big(1 - \sum_{i \in [d-1]} \tilde{p}_i^{1/{t^*}}\big)^{t^*}$ and defining the natural parameters $\hat{\theta}_i = \log_t\frac{\tilde{p}_i}{\tilde{p}_d}$ and  $G_t(\hat{\ve{\theta}}) = \log_t \frac{1}{\tilde{p}_d}$. We can then write
\[
\tilde{p}(x) = \frac{\exp_t(\sum_{i\in [d-1]}\hat{\theta}_i \delta_i(x))}{\exp_t(G_t(\hat{\ve{\theta}}))}\, ,
\]
where $\delta_i(x) = 1$ if $x = i$ and $0$ otherwise. The negative of the tempered entropy function and the corresponding derivative (also called the \emph{link}) function for this case is given by\footnote{We ignore a constant term in the definition.}
\begin{align*}
 \hat{F}_t(\tilde{\ve{p}}) & = \sum_{i \in [d]} \tilde{p}_i \log_t \tilde{p}_i\,, \\
 [\hat{f}_t(\tilde{\ve{p}})]_i & = \frac{\partial \hat{F}_t(\tilde{\ve{p}})}{\partial \tilde{p}_i} = \log_t\frac{\tilde{p}_i}{\tilde{p}_d} = \hat{\theta}_i\,,\, i \in [d-1]\, .
\end{align*}
The Legendre dual~\citep{urruty} and the \emph{inverse link} are given by
\begin{align*}
\hat{F}^*_t(\hat{\ve{\theta}}) & = \log_t\Big[\big(1 + \sum_{i\in[d-1]} \exp_t^{1/{t^*}}\hat{\theta}_i\big)^{t^*}\Big]\,,\\
[\hat{f}^*_t(\hat{\ve{\theta}})]_i & = \frac{\exp_t\hat{\theta}_i}{\big(1 + \sum_{i\in [d-1]} \exp_t^{1/{t^*}}\hat{\theta}_i\big)^{t^*}}\,,\, i\in [d-1]\, .
\end{align*}
Remarkably, the Bregman divergence~\citep{bregman} induced by $\hat{F}_t$ can be simplified by the variable reduction into a convenient form given by
\[
D_{\hat{F}_t}\!(\tilde{\ve{p}}, \tilde{\ve{q}})\! = \!\!\!\!\!\!\!\sum_{i\in [d-1]}\!\!\!\!\! \tilde{p}_i\big[\log_t \frac{\tilde{p}_i}{\tilde{p}_d} - \log_t\frac{\tilde{q}_i}{\tilde{q}_d}\big] - \log_t \frac{1}{\tilde{p}_d} + \log_t\frac{1}{\tilde{q}_d}\, .
\]

\begin{table*}
\renewcommand{\arraystretch}{2}
  \centering
  \resizebox{\columnwidth}{!}{%
    \begin{tabular}{llcc}\toprule
      \textbf{Representation} & \textbf{Parameter} & \textbf{Link} & \textbf{Inverse Link} \\ \midrule
      Minimal Form & $\,\,\,\,\hat{\ve{\theta}} \in \mathbb{R}^{d-1}$ & $[\hat{f}_t(\tilde{\ve{p}})]_i = \log_t \frac{\tilde{p}_i}{\tilde{p}_d}$ & $[\hat{f}^*_t(\hat{\ve{\theta}})]_i = \frac{\exp_t\hat{\theta}_i}{\big(1 + \sum_{i\in [d-1]} \exp_t^{1/{t^*}}\hat{\theta}_i\big)^{t^*}}$\\ \hline
      Unconstrained Overparameterized & $\,\,\,\,\ve{\theta} \in \mathbb{R}^d$ & $f_t(\tilde{\ve{p}}) = \log_t \tilde{\ve{p}} $ & $f^*_t(\ve{\theta}) = \exp_t \ve{\theta}$\\\hline
       Constrained Overparameterized & $\,\,\,\,\check{\ve{\theta}} \in \mathbb{R}^d$ & $\check{f}_t(\tilde{\ve{p}}) = \log_t\frac{\tilde{\ve{p}}}{\tilde{\lambda}_t(\tilde{\ve{p}})}  $ & $[\check{f}^*_t(\check{\ve{\theta}})]_i = \frac{\exp_t\check{\theta}_i}{\big(\sum_i \exp_t^{1/{t^*}}\check{\theta}_i\big)^{t^*}}$
     \\ \bottomrule
    \end{tabular}
    }
    \vspace{-0.1cm}
    \caption{Different parameterizations of discrete \acrotem s $\tilde{\ve{p}} \in \tilde{\Delta}_t^d$ via Legendre functions of the negative tempered entropy function~\eqref{eq:entropy}.}
    \label{tab:links}
    \vspace{-0.1cm}
  \end{table*}


\paragraph{Overparameterized Form} We start by characterizing the (unconstrained) Legendre dual of the tempered entropy function 
\begin{equation*}
    \begin{split}
    \label{eq:dual-unconst}
        F_t^*(\ve{\theta})\! =\! \sup_{\tilde{\ve{p}}\,\in\mathbb{R}^d_+}\!\!\big\{\ve{\theta}\cdot \tilde{\ve{p}} - F_t(\tilde{\ve{p}})\big\} = t^*\big(\!\!\sum_{i\in [d]}\!\! \exp_t^{1/t^*}\!\! \theta_i - 1\big)\,,
    \end{split}
\end{equation*}
with the associated link
   $f^*_t(\ve{\theta}) = \exp_t \ve{\theta}$.
Thus, we can characterize discrete \acrotem s with the dual variable $\ve{\theta} = \log_t \tilde{\ve{p}}$ as the natural parameters and the sufficient statistics $\phi(x)_i = \delta_i(x)\,,\,\, i\in [d]$. In this case, we have $G(\ve{\theta}) = 1$. The constrained dual function, on the other hand, is calculated with the additional constraint that the input argument belongs to \acrotem s. This characterization allows an alternative representation in an overparameterized form. The constrained Legendre dual function is defined by imposing the additional constraint that $\tilde{\ve{p}} \,\in \tilde{\Delta}_t^d$,
\begin{equation}
    \begin{split}
        &\check{F}_t^*(\check{\ve{\theta}}) = \sup_{\tilde{\ve{p}} \,\in \tilde{\Delta}_t^d}\big\{\check{\ve{\theta}}\cdot\tilde{\ve{p}} - F_t(\tilde{\ve{p}})\big\}\\
        & = \sup_{\tilde{\ve{p}}\,\in\mathbb{R}^d_+}\big\{\check{\ve{\theta}}\cdot \tilde{\ve{p}} - F_t(\tilde{\ve{p}}) + \lambda_t\,(\sum_{i\in[d]} \ve{p}_i^{1/{t^*}} - 1)\,\big\} ,
    \end{split}
\end{equation}
in which the Lagrange multiplier $\lambda_t$ is used to enforce the equality constraint. This yields the following duality result,
\begin{equation}
\label{eq:thetc}
\check{\ve{\theta}} = \log_t \tilde{\ve{p}} + \lambda_t(\tilde{\ve{p}})\, \tilde{\ve{p}}^{1-t} = \log_t\frac{\tilde{\ve{p}}}{\tilde{\lambda}_t(\tilde{\ve{p}})}\, ,
\end{equation}
where $\tilde{\lambda}_t = 1/\exp_t(\lambda_t)$. In fact, $\lambda_t$ can be written in a closed form using an extension of the results in~\cite{thesis} for dual of a convex function with an equality constraint. The equality constraint $\sum_i \tilde{p}_i^{1/{t^*}} =1$ induces the tangent space $T_{\tilde{\ve{p}}}\tilde{\Delta}_t^d = \{\ve{u}: \tilde{\ve{p}}^{1-t}\cdot \ve{u} = 0\}$ and the constrained dual variable $\check{\ve{\theta}}$ lies on this tangent space
\[
\Mat{P}_t(\tilde{\ve{p}}) = \Mat{I} - \frac{1}{\sum_i \tilde{p}_i^{2-2t}}\, \tilde{\ve{p}}^{1-t}\,(\tilde{\ve{p}}^{1-t})^{\top},\, \text{ and }\,\, \Mat{P}_t\, \check{\ve{\theta}} = \check{\ve{\theta}}\, .
\]
\begin{proposition}
\label{prop:lambda}
The Lagrange multiplier $\lambda_t$ for the overparameterized case can be written as
\begin{equation}
    \label{eq:lambda}
    \lambda_t(\ve{\tilde{p}}) = \log_t\Big[\Big(\frac{\sum_i \tilde{p}^{1-t}_i}{\sum_i \tilde{p}^{2-2t}_i}\Big)^{\frac{1}{1-t}}\Big]\, .
\end{equation}
\end{proposition}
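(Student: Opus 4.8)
The plan is to determine $\lambda_t$ from the stationarity condition of the Lagrangian together with the constraint that the optimizer lies in $\tilde{\Delta}_t^d$. First I would write out the first-order optimality condition for the constrained supremum defining $\check{F}_t^*$. Differentiating $\check{\ve{\theta}}\cdot\tilde{\ve{p}} - F_t(\tilde{\ve{p}}) + \lambda_t(\sum_i \tilde{p}_i^{1/t^*} - 1)$ with respect to $\tilde{\ve{p}}$ and setting it to zero, and using that $\partial F_t(\tilde{\ve{p}})/\partial \tilde{p}_i = \log_t \tilde{p}_i$ from \eqref{eq:entropy} together with $\frac{1}{t^*} = 2-t$ so that $\partial(\tilde{p}_i^{1/t^*})/\partial\tilde{p}_i = (2-t)\,\tilde{p}_i^{1-t}$, gives the componentwise relation $\check{\theta}_i = \log_t \tilde{p}_i + (2-t)\,\lambda_t\,\tilde{p}_i^{1-t}$; absorbing the constant $(2-t)$ into $\lambda_t$ (a harmless reparameterization, consistent with \eqref{eq:thetc}) yields $\check{\ve{\theta}} = \log_t\tilde{\ve{p}} + \lambda_t\,\tilde{\ve{p}}^{1-t}$.

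Next I would pin down $\lambda_t$ by imposing the projection/orthogonality characterization already stated in the excerpt: $\check{\ve{\theta}}$ lies in the tangent space $T_{\tilde{\ve{p}}}\tilde{\Delta}_t^d$, i.e.\ $\tilde{\ve{p}}^{1-t}\cdot\check{\ve{\theta}} = 0$ (equivalently, $\Mat{P}_t\check{\ve{\theta}} = \check{\ve{\theta}}$). Taking the inner product of $\check{\ve{\theta}} = \log_t\tilde{\ve{p}} + \lambda_t\,\tilde{\ve{p}}^{1-t}$ with $\tilde{\ve{p}}^{1-t}$ and setting it to zero gives
\[
\lambda_t = -\,\frac{\sum_i \tilde{p}_i^{1-t}\log_t \tilde{p}_i}{\sum_i \tilde{p}_i^{2-2t}}\,.
\]
Now I would simplify the numerator using the definition $\log_t x = \frac{1}{1-t}(x^{1-t}-1)$, so $\tilde{p}_i^{1-t}\log_t\tilde{p}_i = \frac{1}{1-t}(\tilde{p}_i^{2-2t} - \tilde{p}_i^{1-t})$, and hence $\sum_i \tilde{p}_i^{1-t}\log_t\tilde{p}_i = \frac{1}{1-t}\big(\sum_i \tilde{p}_i^{2-2t} - \sum_i \tilde{p}_i^{1-t}\big)$. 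Substituting back,
\[
\lambda_t = \frac{1}{1-t}\cdot\frac{\sum_i \tilde{p}_i^{1-t} - \sum_i \tilde{p}_i^{2-2t}}{\sum_i \tilde{p}_i^{2-2t}} = \frac{1}{1-t}\Big(\frac{\sum_i \tilde{p}_i^{1-t}}{\sum_i \tilde{p}_i^{2-2t}} - 1\Big)\,,
\]
which is exactly $\log_t\big[(\frac{\sum_i \tilde{p}_i^{1-t}}{\sum_i \tilde{p}_i^{2-2t}})^{1/(1-t)}\big]$ since $\log_t(y^{1/(1-t)}) = \frac{1}{1-t}(y - 1)$ for $y = \frac{\sum_i \tilde{p}_i^{1-t}}{\sum_i \tilde{p}_i^{2-2t}}$. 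This establishes \eqref{eq:lambda}.

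The routine parts are the differentiation and the algebraic simplification of $\log_t$. The one place that needs care — and what I expect to be the main obstacle — is the bookkeeping of the constant factor $2-t = 1/t^*$: depending on whether one differentiates the raw constraint $\sum_i \tilde{p}_i^{1/t^*} - 1$ or absorbs the Jacobian factor into the multiplier, $\lambda_t$ changes by that constant. One must use the same convention as in \eqref{eq:thetc} (where $\check{\ve{\theta}} = \log_t\tilde{\ve{p}} + \lambda_t(\tilde{\ve{p}})\tilde{\ve{p}}^{1-t}$, with no $2-t$ prefactor), so the projection identity $\tilde{\ve{p}}^{1-t}\cdot\check{\ve{\theta}}=0$ must be applied to that exact form; doing so makes the constant cancel cleanly and produces the stated closed form. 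A secondary check worth including is the $t\to 1$ limit: there $\sum_i\tilde{p}_i^{1-t}\to d$, $\sum_i\tilde{p}_i^{2-2t}\to d$, and $\lambda_t\to 0$, consistent with the fact that at $t=1$ the constraint is $\sum_i\tilde p_i = 1$ and the unconstrained dual already respects it up to the trivial normalization.
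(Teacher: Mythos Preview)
Your argument is correct and is exactly the route the paper takes: its one-line proof reads ``substitute the definition of $\log_t$ in \eqref{eq:thetc} and apply $\Mat{P}_t\check{\ve{\theta}}=\check{\ve{\theta}}$,'' which is precisely your orthogonality condition $\tilde{\ve{p}}^{1-t}\cdot\check{\ve{\theta}}=0$ applied to $\check{\ve{\theta}}=\log_t\tilde{\ve{p}}+\lambda_t\tilde{\ve{p}}^{1-t}$, followed by the $\log_t$ simplification you carry out. One caveat: your ``secondary check'' is off---as $t\to 1$ the ratio $\sum_i\tilde{p}_i^{1-t}/\sum_i\tilde{p}_i^{2-2t}\to 1$ gives a $0/0$ form, and L'H\^opital (as the paper notes in the Remark following the proposition) yields $\lambda_1(\ve{p})=-\tfrac{1}{d}\sum_i\log p_i$, not $0$.
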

\begin{remark}
Applying the L'Hopital's rule to the limit case $t=1$ results in $\log$-inverse geometric mean,
\[
\lambda_1(\ve{p}) = \log\Big(\big(\prod_i p_i\big)^{-\frac{1}{d}}\Big)\,,
\]
and thus yields the following parameterization for the probability vectors $\ve{p} \in \Delta^d$:
\[
\check{\ve{\theta}} = \log \ve{p} - \frac{1}{d}\sum_i \log p_i\,\ve{1} = \log\frac{\ve{p}}{\big(\prod_i p_i\big)^{\frac{1}{d}}}\, .
\]
\end{remark}
Using Eq.~\eqref{eq:thetc}, we can write the constrained dual convex function and the corresponding link as
\begin{align}
\check{F}^*(\check{\ve{\theta}}) & = \log_t\Big[\big(\sum_i \exp_t^{1/{t^*}}\check{\theta}_i\big)^{t^*}\Big]\,, \label{eq:log-sum-exp}\\
   [\check{f}^*_t(\check{\ve{\theta}})]_i & = \frac{\partial \check{F}^*_t(\check{\ve{\theta}})}{\partial \check{\theta}_i} = \frac{\exp_t\check{\theta}_i}{\big(\sum_i \exp_t^{1/{t^*}}\check{\theta}_i\big)^{t^*}}\,.\label{eq:tempered-softmax}
\end{align}

Eq.~\eqref{eq:log-sum-exp} introduces a tempered variant of the log-sum-exp function along with its tempered softmax link in~\eqref{eq:tempered-softmax}. The following result is an extension of the invariance of the standard softmax along constant vectors.
\begin{proposition}
\label{prop:temp-soft-inv}
The tempered softmax is invariant along the normal vector to the tangent space $\hat{\ve{n}}\, \propto\, \tilde{\ve{p}}^{1-t}$, i.e., $\check{f}^*(\check{\ve{\theta}} + r\,\hat{\ve{n}}) = \check{f}^*(\check{\ve{\theta}})$ for $r\in\mathbb{R}$.
\end{proposition}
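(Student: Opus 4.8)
The plan is to reduce the statement to the elementary fact that the tempered softmax~\eqref{eq:tempered-softmax} is invariant when the \emph{same} tempered translation $\oplus_t s$ is applied to all coordinates of $\check{\ve{\theta}}$, and then to check that moving $\check{\ve{\theta}}$ along $\hat{\ve{n}}\propto\tilde{\ve{p}}^{1-t}$ is exactly such a uniform $t$-translation.

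First I would re-express the normal direction through $\check{\ve{\theta}}$ itself. Put $\tilde{\ve{p}}=\check{f}^*_t(\check{\ve{\theta}})$. Since $(\exp_t x)^{1-t}=1+(1-t)x$ on the domain where $\exp_t$ is positive, raising~\eqref{eq:tempered-softmax} to the power $1-t$ gives $\tilde{p}_i^{\,1-t}=\big(1+(1-t)\check{\theta}_i\big)/C$ with $C=\big(\sum_j\exp_t^{1/t^*}\check{\theta}_j\big)^{t^*(1-t)}$ independent of $i$. Hence $\hat{\ve{n}}\propto\tilde{\ve{p}}^{1-t}$ is proportional to $\ve{1}+(1-t)\check{\ve{\theta}}$; write $\hat{\ve{n}}=\kappa\,(\ve{1}+(1-t)\check{\ve{\theta}})$ with $\kappa>0$. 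Now from~\eqref{eq:t-plus} one has $\exp_t(a\oplus_t b)=\exp_t a\cdot\exp_t b$, equivalently $a\oplus_t b=a+b+(1-t)ab$; taking $s=r\kappa$, a one-line expansion yields, for every $i$, $\check{\theta}_i+r\,\hat{n}_i=\check{\theta}_i+s+(1-t)s\,\check{\theta}_i=\check{\theta}_i\oplus_t s$. So $\check{\ve{\theta}}+r\hat{\ve{n}}$ is obtained from $\check{\ve{\theta}}$ by applying $\oplus_t s$ coordinatewise.

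Finally I would substitute $\check{\ve{\theta}}':=\check{\ve{\theta}}+r\hat{\ve{n}}$ into~\eqref{eq:tempered-softmax}. Using $\exp_t(\check{\theta}_i\oplus_t s)=\exp_t(s)\exp_t\check{\theta}_i$, the numerator picks up the factor $\exp_t(s)$, while the denominator becomes $\big((\exp_t s)^{1/t^*}\sum_j\exp_t^{1/t^*}\check{\theta}_j\big)^{t^*}=\exp_t(s)\big(\sum_j\exp_t^{1/t^*}\check{\theta}_j\big)^{t^*}$, i.e.\ it picks up the same factor $\exp_t(s)$; these cancel, giving $[\check{f}^*_t(\check{\ve{\theta}}')]_i=[\check{f}^*_t(\check{\ve{\theta}})]_i$, which is the claim (for $t=1$ this recovers invariance of the ordinary softmax along $\ve{1}$). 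The computation is short and I do not expect a genuine obstacle; the only bookkeeping point is the clamp $[\cdot]_+$ in $\exp_t$, so the multiplicative factorization, and hence the invariance, is asserted on the open range of $r$ for which the bases $1+(1-t)\check{\theta}_i$ and $1+(1-t)s$ remain nonnegative (all $r\in\mathbb{R}$ when $t<1$ and the $\check{\theta}_i$ are admissible). The one genuinely informative step is the first: in the constrained overparameterized chart the Euclidean normal $\tilde{\ve{p}}^{1-t}$ is precisely the $t$-affine line $\ve{1}+(1-t)\check{\ve{\theta}}$ through $\check{\ve{\theta}}$, so translating along it is a $t$-translation rather than an ordinary one.
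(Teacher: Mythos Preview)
Your argument is correct. Both your proof and the paper's hinge on the same fact---that displacing $\check{\ve{\theta}}$ along $\tilde{\ve{p}}^{1-t}$ produces a common multiplicative factor in $\exp_t\check{\theta}_i$ which the normalizer in~\eqref{eq:tempered-softmax} cancels---but you reach it by a slightly different route. The paper invokes~\eqref{eq:thetc} to rewrite $\check{\ve{\theta}}+r\hat{\ve{n}}$ directly as $\log_t(\tilde{\ve{p}}/K)$ for a new scalar $K$, then applies $\check{f}^*_t$ and lets the normalization absorb $K$. You instead stay in the $\check{\ve{\theta}}$-chart: you first observe from~\eqref{eq:tempered-softmax} that $\tilde{\ve{p}}^{1-t}\propto\ve{1}+(1-t)\check{\ve{\theta}}$, so that the Euclidean shift along $\hat{\ve{n}}$ is precisely the coordinatewise $t$-translation $\check{\theta}_i\mapsto\check{\theta}_i\oplus_t s$, and then use $\exp_t(a\oplus_t b)=\exp_t a\cdot\exp_t b$ to factor and cancel. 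Your presentation is a bit more self-contained (it does not need the Lagrange-multiplier identity~\eqref{eq:thetc}) and makes the analogy with the ordinary softmax's shift-invariance along $\ve{1}$ transparent; the paper's version is terser and more geometric, reading the perturbation as a rescaling of $\tilde{\ve{p}}$. Your caveat about the clipping in $\exp_t$ (so that the factorization, hence the invariance, holds on the range where $1+(1-t)s>0$) is appropriate and is implicit in the paper as well (cf.\ the constraint $\alpha>-\tfrac{1}{1-t}$ in Figure~\ref{fig:dual}).
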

Figure \ref{fig:dual} illustrates the relation between the constrained $\check{\ve{\theta}}$ and unconstrained $\ve{\theta} = \log_t \tilde{\ve{p}}$ parameterizations in the overparameterized form. Table~\ref{tab:links} summarizes minimal and overparameterized representations.
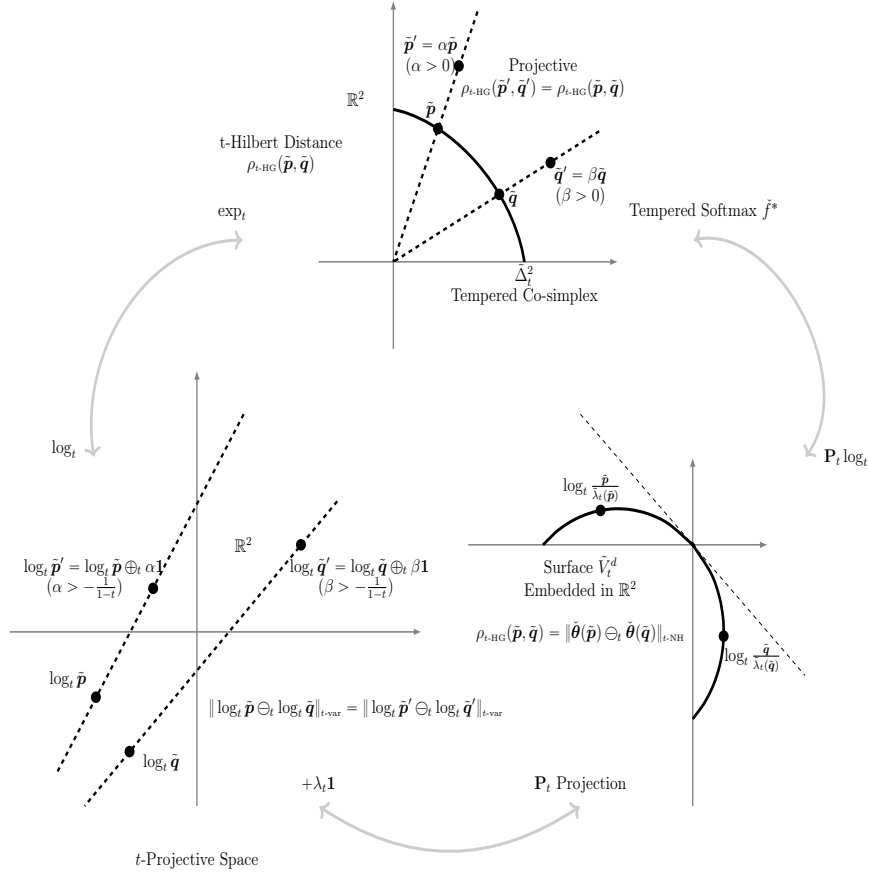
\begin{figure*}[t!]
\begin{center}
\resizebox{0.7\textwidth}{0.7\textwidth}{%
    \begin{tikzpicture}
	\begin{pgfonlayer}{nodelayer}
		\node [style=none] (2) at (-5.25, -3) {};
		\node [style=none] (3) at (0, 5.5) {};
		\node [style=none] (4) at (6, 5.5) {};
		\node [style=none] (5) at (-2, 5.5) {};
		\node [style=none] (6) at (0, 3.5) {};
		\node [style=none] (7) at (0, 11.5) {};
		\node [style=none] (9) at (10, -1) {};
		\node [style=none] (10) at (2, -1) {};
		\node [style=none] (11) at (8, -7) {};
		\node [style=none] (12) at (8, 1) {};
		\node [style=none] (13) at (-5.25, -3) {};
  \node [style=none] (65) at (-4, 6) {}; 
   \node (anch0) at (-4.3, 6.6) [style=none]{$\exp_t$};
		\node [style=none] (66) at (-8, 1) {}; 
  \draw [<->, line width=2, gray!40] (65) to [out=180,in=100] (66);
  \node [style=none] (67) at (-2, -7) {}; 
  \node (anch2) at (-2, -6.5) [style=none]{$+ \lambda_t \ve{1}$};
		\node [style=none] (68) at (5, -7) {}; 
  \node (anch3) at (5, -6.5) [style=none]{$\Mat{P}_t$ Projection};
  \draw [<->, line width=2, gray!40] (67) to [out=-30,in=-150] (68);
  \node (anch1) at (-8.8, 1.2) [style=none]{$\log_t$};
  \node [style=none] (69-a) at (8, 6) {}; 
  \node (anch4) at (8.3, 6.7) [style=none]{Tempered Softmax $\check{f}^*$};
		\node [style=none] (70-a) at (11, 1) {}; 
  \node (anch5) at (12.1, 1.) [style=none]{$\Mat{P}_t \log_t$};
  \draw [<->, line width=2, gray!40] (69-a) to [out=10,in=50] (70-a);
		\node [style=none] (14) at (0.75, -3) {};
		\node [style=none] (15) at (-10.25, -3) {};
		\node [style=none] (16) at (-5.25, -7) {};
		\node [style=none] (17) at (-5.25, 3) {};
  \draw [black,line width=2.] plot [smooth, tension=1.15] coordinates { (0, 9) (2.25, 7.75) (3.5, 5.5) };
		\node [style=none] (20) at (5, 5.5) {};
		\node [style=none] (22) at (5.5, 8.5) {};
		\node [style=none] (23) at (2.25, 11.25) {};
		\node (25) at (-7.95, -4.5) [style=pointbullet]{};
  \node (25-t) at (-8.7, -4.1) [style=none]{$\log_t\tilde{\ve{p}}$};
		\node [style=none] (26) at (-4, 2) {};
		\node [style=pointbullet] (27) at (-7.05, -5.75) {};
  \node (27-t) at (-6.2, -6.) [style=none]{$\log_t\tilde{\ve{q}}$};
		\node [style=none] (28) at (-1.5, 0) {};
		\node [style=pointbullet] (29) at (-6.42, -2) {};
		\node [style=pointbullet] (30) at (-2.47, -1) {};
		\node [style=none] (31) at (-5.25, -8.25) {$t$-Projective Space};
		\node [style=none] (32) at (-8.25, -7) {};
		\node [style=none] (33) at (-9, -6.25) {};
		\node [style=none] (34) at (11, -4) {};
		\node [style=none] (35) at (5, 2) {};
		\node [style=pointbullet] (37) at (1.19, 8.56) {};
  \node [style=none] (37b) at (1., 9) {$\tilde{\ve{p}}$};
		\node [style=pointbullet] (38) at (2.83, 7.05) {};
  \node [style=none] (38b) at (3.2, 6.95) {$\tilde{\ve{q}}$};
		\node [style=none] (39) at (-1, 9.25) {$\mathbb{R}^2$};
		\node [style=pointbullet] (40) at (1.75, 10) {};
		\node [style=none] (41) at (1., 10.5) {$\tilde{\ve{p}}'=\alpha\tilde{\ve{p}}$};
  \node [style=none] (41b) at (1., 10.) {$(\alpha > 0)$};
		\node [style=none] (42) at (5, 7.5) {$\tilde{\ve{q}}'=\beta\tilde{\ve{q}}$};
  \node [style=none] (42b) at (5, 7) {$(\beta > 0)$};
		\node [style=none] (43) at (3.5, 4.7) {Tempered Co-simplex};
		\node [style=pointbullet] (44) at (4.2, 7.78) {};
		\node [style=none] (45) at (4, 10.) {Projective};
		\node [style=none] (46) at (4, 9.5) {$\rho_{\tHG}(\tilde{\ve{p}}', \tilde{\ve{q}}') = \rho_{\tHG}(\tilde{\ve{p}}, \tilde{\ve{q}})$};
		\node [style=none] (47) at (-3, 8.25) {t-Hilbert Distance};
		\node [style=none] (48) at (-3, 7.75) {$\rho_{\tHG}(\tilde{\ve{p}}, \tilde{\ve{q}})$};
		\node [style=none] (49) at (-0.9, -1.5) {$\log_t \tilde{\ve{q}}' = \log_t \tilde{\ve{q}}\oplus_t \beta \ve{1}$};
		\node [style=none] (50) at (-8, -1.5) {$\log_t\tilde{\ve{p}}' = \log_t\tilde{\ve{p}}\oplus_t\alpha \ve{1}$};
		\node [style=none] (51) at (-1, -4.75) {$\Vert \log_t \tilde{\ve{p}}\ominus_t \log_t\tilde{\ve{q}}\Vert_{\tvar} = \Vert \log_t \tilde{\ve{p}}'\ominus_t \log_t\tilde{\ve{q}}'\Vert_{\tvar}$};
  \draw [black, line width=2.] plot [smooth, tension=0.6] coordinates {
  (4.000, -1.000) (4.327, -0.719) (4.409, -0.662) (4.506, -0.599) (4.634, -0.523) (4.793, -0.442) (4.991, -0.358) (5.237, -0.277) (5.538, -0.210) (5.902, -0.173) (6.328, -0.191) (6.810, -0.290) (7.347, -0.513) (8.000, -1.000) (8.000, -1.000) (8.487, -1.653) (8.710, -2.190) (8.809, -2.672) (8.827, -3.098) (8.790, -3.462) (8.723, -3.764) (8.642, -4.009) (8.558, -4.208) (8.476, -4.367) (8.399, -4.498) (8.331, -4.602) (8.253, -4.710) (8.000, -5.000)
 };
		\node [style=none] (20) at (5, 5.5) {};
		\node [style=none] (54) at (5, -1.5) {Surface $\tilde{V}_t^d$ };
		\node [style=none] (55) at (5, -2.) {Embedded in $\mathbb{R}^2$};
		\node [style=none] (56) at (5, -3) {$\rho_{\tHG}(\tilde{\ve{p}}, \tilde{\ve{q}}) = \Vert\check{\ve{\theta}}(\tilde{\ve{p}})\ominus_t\check{\ve{\theta}}(\tilde{\ve{q}})\Vert_{\tNH}$};
		\node [style=pointbullet] (57) at (8.827, -3.098) {}; 
  \node [style=none] (57-b) at (9.6, -3.6) {$\log_t\frac{\tilde{\ve{q}}}{\tilde{\lambda}_t(\tilde{\ve{q}})}$};
		\node [style=pointbullet] (58) at (5.538, -0.210) {}; 
  \node [style=none] (58-b) at (5.3, 0.3) {$\log_t\frac{\tilde{\ve{p}}}{\tilde{\lambda}_t(\tilde{\ve{p}})}$};
  \node [style=none] (59) at (3.5, 5.2) {$\tilde{\Delta}_t^2$};
		\node [style=none] (60) at (-4, -1) {$\mathbb{R}^2$};
  \node [style=none] (61) at (-1., -2.) {($\beta > -\frac{1}{1-t}$)};
  \node [style=none] (62) at (-8.25, -2) {($\alpha > -\frac{1}{1-t}$)};
	\end{pgfonlayer}
	\begin{pgfonlayer}{edgelayer}
		\draw [style=arrow3] (4.center) to (5.center);
		\draw [style=arrow3] (7.center) to (6.center);
		\draw [style=arrow3] (9.center) to (10.center);
		\draw [style=arrow3] (12.center) to (11.center);
		\draw [style=arrow3] (14.center) to (15.center);
		\draw [style=arrow3] (17.center) to (16.center);
		\draw [style=dotted] (28.center) to (32.center);
		\draw [style=dotted] (26.center) to (33.center);
		\draw [style=dashed] (35.center) to (34.center);
		\draw [style=dotted] (23.center) to (3.center);
		\draw [style=dotted] (22.center) to (3.center);
	\end{pgfonlayer}
\end{tikzpicture}
}
\vspace{-0.2cm}
    \caption{Different representations for $t < 1$. \acrotem s reside on a curved tempered co-simplex in 2D (top). The $t$-Hilbert distance is projective, thus invariant along the rays $\bbR_+\tilde{\Delta}_t^d$. The mapping to the unconstrained overparameterized representation is governed via the $\log_t$ function (left), and the corresponding variation $t$-norm distance is invariant with respect to $t$-addition of any constant $> -\frac{1}{1-t}$. The constrained overparameterized representation (right) is a $\Mat{P}_t$ projection of the unconstrained form, and is mapped to the arched shaped surface $\tilde{V}_t^d$ (curved inward for $t < 1$ and outward for $t > 1$). At $t=1$, $\tilde{V}_1^d$ corresponds to the subspace $\{(x, y) \in \bbR^2\,\vert\, y = -x\}$ for standard exponential family.}\label{fig:dual}
    \end{center}
    \vspace{-0.5cm}
\end{figure*}

\section{Tempered Co-simplex Geometry}
We lift the embeddings of the discrete probability distributions in the Hilbert simplex geometry to the embeddings of \acrotem s. The construction involves extending Hilbert's projective metric to a tempered generalization. This tempered generalization is derived from first principles using, at the core, a generalized notion of Riemann sum, provided in Section \ref{subsec-tempfunk}. 

\subsection{Tempered Funk and Hilbert Distances}\label{sub-t-hil}
Consider an open bounded convex set $\Omega$. We first consider a tempered generalization of the Funk distance ($t$-Funk distance, for short) between two points $\ve{r}, \ve{s} \in \Omega$:
\begin{equation}
    \label{eq:t-funk-dist}
    \rho_{\tFD}^{\Omega}(\ve{r}, \ve{s}) = \begin{cases} \log_t \frac{\Vert \ve{r} - \bar{\ve{s}}\Vert}{\Vert \ve{s} - \bar{\ve{s}}\Vert} & \ve{r} \neq \ve{s}\\
    0 & \ve{r} = \ve{s}\end{cases}\,.
\end{equation}
Here, $\bar{\ve{s}}$ denotes the intersection of the affine ray $R(\ve{r}, \ve{s})$ originating from $\ve{r}$ and passing through
$\ve{s}$ with the domain boundary $\partial \Omega$. Evidently, the tempered Funk distance is also an asymmetric dissimilarity measure $\rho_{\tFD}^{\Omega}(\ve{r}, \ve{s}) \neq \rho_{\tFD}^{\Omega}(\ve{s}, \ve{r})$ but satisfies the triangular inequality for certain cases (see Proposition~\ref{prop:dist-properties} below). We provide a motivation for deriving the $t$-Funk distance from a tautological Finsler structure by replacing the integration for calculating the length of curves with a notion of $t$-integration in the appendix.

\begin{figure*}
\vspace{-0.3cm}
\centering

\begin{tabular}{cccc}
\includegraphics[width=0.2\textwidth]{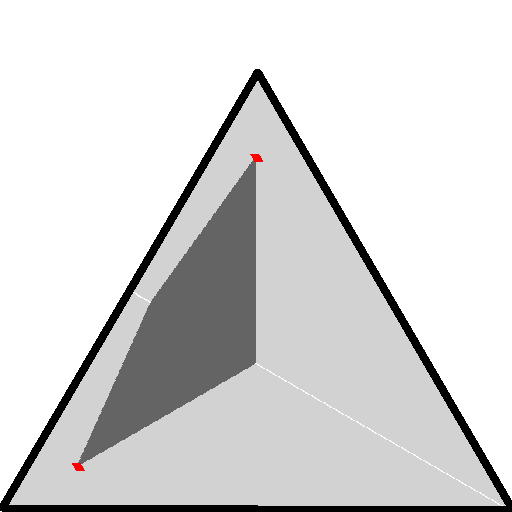}&
\includegraphics[width=0.2\textwidth]{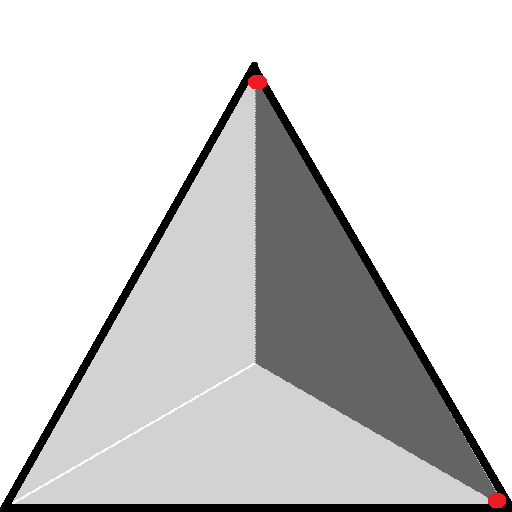}&
\includegraphics[width=0.2\textwidth]{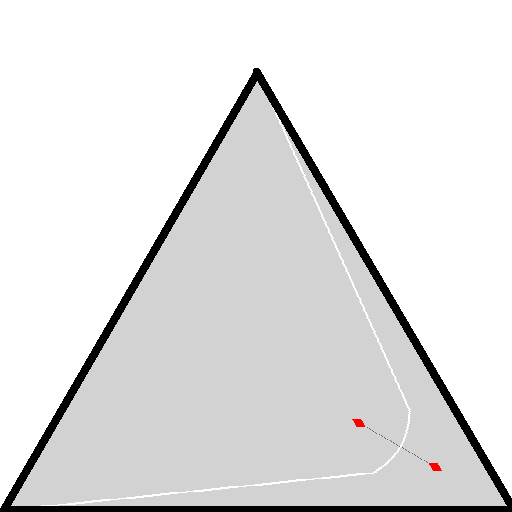}&
\includegraphics[width=0.2\textwidth]{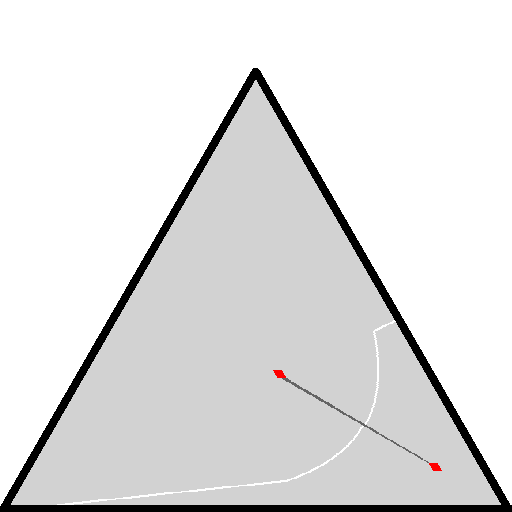}
\end{tabular}
\vspace{-0.3cm}
\caption{Voronoi bisectors (white) and triangle equality regions (dark gray) between points $p$ and $q$ with respect to Hilbert simplex distance for (from left to right)
(a) $\ve{p}=(0.1,0.8,0.1)$ and $\ve{q}=(0.8,0.1,0.1)$,
(b) $\ve{p}=(\eps,\eps,1-2\eps)$ and $\ve{q}=(\eps,1-2\eps,\eps)$,
(c) $\ve{p}=(0.1,0.1,0.8)$ and $\ve{q}=(0.2,0.2,0.6)$
and (d) $\ve{p}=(0.1,0.1,0.8)$ and $\ve{q}=(0.3,0.31,0.39)$.
\label{fig:biti}
}
\vspace{-0.4cm}
\end{figure*}

As the standard Hilbert distance can be viewed as a symmetrization of Funk distance, we define the tempered Hilbert distance (or $t$-Hilbert distance) between $\ve{r}, \ve{s} \in \Omega$ similarly but as a \emph{$t$-symmetrization} of the Funk distance:
\begin{equation}
    \label{eq:t-hilbert-dist}
    \begin{split}
    \rho_{\tHG}^{\Omega}(\ve{r}, \ve{s}) & = \rho_{\tFD}^{\Omega}(\ve{r}, \ve{s}) \oplus_t \rho_{\tFD}^{\Omega}(\ve{s}, \ve{r})\\
    & = \begin{cases} \log_t\frac{\Vert \ve{r} - \bar{\ve{s}}\Vert \Vert \ve{s} - \bar{\ve{r}}\Vert}{\Vert \ve{r} - \bar{\ve{r}}\Vert \Vert \ve{s} - \bar{\ve{s}}\Vert} & \ve{r} \neq \ve{s}\\
    0 & \ve{r} = \ve{s}\end{cases}\,,
    \end{split}
\end{equation}
where $\bar{\ve{r}} \in \partial \Omega$ is the boundary point defined similarly as above. Indeed, $t$-Hilbert distance is independent of the underlying norm of the cross-ratio because all 1d norms are positive scalars of the absolute value. Thus, $t$-Hilbert distance only depends on the $1$-D interval domain $\Omega_{\ve{r}\ve{s}} = \Omega \cap (\ve{r}\ve{s})$. 
Note that we have
\begin{eqnarray}
\rho_{\tHG}^{\Omega}(\ve{r}, \ve{s})=
\log_t \exp \rho_{1-\mathrm{HG}}^{\Omega}(\ve{r}, \ve{s}). \label{eqpropt-HG}
\end{eqnarray}
Since $\log_t$ is a strictly monotone function, we deduced that all $t$-Hilbert Voronoi diagrams coincide.
\begin{equation*}
\ve{\gamma}_t^\Omega(\ve{r}, \ve{s})=\{ \ve{t}\in\Omega : \rho_{\tHG}^{\Omega}(\ve{r}, \ve{t}) \oplus_t
\rho_{\tHG}^{\Omega}(\ve{t}, \ve{s})=\rho_{\tHG}^{\Omega}(\ve{r}, \ve{s})
\}\,
\end{equation*}
denotes a \emph{$t$-geodesics}. 
Notably, geodesics in $t$-Hilbert geometry are no longer straight (Euclidean) lines, but rather $t$-geodesics,
for all $\ve{t} \in [\ve{r}\ve{s}]$ where $[\ve{r}\ve{s}]$ is the closed line segment connecting $\ve{r}$ and $\ve{s}$. Theorem~\ref{thm:contr} extends the contraction of the $t$-Hilbert distance for positive linear maps beyond $t=1$.
\begin{proposition}
\label{prop:t-hilbert-metric}
The $t$-Hilbert distance $\rho_t^\Omega(p,q)$ is a metric distance for
$t\geq 1$.
\end{proposition}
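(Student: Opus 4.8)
The plan is to reduce the whole statement to the classical fact that the Hilbert metric $\rho_{\HG}^{\Omega}$ is a genuine metric on the bounded open convex set $\Omega$, together with the identity \eqref{eqpropt-HG}, namely $\rho_{\tHG}^{\Omega}(\ve{r},\ve{s}) = \log_t \exp \rho_{\HG}^{\Omega}(\ve{r},\ve{s})$. Set $g \defeq \log_t \circ \exp$, so that $g(x) = \tfrac{1}{1-t}\big(\exp((1-t)x)-1\big)$ for $t\neq 1$ and $g=\mathrm{id}$ for $t=1$. Then $\rho_{\tHG}^{\Omega} = g\circ \rho_{\HG}^{\Omega}$, and the proposition follows from the general principle: if $d$ is a metric and $g:[0,\infty)\to[0,\infty)$ is continuous, strictly increasing, concave, with $g(0)=0$, then $g\circ d$ is again a metric. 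So the first step is to check these four properties of $g$ for $t\geq 1$: $g(0)=0$ is immediate; $g'(x)=\exp((1-t)x)>0$ gives strict monotonicity (and $g\geq 0$, so $g$ maps into $[0,\infty)$); and $g''(x)=(1-t)\exp((1-t)x)\leq 0$ precisely because $t\geq 1$, giving concavity. This is exactly where the hypothesis $t\geq 1$ is used: for $t<1$ the map $g$ is convex, hence superadditive, and the triangle inequality is not expected to survive.

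With $g$ in hand, I would assemble the three metric axioms for $\rho_{\tHG}^{\Omega}=g\circ\rho_{\HG}^{\Omega}$. Nonnegativity and the identity of indiscernibles follow because $g$ is strictly increasing with $g(0)=0$: $\rho_{\tHG}^{\Omega}(\ve{r},\ve{s}) = g(\rho_{\HG}^{\Omega}(\ve{r},\ve{s}))\geq g(0)=0$, with equality iff $\rho_{\HG}^{\Omega}(\ve{r},\ve{s})=0$, i.e. iff $\ve{r}=\ve{s}$ since $\rho_{\HG}^{\Omega}$ is a metric. Symmetry is inherited directly from the symmetry of $\rho_{\HG}^{\Omega}$, as $g$ is applied pointwise. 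The case $t=1$ is immediate since then $g=\mathrm{id}$ and $\rho_{\tHG}^{\Omega}=\rho_{\HG}^{\Omega}$.

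The only substantive point is the triangle inequality, which rests on the subadditivity of $g$. For concave $g$ with $g(0)\geq 0$ and $a,b\geq 0$ not both zero, writing $a$ and $b$ as convex combinations of $a+b$ and $0$ gives $g(a)\geq\tfrac{a}{a+b}g(a+b)$ and $g(b)\geq\tfrac{b}{a+b}g(a+b)$; summing yields $g(a+b)\leq g(a)+g(b)$. Combining this with monotonicity of $g$ and the triangle inequality for $\rho_{\HG}^{\Omega}$,
\[
\rho_{\tHG}^{\Omega}(\ve{r},\ve{s}) = g\big(\rho_{\HG}^{\Omega}(\ve{r},\ve{s})\big) \leq g\big(\rho_{\HG}^{\Omega}(\ve{r},\ve{t})+\rho_{\HG}^{\Omega}(\ve{t},\ve{s})\big) \leq g\big(\rho_{\HG}^{\Omega}(\ve{r},\ve{t})\big)+g\big(\rho_{\HG}^{\Omega}(\ve{t},\ve{s})\big) = \rho_{\tHG}^{\Omega}(\ve{r},\ve{t})+\rho_{\tHG}^{\Omega}(\ve{t},\ve{s})
\]
for every $\ve{t}\in\Omega$. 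The main obstacle — indeed essentially the only place care is needed — is bookkeeping the sign of $1-t$ correctly in the monotonicity/concavity computations, and noting that concavity (equivalently subadditivity) of $g$ genuinely fails when $t<1$, which is why the proposition is restricted to $t\geq 1$.
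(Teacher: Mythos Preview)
Your proposal is correct and follows essentially the same route as the paper: both use the identity $\rho_{\tHG}^{\Omega}=g\circ\rho_{\HG}^{\Omega}$ with $g=\log_t\circ\exp$ and verify that $g$ is a metric-preserving transform (increasing, $g(0)=0$, subadditive) for $t\geq 1$. The only cosmetic difference is that the paper establishes subadditivity directly from the algebraic identity $\log_t(e^a e^b)=\log_t e^a+\log_t e^b+(1-t)(\log_t e^a)(\log_t e^b)$, whereas you deduce it from concavity of $g$; both are standard and equally short.
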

\begin{theorem}
\label{thm:contr}
Let $\calC$ be a closed pointed cone in $\bbR^d$. Then every positive linear map $A$ is a contraction with respect to $\rho_{\tHG}^{\calC}$. Additionally, $\kappa_{\tHG}(A) \geq \kappa_{\HG}(A)$ where $\kappa_{\tHG}(A)$ and $\kappa_{\HG}(A)$ are the contraction ratios with respect to $\rho_{\tHG}^{\calC}$ and $\rho_{\HG}^{\calC}$, respectively.
\end{theorem}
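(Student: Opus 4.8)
The plan is to reduce the statement to Birkhoff's classical contraction theorem for the standard Hilbert projective metric by exploiting the functional identity \eqref{eqpropt-HG}. Set $g(u):=\log_t(\exp u)=\frac{1}{1-t}\big(e^{(1-t)u}-1\big)$, so that on any affine cross-section of $\calC$ (on which the $t$-Hilbert cross-ratio reduces to the bounded-convex case) we have $\rho_{\tHG}^{\calC}=g\circ\rho_{\HG}^{\calC}$, where $\rho_{\HG}^{\calC}$ is the Hilbert/Birkhoff cone metric. First I would record the elementary properties of $g$ on $[0,\infty)$: it is strictly increasing with $g(0)=0$ and $g'(0)=1$; it is convex for $t<1$ and concave for $t>1$ (since $g''(u)=(1-t)e^{(1-t)u}$); equivalently the slope $m(u):=g(u)/u$ is non-decreasing for $t\le 1$ and non-increasing for $t\ge 1$, with $m(0^+)=1$; and for $t>1$ it is bounded above by $\tfrac1{t-1}$.

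Next I would invoke Birkhoff's theorem: a positive linear $A$ preserves $\calC$ and is $\kappa_{\HG}(A)$-Lipschitz for $\rho_{\HG}^{\calC}$, with $\kappa_{\HG}(A)=\tanh\!\big(\tfrac14\Delta(A)\big)\le 1$, where $\Delta(A)$ is the Hilbert projective diameter of $A\calC$; moreover $\rho_{\HG}^{\calC}(A\ve r,A\ve s)\le\Delta(A)$ always. Composing with the increasing map $g$ and writing $u=\rho_{\HG}^{\calC}(\ve r,\ve s)$, $v=\rho_{\HG}^{\calC}(A\ve r,A\ve s)\le\min\!\big(\kappa_{\HG}(A)u,\Delta(A)\big)$, one gets $\rho_{\tHG}^{\calC}(A\ve r,A\ve s)=g(v)\le g\!\big(\min(\kappa_{\HG}(A)u,\Delta(A))\big)$. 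For $t\le 1$ convexity of $g$ with $g(0)=0$ gives $g(\kappa u)\le\kappa\,g(u)$, hence $\rho_{\tHG}^{\calC}(A\ve r,A\ve s)\le\kappa_{\HG}(A)\,\rho_{\tHG}^{\calC}(\ve r,\ve s)$ and we are done. For $1<t<2$ this fails because $\log_t$ saturates; here I would additionally use the diameter cap: the function $\varphi(u):=g(\min(\kappa_{\HG}(A)u,\Delta(A)))/g(u)$ is continuous on $(0,\infty)$ and extends continuously to $[0,\infty]$ with $\varphi(0^+)=\kappa_{\HG}(A)$ and $\varphi(\infty)=1-e^{-(t-1)\Delta(A)}$; when $A$ maps $\calC$ into its interior, $\Delta(A)<\infty$, so both endpoint values are $<1$ and thus $\kappa_{\tHG}(A)\le\sup_{u>0}\varphi(u)<1$, a genuine contraction. (If $\Delta(A)=\infty$ then $\kappa_{\HG}(A)=1$ and $A$ is merely non-expansive for both metrics.)

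For the inequality $\kappa_{\tHG}(A)\ge\kappa_{\HG}(A)$, I would write both constants as suprema of ratios over pairs and use $g(v)/g(u)=(v/u)\,m(v)/m(u)$. When $t\ge 1$ we have $v\le u$ and $m$ non-increasing, so $m(v)/m(u)\ge 1$ and therefore $g(v)/g(u)\ge v/u$ for every pair; taking suprema gives $\kappa_{\tHG}(A)\ge\kappa_{\HG}(A)$ immediately. When $t<1$ the per-pair comparison goes the other way, so here I would instead use that $\rho_{\HG}^{\calC}$ is a Finsler length metric: the Lipschitz constant of the smooth map $A$ equals the supremum of its infinitesimal dilations, so $\kappa_{\HG}(A)$ is approached along pairs with $u\to 0$, and along such pairs $m(v)/m(u)\to 1$ while $v/u\to\kappa_{\HG}(A)$, yielding $\kappa_{\tHG}(A)\ge\kappa_{\HG}(A)$ (in fact equality, in view of the $t\le 1$ contraction bound).

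The main obstacle is that for $1<t<2$ — precisely the regime the theorem is meant to cover — Birkhoff's ratio bound alone does not propagate through $g$: since $\log_t$ is bounded, $\sup_{u>0}g(\kappa u)/g(u)=1$, so one genuinely has to feed in the boundedness of the Hilbert diameter of the image cone to recover a uniform contraction factor. A secondary technical point is the $t<1$ half of the comparison, which rests on the length-space/Finsler structure of the Hilbert metric (so that the global Lipschitz constant is attained infinitesimally) rather than on any purely algebraic monotonicity; one should also check that \eqref{eqpropt-HG}, stated for bounded convex domains, transfers to the cone $\calC$ via an affine cross-section on which $A$ acts projectively.
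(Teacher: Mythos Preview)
Your plan and the paper share the same backbone---the identity \eqref{eqpropt-HG}, $\rho_{\tHG}^{\calC}=g\circ\rho_{\HG}^{\calC}$ with $g(u)=\log_t\exp u$---but you extract far more from it than the paper does. The paper reads ``contraction'' in the weak Birkhoff sense (non-expansiveness), for which bare monotonicity of $g$ (from $g'(u)=e^{(1-t)u}>0$) is already enough: $\rho_{\HG}^{\calC}(A\ve r,A\ve s)\le\rho_{\HG}^{\calC}(\ve r,\ve s)$ composed with the increasing $g$ gives the claim in one line. Your convexity/concavity split and the diameter-cap argument for $1<t<2$ are aimed at a \emph{strict} uniform contraction factor, which is stronger than what the theorem asserts and than what the paper proves; your analysis there is sound (the compactness-of-$[0,\infty]$ argument for $\sup\varphi<1$ works once $\Delta(A)<\infty$), but unnecessary for the stated result. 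For the inequality $\kappa_{\tHG}(A)\ge\kappa_{\HG}(A)$, the paper simply invokes Bushell's characterization of the Hilbert contraction ratio together with monotonicity, whereas you give a self-contained per-pair comparison for $t\ge 1$ (clean and immediate) and a Finsler/infinitesimal-dilation argument for $t<1$ (correct in outline, though it leans on the length-space property you rightly flag as a technical point). In short: same key lemma, but the paper takes a two-line monotonicity-plus-citation route, while your proposal does substantially more work and in fact delivers quantitative strict-contraction information that goes beyond the statement.
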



\begin{figure*}[t!]
\vspace{-0.6cm}
\begin{center}
    \subfigure{\includegraphics[width=0.22\linewidth]{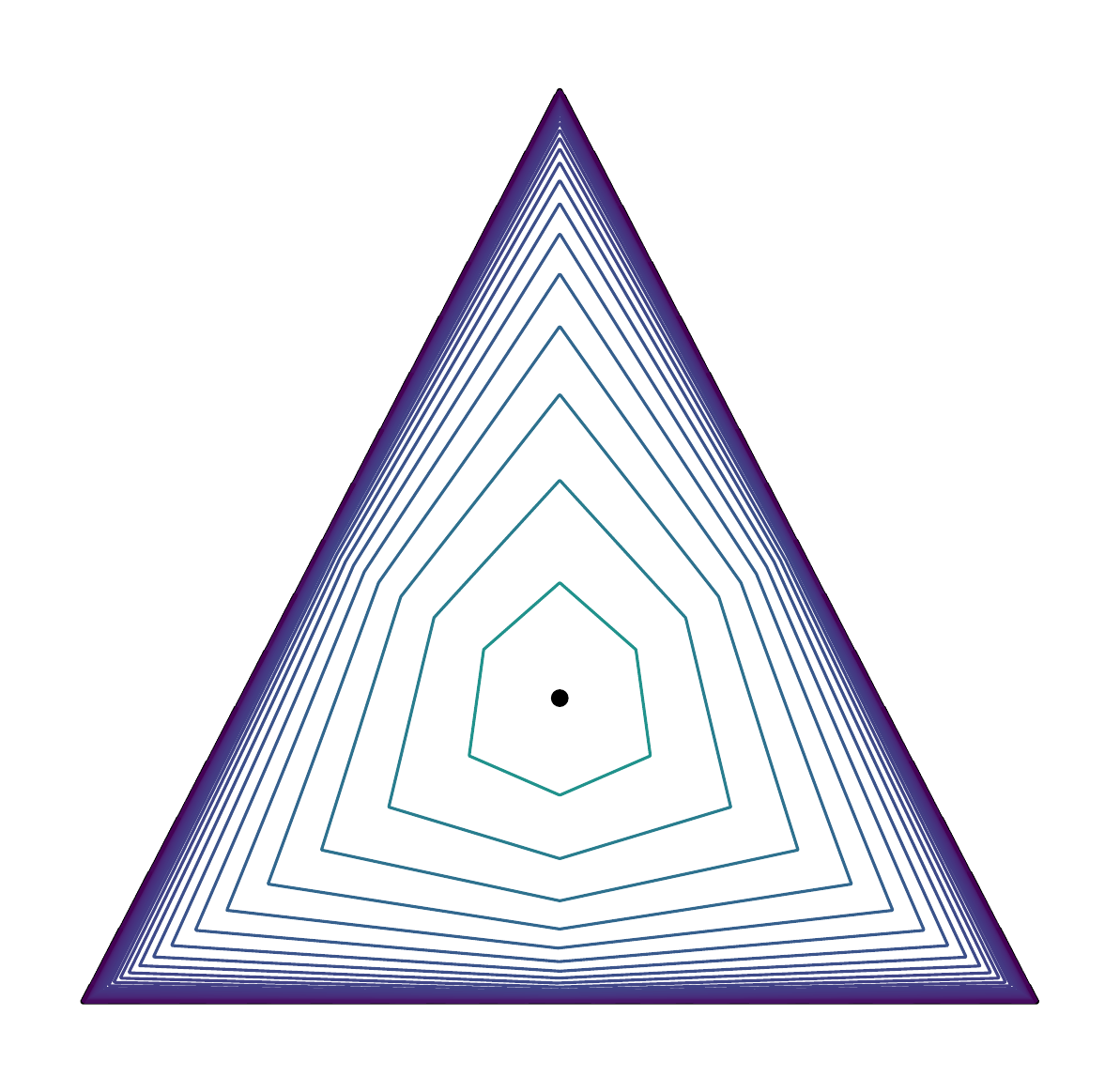}}
    \subfigure{\includegraphics[width=0.22\linewidth]{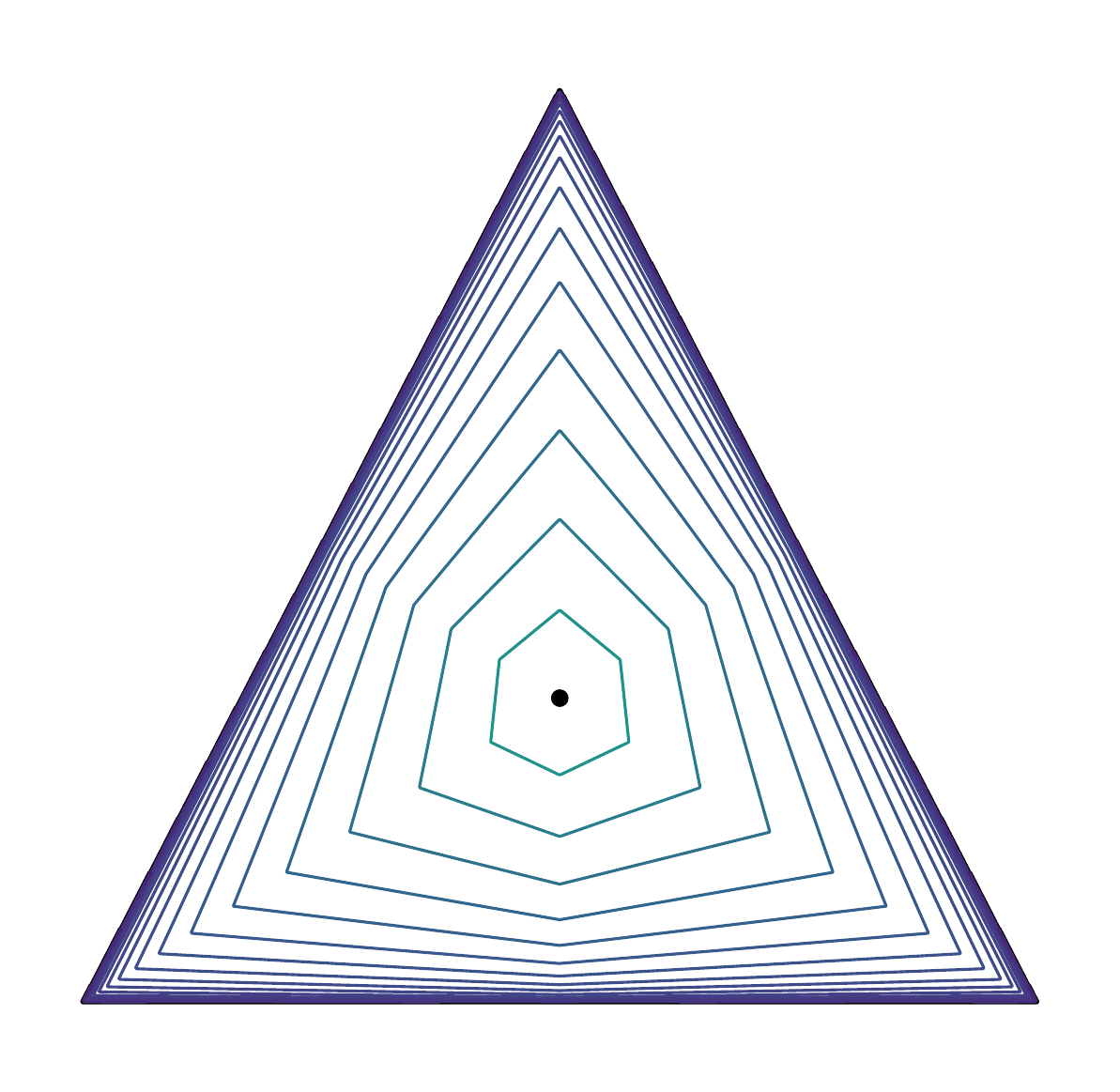}}
    \subfigure{\includegraphics[width=0.22\linewidth]{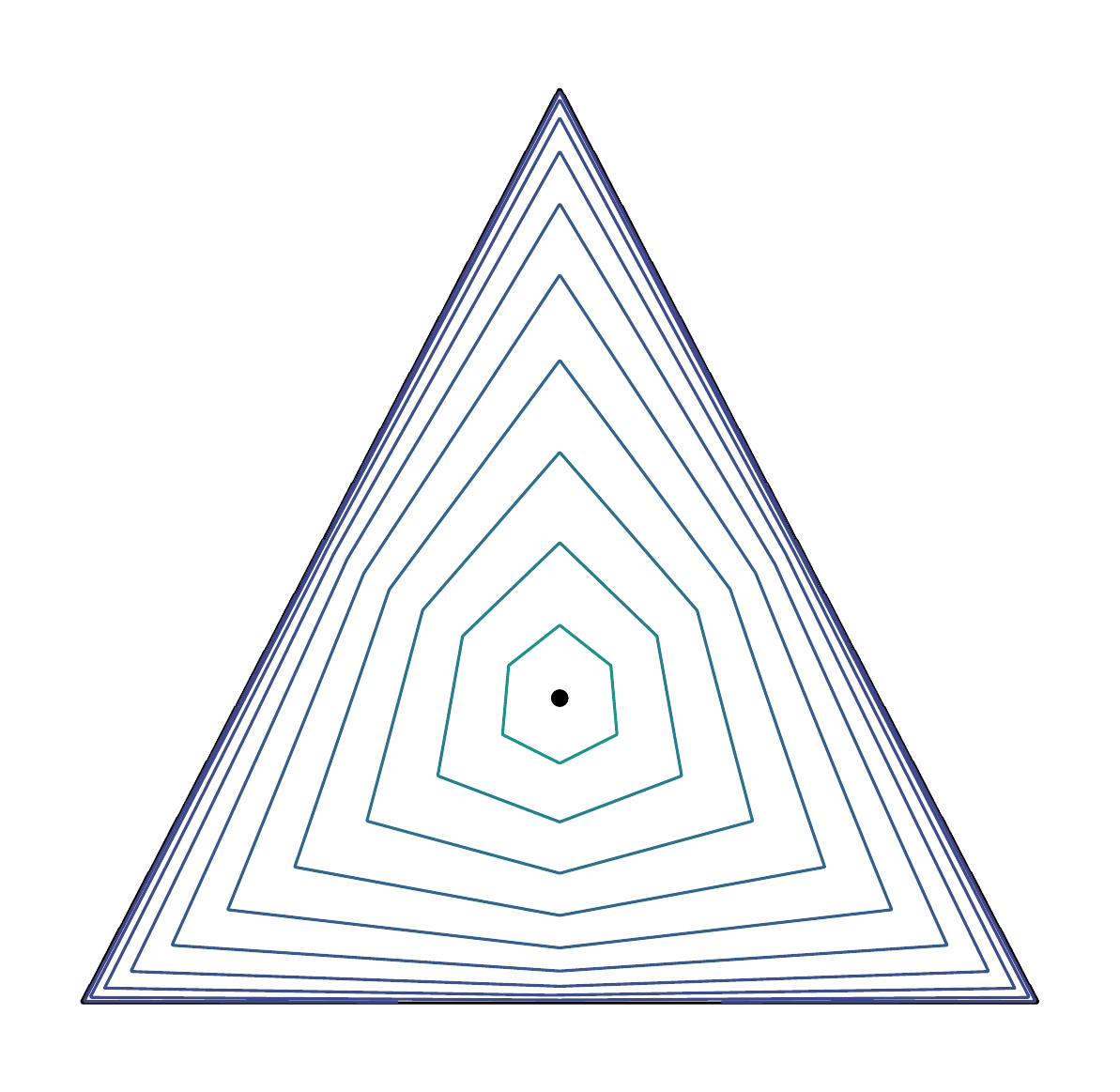}}\\\vspace{-0.5cm}
    \subfigure[$t=0.6$]{\addtocounter{subfigure}{-3}\includegraphics[width=0.22\linewidth]{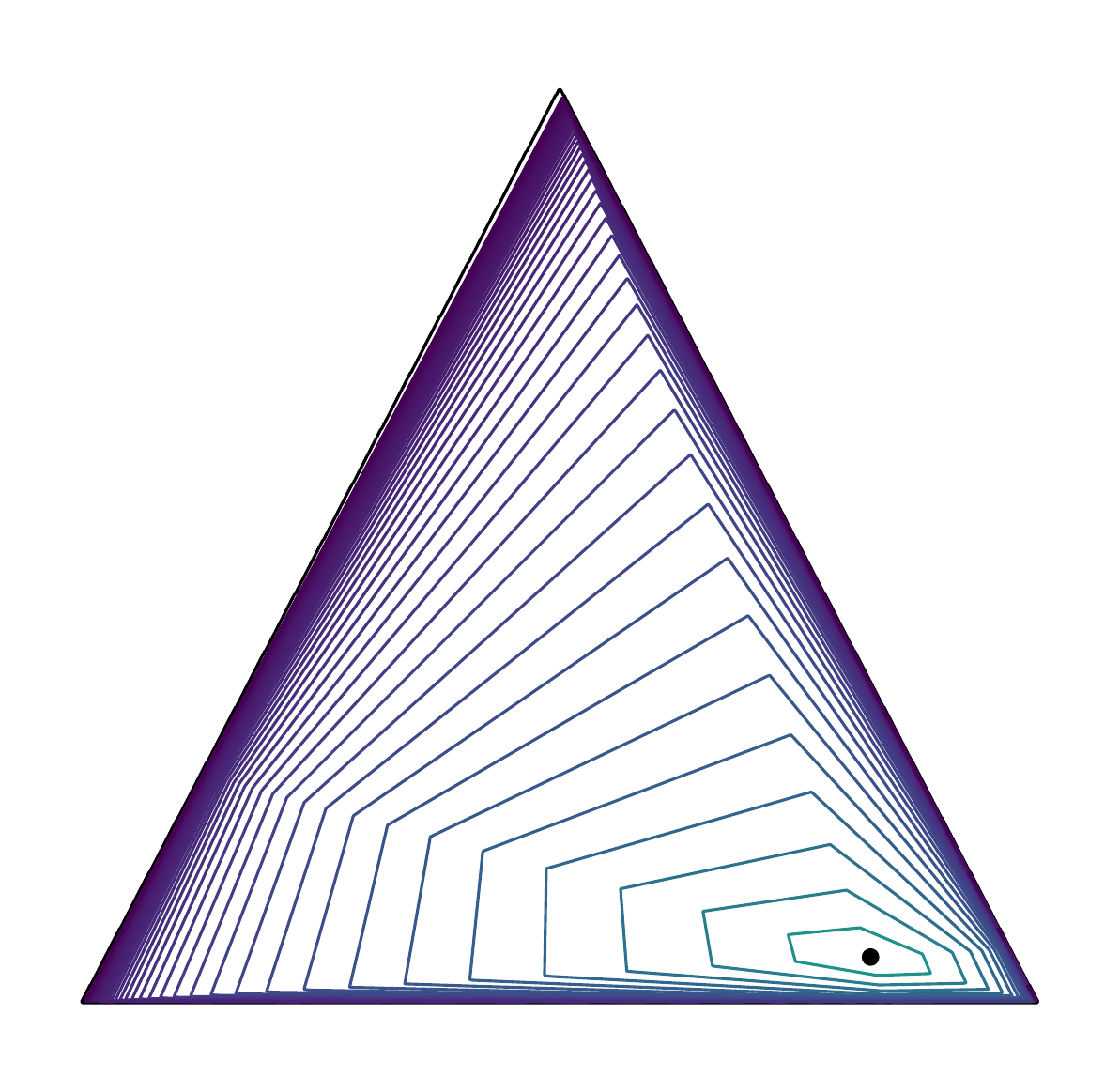}}
    \subfigure[$t=1.0$]{\includegraphics[width=0.22\linewidth]{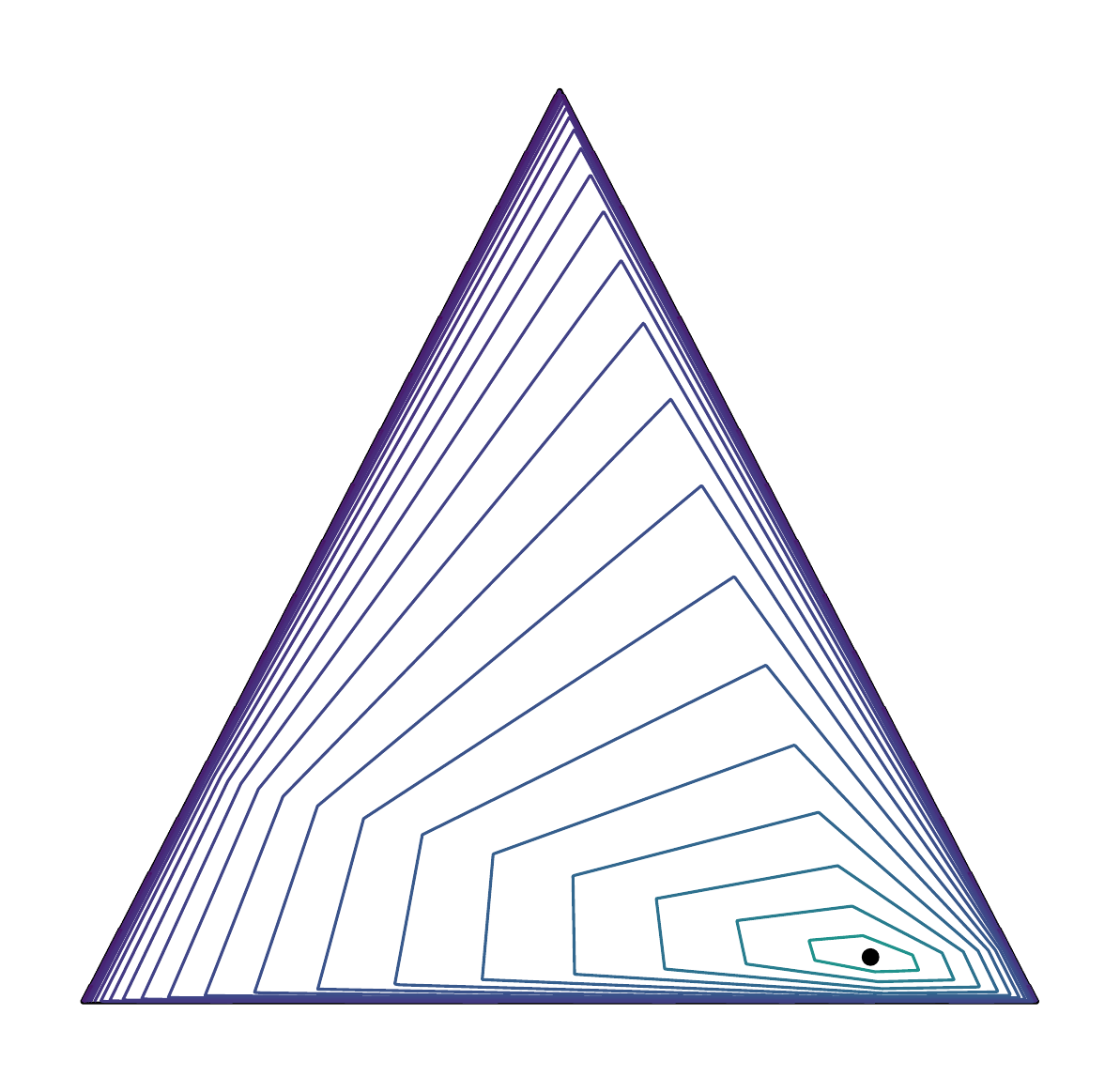}}
    \subfigure[$t=1.2$]{\includegraphics[width=0.22\linewidth]{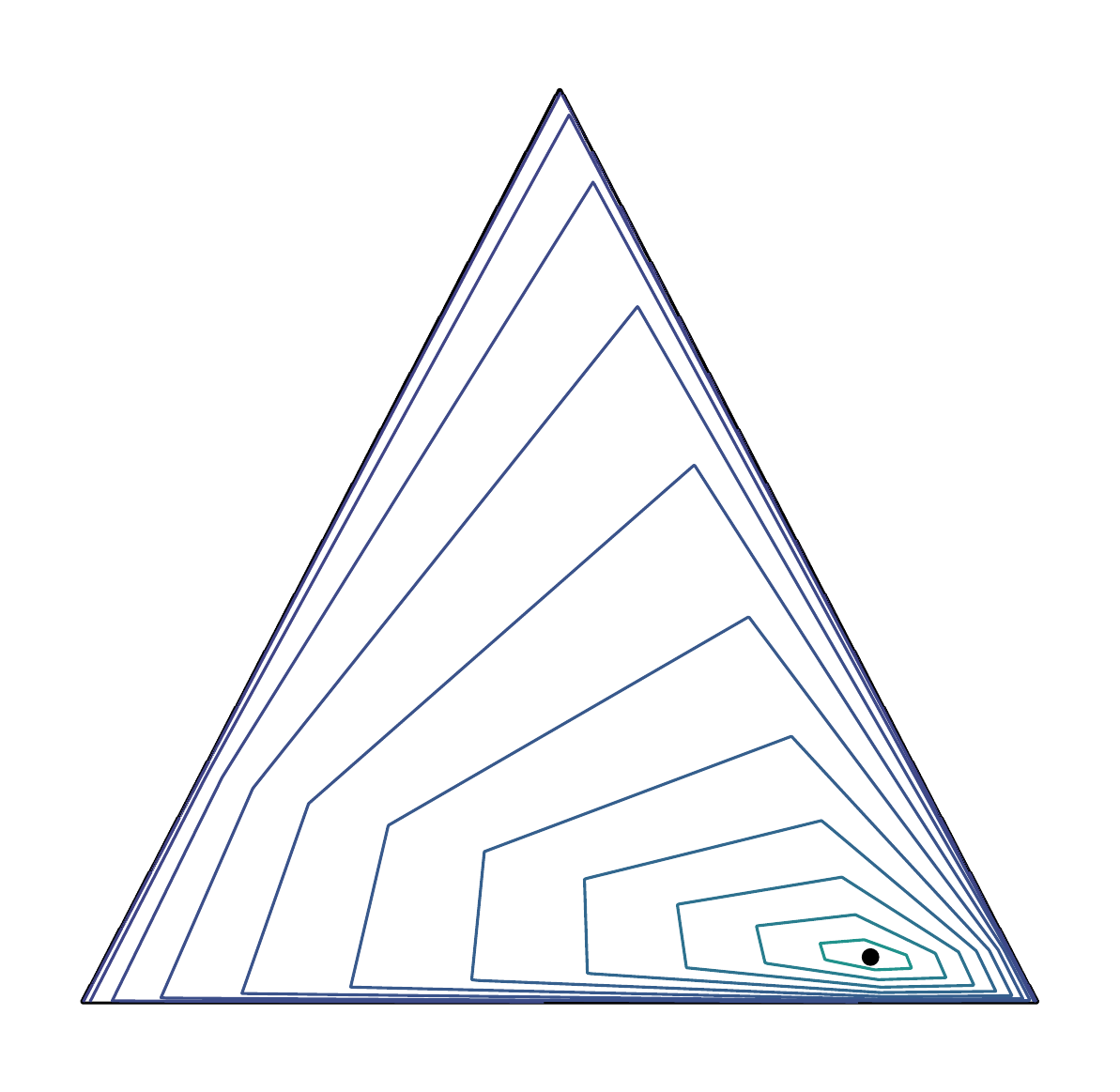}}
    \vspace{-0.3cm}
    \caption{Balls of different radii with respect to the $t$-Hilbert distance shown on the simplex for different $t$. The center is shown with a black dot. Darker colors indicate larger distances.
}
    \label{fig:hilbert-simplex}
    \end{center}
    \vspace{-0.4cm}
\end{figure*}

\subsection{$t$-Funk and $t$-Hilbert Distances from First Principles}\label{subsec-tempfunk}
  
All the material developed in Section \ref{sub-t-hil} follows naturally from a \textit{single} change in the path that derives them from a tautological Finsler structure \citep{HandbookHilbert-2014}, namely the classical notion of \textit{Riemann sums}. In our case, we define Riemann $t$-sums using $t$-summations ($\oplus_t$) instead of $+$. 
At the core lies the use of the $t$-addition, $\oplus_t$ to define Riemann $t$-sums. For any $f$ defined over a closed interval $[a,b]$ and a division $\Delta$ of this interval using $n+1$ reals $x_0 \defeq a <x_1 < ... < x_{n-1} < x_n \defeq  b$, we let
\begin{eqnarray*} 
S_\Delta(f) & \defeq & (\mbox{\Large $\oplus$}_t)_{i=1}^n (x_i - x_{i-1}) f(\xi_i)
\end{eqnarray*}
define the Riemann $t$-sum of $f$ over $[a,b]$ using $\Delta$, with $\xi_i \in ]x_{i-1}, x_i[, \forall i$. As $n$ increases, we see that the terms factoring monomials in $(1-t)$ of degree $i$ become akin to general volumes in dimension $i+1$: for $i=0$, we get ordinary unit lengths (what Riemann sums integrate), for $i=1$, we get unit surfaces, and so on. 
Passing to the limit, we get a notion of $t$-integration, noted {\tiny $\riemannint{t}{.}{.}$} (for $t=1$, this is just $\int_.^.$), which we use to define the $t$-length of a curve. The $t$-Funk distance is then induced via the $t$-length of a ray connecting two points in a tautological Finsler structure. Because the construction is lengthy, details are provided in Appendix~\ref{sec-temp-Funk}.

\subsection{$t$-Hilbert Co-simplex Distance}\label{sub-t-hil-co}
The co-simplex $\tilde{\Delta}_t^d$ is non-convex for $t\neq 1$, and naturally, we cannot define tempered Funk and Hilbert distance to this set. However, we can apply these distances directly on the simplex by lifting each pair of \acrotem s $\tilde{\ve{p}}, \tilde{\ve{q}} \in \tilde{\Delta}_t^d$ to its corresponding co-density $\ve{p}, \ve{q} \in \Delta^d$ via the following lemma.
\begin{lemma}
\label{lem:tstar-dist}
\begin{align*}
   t^*\rho^{\Delta^d}_{\tsFD}(\tilde{\ve{p}}^{1/t^*}\!\!, \tilde{\ve{q}}^{1/t^*}) =  
   \log_t \max_i \frac{\tilde{p}_i}{\tilde{q}_i}\,.
\end{align*}
\end{lemma}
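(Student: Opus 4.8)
The plan is to split the identity into a one-dimensional geometric computation of a Funk distance on the probability simplex and a short algebraic manipulation powered by $t^*=\frac{1}{2-t}$. First I would note that, by definition of the co-simplex, the codensities $\ve{p}\defeq\tilde{\ve{p}}^{1/t^*}$ and $\ve{q}\defeq\tilde{\ve{q}}^{1/t^*}$ lie in $\Delta^d$, so the left-hand side is just the temperature-$t^*$ Funk distance between two simplex points, rescaled by $t^*$. I may assume $\ve{p},\ve{q}$ lie in the relative interior of $\Delta^d$ (treating Funk distances in its affine hull) and recover the boundary/degenerate case by continuity.

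Next I would compute $\rho^{\Delta^d}_{\tsFD}(\ve{p},\ve{q})$ explicitly. As observed after \eqref{eq:t-hilbert-dist}, this distance depends only on the section $\Delta^d\cap(\ve{p}\ve{q})$, so it suffices to locate the exit point $\bar{\ve{q}}$ of the ray $\lambda\mapsto\ve{p}+\lambda(\ve{q}-\ve{p})$, $\lambda\ge 0$, through the relative boundary of $\Delta^d$, which is the union of the coordinate faces $\{x_i=0\}$. Coordinate $i$ vanishes at $\lambda_i=p_i/(p_i-q_i)$, which is positive exactly when $p_i>q_i$, and the ray exits at $\lambda^\star=\min\{\lambda_i:\lambda_i>0\}$. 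A short check, using that $\lambda_i=(1-q_i/p_i)^{-1}$ is increasing in $q_i/p_i$ and that the global minimizer of $q_i/p_i$ has ratio $<1$ (since $\ve{p}\neq\ve{q}$ and both sum to $1$), gives $\lambda^\star=(1-1/\max_i(p_i/q_i))^{-1}>1$. Since $\bar{\ve{q}}=\ve{p}+\lambda^\star(\ve{q}-\ve{p})$, we get $\Vert\ve{p}-\bar{\ve{q}}\Vert=\lambda^\star\Vert\ve{q}-\ve{p}\Vert$ and $\Vert\ve{q}-\bar{\ve{q}}\Vert=(\lambda^\star-1)\Vert\ve{q}-\ve{p}\Vert$, hence $\rho^{\Delta^d}_{\tsFD}(\ve{p},\ve{q})=\log_{t^*}\frac{\lambda^\star}{\lambda^\star-1}=\log_{t^*}\max_i\frac{p_i}{q_i}$. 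Equivalently, this is the classical de la Harpe formula for the simplex Funk distance with $\log$ replaced by $\log_{t^*}$, which one could instead cite directly.

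Finally I would carry out the algebra: since $x\mapsto x^{1/t^*}$ is increasing, $\max_i(p_i/q_i)=(\max_i\tilde{p}_i/\tilde{q}_i)^{1/t^*}=:M^{1/t^*}$, and using $\log_{t^*}x=\frac{1}{1-t^*}(x^{1-t^*}-1)$ together with $1/t^*=2-t$ (so $1-t^*=\frac{1-t}{2-t}$ and $(1-t^*)/t^*=1-t$) one obtains $t^*\rho^{\Delta^d}_{\tsFD}(\ve{p},\ve{q})=t^*\log_{t^*}(M^{1/t^*})=\frac{t^*}{1-t^*}(M^{1-t}-1)=\frac{1}{1-t}(M^{1-t}-1)=\log_t M$, which is the claim (the $t=1$ case following by L'Hôpital). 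I do not foresee a genuine obstacle: the algebra is mechanical and the geometry is the standard simplex Funk computation; the only care needed is in handling that $\Delta^d$ is convex but not full-dimensional, in checking that the ray really exits beyond $\ve{q}$ (i.e.\ $\lambda^\star>1$) and generically through a single coordinate face (ties being harmless), and in getting the $t\leftrightarrow t^*$ exponent arithmetic right.
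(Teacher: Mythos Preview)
Your proposal is correct and follows essentially the same approach as the paper: identify $\rho^{\Delta^d}_{\tsFD}(\ve{p},\ve{q})=\log_{t^*}\max_i p_i/q_i$ and then use the identity $t^*\log_{t^*}(x^{1/t^*})=\log_t x$ to finish. The only difference is one of detail: the paper's proof is a single chain of equalities that takes the simplex Funk formula for granted, while you derive it from the exit-point geometry; your version is more self-contained, the paper's is more concise.
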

We thus define the tempered Funk distance on the co-simplex as $ \rho_{\tFD}(\tilde{\ve{p}}, \tilde{\ve{q}}) = t^*\rho^{\Delta^d}_{\tsFD}(\tilde{\ve{p}}^{1/t^*}\!\!, \tilde{\ve{q}}^{1/t^*})$ and consequently, the tempered Hilbert on the co-simplex as its $t$-symmetrization:

\begin{equation}
    \label{eq:t-hilbert-dist-simplex}
    \begin{split}
    \rho_{\tHG}(\tilde{\ve{p}}, \tilde{\ve{q}}) & = \rho_{\tFD}(\tilde{\ve{p}}, \tilde{\ve{q}}) \oplus_t \rho_{\tFD}(\tilde{\ve{q}}, \tilde{\ve{p}}) = \log_t\max_i\frac{\tilde{p}_i}{\tilde{q}_i}\max_i\frac{\tilde{q}_i}{\tilde{p}_i} = \log_t \frac{\max_i\frac{\tilde{p}_i}{\tilde{q}_i}}{\min_i\frac{\tilde{p}_i}{\tilde{q}_i}}
    \,.
    \end{split}
\end{equation}

Figure~\ref{fig:biti} illustrates the Voronoi bisector $\Bi(\ve{p},\ve{q})=\{\ve{x}\in\Delta^d \st \rho_\HG(\ve{p},\ve{x})=\rho_\HG(\ve{q},\ve{x})\}$ (studied in~\citep{HilbertCG-2017,HilbertVoronoi-2023}) and the triangle equality region $R(\ve{p},\ve{q})=\{\ve{x}\in\Delta^d \st \rho_\HG(\ve{p},\ve{x})+\rho_\HG(\ve{x},\ve{q})=\rho_\HG(\ve{p},\ve{q})\}$ ($(d-1)$-dimensional counterpart of a geodesic) for a pair of points $\ve{p}$ and $\ve{q}$ (red) with respect to the Hilbert simplex distance.
Observe  that when $\ve{p}$ and $\ve{q}$ are collinear with a simplex vertex (Figure~\ref{fig:biti}~(c)), $R(\ve{p},\ve{q})$ is 1D. The Voronoi bisectors are identical for the $t$-Hilbert co-simplex distance as the distance is monotonic with respect to the Hilbert simplex distance (see Theorem~\ref{thm:contr}). However, the triangle equality regions correspond to $t$-triangle equality where $+$ is replaced with $\oplus_t$.

The following proposition summarizes the properties of the $t$-Hilbert distance.
\begin{proposition}
\label{prop:dist-properties}
    $\rho_{\tHG}$ satisfies the following properties:
    \begin{itemize}
    \item [(i)] (projective distance) $\rho_{\tHG}(\tilde{\ve{p}}, \tilde{\ve{q}}) = 0$ iff $\tilde{\ve{p}} = K\cdot\tilde{\ve{q}}$ for some $K > 0$.
    \item [(ii)] (symmetry) $ \rho_{\tHG}(\tilde{\ve{p}}, \tilde{\ve{q}}) =  \rho_{\tHG}(\tilde{\ve{q}}, \tilde{\ve{p}})$.
    \item [(iii)] (triangle inequality) $\rho_{\tHG}(\tilde{\ve{p}}, \tilde{\ve{r}}) \leq \rho_{\tHG}(\tilde{\ve{p}}, \tilde{\ve{q}}) + \rho_{\tHG}(\tilde{\ve{q}}, \tilde{\ve{r}})$ for $1 \leq t < 2$.
    \item [(iv)] ($t$-triangle inequality) $\rho_{\tHG}(\tilde{\ve{p}}, \tilde{\ve{r}}) \leq \rho_{\tHG}(\tilde{\ve{p}}, \tilde{\ve{q}}) \oplus_t \rho_{\tHG}(\tilde{\ve{q}}, \tilde{\ve{r}})$ for $t \in \mathbb{R}$.
      \end{itemize}
    \end{proposition}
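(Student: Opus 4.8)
The plan is to reduce everything to the closed form in \eqref{eq:t-hilbert-dist-simplex}, writing $\rho_{\tHG}(\tilde{\ve{p}},\tilde{\ve{q}}) = \log_t M(\tilde{\ve{p}},\tilde{\ve{q}})$ with $M(\tilde{\ve{p}},\tilde{\ve{q}})\defeq \max_i(\tilde{p}_i/\tilde{q}_i)\,/\,\min_i(\tilde{p}_i/\tilde{q}_i)\ge 1$, and to use three elementary facts: (a) $\log_t$ is strictly increasing on $(0,\infty)$ with $\log_t 1 = 0$ (derivative $x^{-t}>0$); (b) $\log_t a\oplus_t\log_t b = \log_t(ab)$ for $a,b>0$; (c) the classical Hilbert (Birkhoff) projective metric $h(\tilde{\ve{p}},\tilde{\ve{q}})\defeq\log M(\tilde{\ve{p}},\tilde{\ve{q}})$ on $\bbR_+^d$ is symmetric and satisfies $h(\tilde{\ve{p}},\tilde{\ve{r}})\le h(\tilde{\ve{p}},\tilde{\ve{q}})+h(\tilde{\ve{q}},\tilde{\ve{r}})$. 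With these in hand, (i) and (ii) are immediate: by (a), $\rho_{\tHG}(\tilde{\ve{p}},\tilde{\ve{q}})=0$ iff $M=1$ iff all ratios $\tilde{p}_i/\tilde{q}_i$ equal a common $K$, necessarily positive since all entries are, i.e. $\tilde{\ve{p}}=K\tilde{\ve{q}}$; and symmetry follows either from $M(\tilde{\ve{q}},\tilde{\ve{p}})=M(\tilde{\ve{p}},\tilde{\ve{q}})$ (since $\max_i(\tilde{q}_i/\tilde{p}_i)=1/\min_i(\tilde{p}_i/\tilde{q}_i)$ and dually) or directly from commutativity of $\oplus_t$ in the definition of $\rho_{\tHG}$.

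For (iv), since all distance values are nonnegative, $\exp_t$ of them is positive and $\oplus_t$ is well-defined; applying (b) twice gives $\rho_{\tHG}(\tilde{\ve{p}},\tilde{\ve{q}})\oplus_t\rho_{\tHG}(\tilde{\ve{q}},\tilde{\ve{r}}) = \log_t\!\big(M(\tilde{\ve{p}},\tilde{\ve{q}})\,M(\tilde{\ve{q}},\tilde{\ve{r}})\big)$. By strict monotonicity of $\log_t$ this reduces (iv) to the multiplicative inequality $M(\tilde{\ve{p}},\tilde{\ve{r}})\le M(\tilde{\ve{p}},\tilde{\ve{q}})\,M(\tilde{\ve{q}},\tilde{\ve{r}})$, which I would prove from $\tilde{p}_i/\tilde{r}_i=(\tilde{p}_i/\tilde{q}_i)(\tilde{q}_i/\tilde{r}_i)$ by bounding $\max_i(\tilde{p}_i/\tilde{r}_i)\le\max_j(\tilde{p}_j/\tilde{q}_j)\max_j(\tilde{q}_j/\tilde{r}_j)$ and $\min_i(\tilde{p}_i/\tilde{r}_i)\ge\min_j(\tilde{p}_j/\tilde{q}_j)\min_j(\tilde{q}_j/\tilde{r}_j)$ and dividing. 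This argument uses nothing about $t$, so it holds for every admissible $t$.

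For (iii) I would deduce the ordinary triangle inequality from (iv) via the bound $a\oplus_t b\le a+b$, valid for $a,b\ge 0$ when $t\ge 1$: on the range where the bases defining $\exp_t$ stay positive---which contains all values taken by $\rho_{\tHG}$---one has the closed form $a\oplus_t b = a+b+(1-t)ab$, and $(1-t)ab\le 0$ for $t\ge 1$, so chaining with (iv) yields $\rho_{\tHG}(\tilde{\ve{p}},\tilde{\ve{r}})\le\rho_{\tHG}(\tilde{\ve{p}},\tilde{\ve{q}})\oplus_t\rho_{\tHG}(\tilde{\ve{q}},\tilde{\ve{r}})\le\rho_{\tHG}(\tilde{\ve{p}},\tilde{\ve{q}})+\rho_{\tHG}(\tilde{\ve{q}},\tilde{\ve{r}})$. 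An equivalent route writes $\rho_{\tHG}=g\circ h$ with $g(u)=\log_t e^u=\tfrac{1}{1-t}(e^{(1-t)u}-1)$, which is increasing, vanishes at $0$, and has $g''(u)=(1-t)e^{(1-t)u}\le 0$ for $t\ge 1$, hence is concave and therefore subadditive on $[0,\infty)$; combining with (c) gives (iii). I expect this last step to be the only real obstacle: one has to check that the closed form of $\oplus_t$, equivalently the concavity of $g$, applies on the full range of $\rho_{\tHG}$, and it is exactly the sign of $1-t$ that forces $t\ge 1$---for $t<1$ the same $g$ is convex and superadditive and the additive triangle inequality genuinely fails, explaining the restriction in (iii).
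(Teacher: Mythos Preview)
Your proof is correct and follows essentially the same route as the paper: reduce to the closed form $\rho_{\tHG}=\log_t M$ with $M=\max_i(\tilde{p}_i/\tilde{q}_i)/\min_i(\tilde{p}_i/\tilde{q}_i)\ge 1$, read off (i)--(ii) from strict monotonicity of $\log_t$ and the symmetry $M(\tilde{\ve{p}},\tilde{\ve{q}})=M(\tilde{\ve{q}},\tilde{\ve{p}})$, obtain (iv) from the multiplicative bound $M(\tilde{\ve{p}},\tilde{\ve{r}})\le M(\tilde{\ve{p}},\tilde{\ve{q}})\,M(\tilde{\ve{q}},\tilde{\ve{r}})$, and then deduce (iii) from (iv). The only cosmetic difference is in the last step: the paper unpacks ``$\log_t(ab)\le\log_t a+\log_t b$ for $a,b\ge 1$'' as the power inequality $(ab)^\tau+1\ge a^\tau+b^\tau$, whereas you use the closed form $a\oplus_t b=a+b+(1-t)ab$ (valid since $\rho_{\tHG}<1/(t-1)$ for $t>1$) directly---your packaging is slightly cleaner but equivalent, and your alternative via subadditivity of $u\mapsto\log_t e^u$ is exactly the argument the paper gives for Proposition~\ref{prop:t-hilbert-metric}.
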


In addition, the $t$-Hilbert distance satisfies a notion of information monotonicity for any coarse-graining of the input arguments. However, the reduction needs to be performed on the corresponding co-densities, as outlined in the following lemma.
\begin{lemma}[Monotonicity]
\label{lem:monotone}
Let $\tilde{\ve{p}}, \tilde{\ve{q}} \in \Delta^{m}_t$. Let $\tilde{\ve{p}}_r = ((\tilde{p}_1^{1/{t^*}} + \tilde{p}_2^{1/{t^*}})^{t^*}, \cdots, \tilde{p}_d)$ and $\tilde{\ve{q}}_r = ((\tilde{q}_1^{1/{t^*}} + \tilde{q}_2^{1/{t^*}})^{t^*}, \cdots, \tilde{q}_d)$ denote their coarse-grained points on $\tilde{\Delta}^d_t$. We have
\begin{equation*}
    \label{eq:t-proj-monotone}
   \rho_{\tFD}(\tilde{\ve{p}}_r, \tilde{\ve{q}}_r) \leq \rho_{\tFD}(\tilde{\ve{p}}, \tilde{\ve{q}})\,.
\end{equation*}
\end{lemma}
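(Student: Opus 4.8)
The plan is to evaluate both sides with the closed form from Lemma~\ref{lem:tstar-dist} and then reduce everything to an elementary mediant inequality. By Lemma~\ref{lem:tstar-dist} we have $\rho_{\tFD}(\tilde{\ve{p}},\tilde{\ve{q}}) = \log_t \max_i (\tilde p_i/\tilde q_i)$ and likewise $\rho_{\tFD}(\tilde{\ve{p}}_r,\tilde{\ve{q}}_r) = \log_t \max_i \big((\tilde p_r)_i/(\tilde q_r)_i\big)$, the latter maximum running over the merged coordinate together with the untouched coordinates. Since $\tfrac{\mathrm d}{\mathrm dx}\log_t x = x^{-t}>0$ on $\mathbb{R}_{>0}$, the map $\log_t$ is strictly increasing, so it suffices to prove
\[
\max_i \frac{(\tilde p_r)_i}{(\tilde q_r)_i} \;\le\; \max_i \frac{\tilde p_i}{\tilde q_i}.
\]

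For every untouched coordinate the ratio is literally unchanged, hence trivially $\le \max_i (\tilde p_i/\tilde q_i)$, and the only thing to check is the merged coordinate. Writing $a_j \defeq \tilde p_j^{1/t^*}$ and $b_j \defeq \tilde q_j^{1/t^*}$ for $j\in\{1,2\}$ (all positive, as \acrotem{}s have positive components), the merged ratio is $\big((a_1+a_2)/(b_1+b_2)\big)^{t^*}$ while $\tilde p_j/\tilde q_j = (a_j/b_j)^{t^*}$. Because $t^*=\tfrac1{2-t}>0$, the map $x\mapsto x^{t^*}$ is increasing, so the bound for this coordinate is equivalent to the mediant inequality
\[
\frac{a_1+a_2}{b_1+b_2} \;\le\; \max\Big\{\frac{a_1}{b_1},\frac{a_2}{b_2}\Big\}.
\]

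This last step is routine: assuming without loss of generality $a_1/b_1 \le a_2/b_2$, i.e.\ $a_1 b_2 \le a_2 b_1$, adding $a_2 b_2$ to both sides gives $b_2(a_1+a_2) \le a_2(b_1+b_2)$, which is exactly $(a_1+a_2)/(b_1+b_2) \le a_2/b_2$. Combining, every coordinate ratio of the pair $(\tilde{\ve{p}}_r,\tilde{\ve{q}}_r)$ is bounded by $\max_i (\tilde p_i/\tilde q_i)$, hence so is their maximum, and applying the increasing function $\log_t$ yields $\rho_{\tFD}(\tilde{\ve{p}}_r,\tilde{\ve{q}}_r) \le \rho_{\tFD}(\tilde{\ve{p}},\tilde{\ve{q}})$. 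I do not expect a genuine obstacle here; the only points deserving a line of care are the strict monotonicity of $\log_t$ (uniform across $t<1$ and $1<t<2$, since its derivative is $x^{-t}$), the positivity of $t^*$ for $t<2$ used to pass the $t^*$-power through the inequality, and — if one wishes to allow zero components — reading the ratios in the extended reals, where the mediant inequality still holds.
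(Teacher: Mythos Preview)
Your proof is correct and follows essentially the same approach as the paper: both reduce to showing that the merged ratio $(\tilde p_1^{1/t^*}+\tilde p_2^{1/t^*})^{t^*}/(\tilde q_1^{1/t^*}+\tilde q_2^{1/t^*})^{t^*}$ is bounded by $\max\{\tilde p_1/\tilde q_1,\tilde p_2/\tilde q_2\}$, and then apply the monotonicity of $\log_t$. The only cosmetic difference is that the paper sets $\iota=\max\{\tilde p_1/\tilde q_1,\tilde p_2/\tilde q_2\}$ and bounds the numerator by $\iota^{1/t^*}(\tilde q_1^{1/t^*}+\tilde q_2^{1/t^*})$ directly, whereas you substitute $a_j=\tilde p_j^{1/t^*}$, $b_j=\tilde q_j^{1/t^*}$ and invoke the mediant inequality explicitly; your version is slightly more careful in isolating where $t^*>0$ is used.
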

We can extend Lemma~\ref{lem:monotone} to any partitioning $\mathcal{X} = \{X_1, \ldots, X_m\}$ of $\{1, \ldots, d\}$ where $m \leq d$ such that $\tilde{p}_{\vert\mathcal{X}}[i] = (\sum_{j \in X_i} \tilde{p}_j^{1/t^*})^{t^*}$. We say the distance $D$ is \emph{$t$-information monotone} if $D(\tilde{\ve{p}}_{\vert\mathcal{X}}, \tilde{\ve{q}}_{\vert\mathcal{X}}) \leq D(\tilde{\ve{p}}, \tilde{\ve{q}})$.
\begin{theorem}
The tempered Funk distance $\rho_{\tFD}$ and the tempered Hilbert distance $\rho_{\tHG}$ in $\tilde{\Delta}_t^d$ satisfy the $t$-information monotonicity.
\end{theorem}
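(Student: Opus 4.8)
The plan is to reduce an arbitrary coarse-graining to a finite sequence of elementary two-coordinate merges, apply Lemma~\ref{lem:monotone} to each, and then transfer the result from $\rho_{\tFD}$ to $\rho_{\tHG}$ through the $t$-symmetrization identity. First I would record two easy structural facts. (1) Both closed forms $\rho_{\tFD}(\tilde{\ve{p}},\tilde{\ve{q}})=\log_t\max_i \tilde p_i/\tilde q_i$ and $\rho_{\tHG}(\tilde{\ve{p}},\tilde{\ve{q}})=\log_t\big(\max_i \tilde p_i/\tilde q_i\,/\,\min_i \tilde p_i/\tilde q_i\big)$ depend only on the multiset $\{\tilde p_i/\tilde q_i\}_i$, hence are invariant under a simultaneous permutation of the coordinates of $\tilde{\ve{p}}$ and $\tilde{\ve{q}}$; consequently Lemma~\ref{lem:monotone}, although stated for merging the first two coordinates, applies verbatim to merging any pair $\{j,k\}$. (2) Merging a pair $\{j,k\}$ in a point of the co-simplex $\tilde\Delta_t^{d}$ produces a point of $\tilde\Delta_t^{d-1}$, since the new coordinate $(\tilde p_j^{1/t^*}+\tilde p_k^{1/t^*})^{t^*}$ contributes exactly $\tilde p_j^{1/t^*}+\tilde p_k^{1/t^*}$ to the normalization sum, leaving it equal to $1$.

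Next I would observe that any partition $\mathcal{X}=\{X_1,\dots,X_m\}$ of $\{1,\dots,d\}$ is realized by iterating such merges: process each block $X_i=\{j_1,\dots,j_k\}$ in turn and merge its elements two at a time; by the identity $((a^{1/t^*}+b^{1/t^*})+c^{1/t^*})^{t^*}=(a^{1/t^*}+b^{1/t^*}+c^{1/t^*})^{t^*}$ the successive merges combine to $(\sum_{j\in X_i}\tilde p_j^{1/t^*})^{t^*}$, so the whole composition sends $\tilde{\ve{p}}$ to $\tilde{\ve{p}}_{|\mathcal{X}}$ and $\tilde{\ve{q}}$ to $\tilde{\ve{q}}_{|\mathcal{X}}$, with every intermediate configuration again lying in a co-simplex by fact (2). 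Applying Lemma~\ref{lem:monotone} (in the permuted form of fact (1)) once per elementary merge and chaining the inequalities yields $\rho_{\tFD}(\tilde{\ve{p}}_{|\mathcal{X}},\tilde{\ve{q}}_{|\mathcal{X}})\le\rho_{\tFD}(\tilde{\ve{p}},\tilde{\ve{q}})$, i.e.\ the $t$-information monotonicity of $\rho_{\tFD}$.

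For $\rho_{\tHG}$ I would use $\rho_{\tHG}(\tilde{\ve{p}},\tilde{\ve{q}})=\rho_{\tFD}(\tilde{\ve{p}},\tilde{\ve{q}})\oplus_t\rho_{\tFD}(\tilde{\ve{q}},\tilde{\ve{p}})$ together with monotonicity of $\oplus_t$. Since $\log_t$ has positive derivative $x^{-t}$ on $(0,\infty)$, it is strictly increasing there, and so is its inverse $\exp_t$ on the corresponding range; because $\tilde{\ve{p}},\tilde{\ve{q}}\in\tilde\Delta_t^d$ forces $\max_i \tilde p_i/\tilde q_i\ge 1$ and $\min_i \tilde p_i/\tilde q_i\le 1$, all Funk distances appearing here are $\ge 0$ and their $\exp_t$-images are finite and $\ge 1$, so on this range $(a,b)\mapsto a\oplus_t b=\log_t(\exp_t a\cdot\exp_t b)$ is non-decreasing in each argument (product of nonnegative increasing quantities, followed by an increasing map). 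Feeding the two Funk inequalities $\rho_{\tFD}(\tilde{\ve{p}}_{|\mathcal{X}},\tilde{\ve{q}}_{|\mathcal{X}})\le\rho_{\tFD}(\tilde{\ve{p}},\tilde{\ve{q}})$ and $\rho_{\tFD}(\tilde{\ve{q}}_{|\mathcal{X}},\tilde{\ve{p}}_{|\mathcal{X}})\le\rho_{\tFD}(\tilde{\ve{q}},\tilde{\ve{p}})$ from the previous step into this monotone $\oplus_t$ gives $\rho_{\tHG}(\tilde{\ve{p}}_{|\mathcal{X}},\tilde{\ve{q}}_{|\mathcal{X}})\le\rho_{\tHG}(\tilde{\ve{p}},\tilde{\ve{q}})$.

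The only substantive input is Lemma~\ref{lem:monotone}, which is already granted; everything else is bookkeeping. Among the remaining steps the one requiring the most care is the decomposition of a general partition into pairwise merges while checking that each intermediate point stays inside a co-simplex and that the relabelling after a merge does not disturb the permutation-invariance used to apply the lemma — routine, but worth spelling out. The monotonicity of $\oplus_t$ is elementary calculus on $\log_t$ and $\exp_t$ once the arguments are confined to the nonnegative range, which the co-simplex constraint guarantees.
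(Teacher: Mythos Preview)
Your proposal is correct and follows essentially the same approach as the paper, which simply states that the result follows by iteratively applying Lemma~\ref{lem:monotone} and using the definition of the tempered Hilbert distance as the $t$-symmetrization of the tempered Funk distance. You have filled in the bookkeeping details (permutation invariance, closure of intermediate merges in co-simplices, and monotonicity of $\oplus_t$ on the nonnegative range) that the paper leaves implicit.
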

The proof follows by iteratively applying Lemma~\ref{lem:monotone} and using the definition of the tempered Hilbert distance~\eqref{eq:t-hilbert-dist-simplex}.

\begin{figure*}[t!]
\vspace{-0.4cm}
\begin{center}
    \subfigure[$100$ random points in $\bbR^{500}$]{\includegraphics[width=0.25\linewidth]{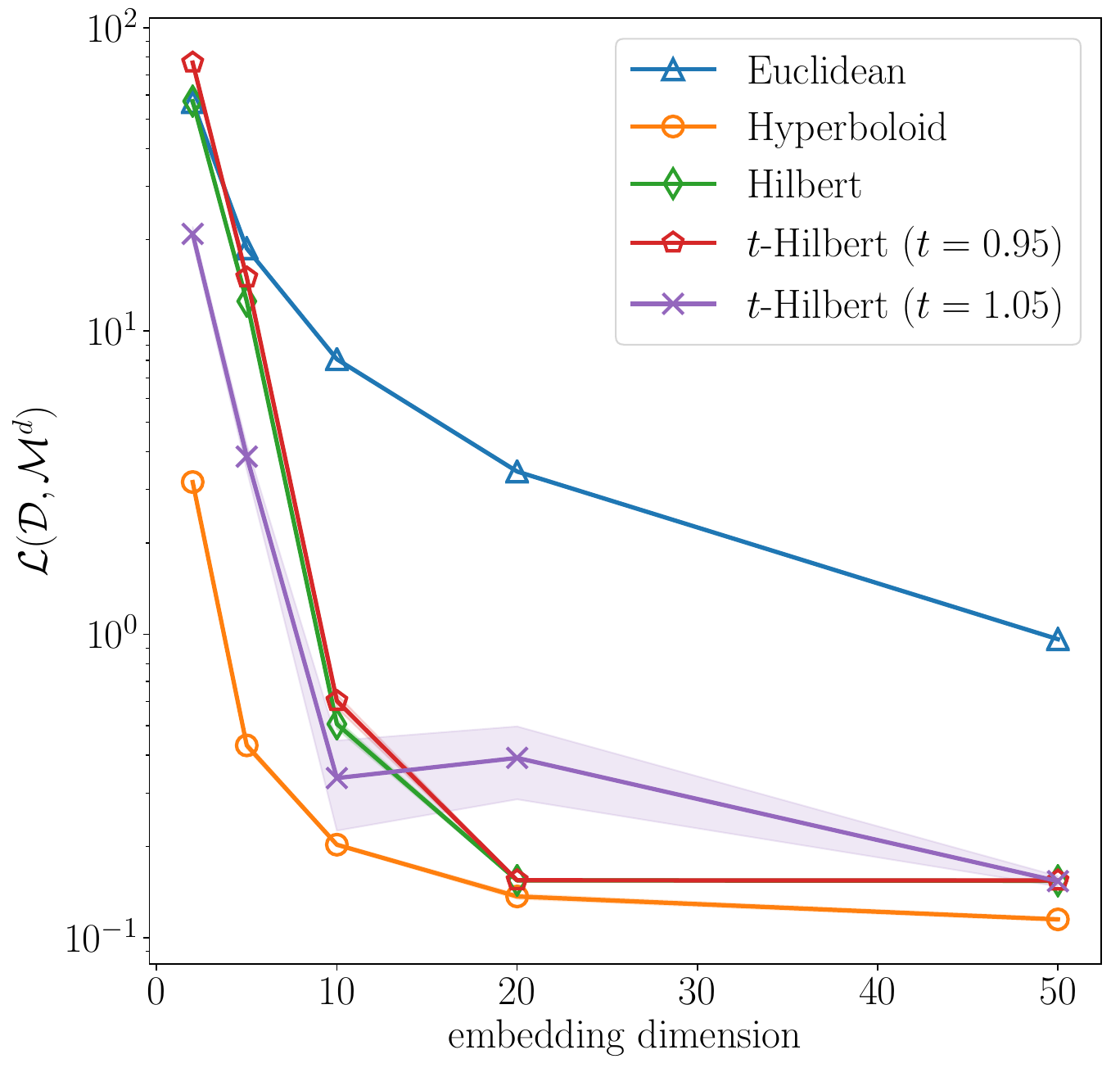}}
    \subfigure[Erd\"os-R\'enyi graphs ($n=200$, $p=0.5$)]{\includegraphics[width=0.265\linewidth]{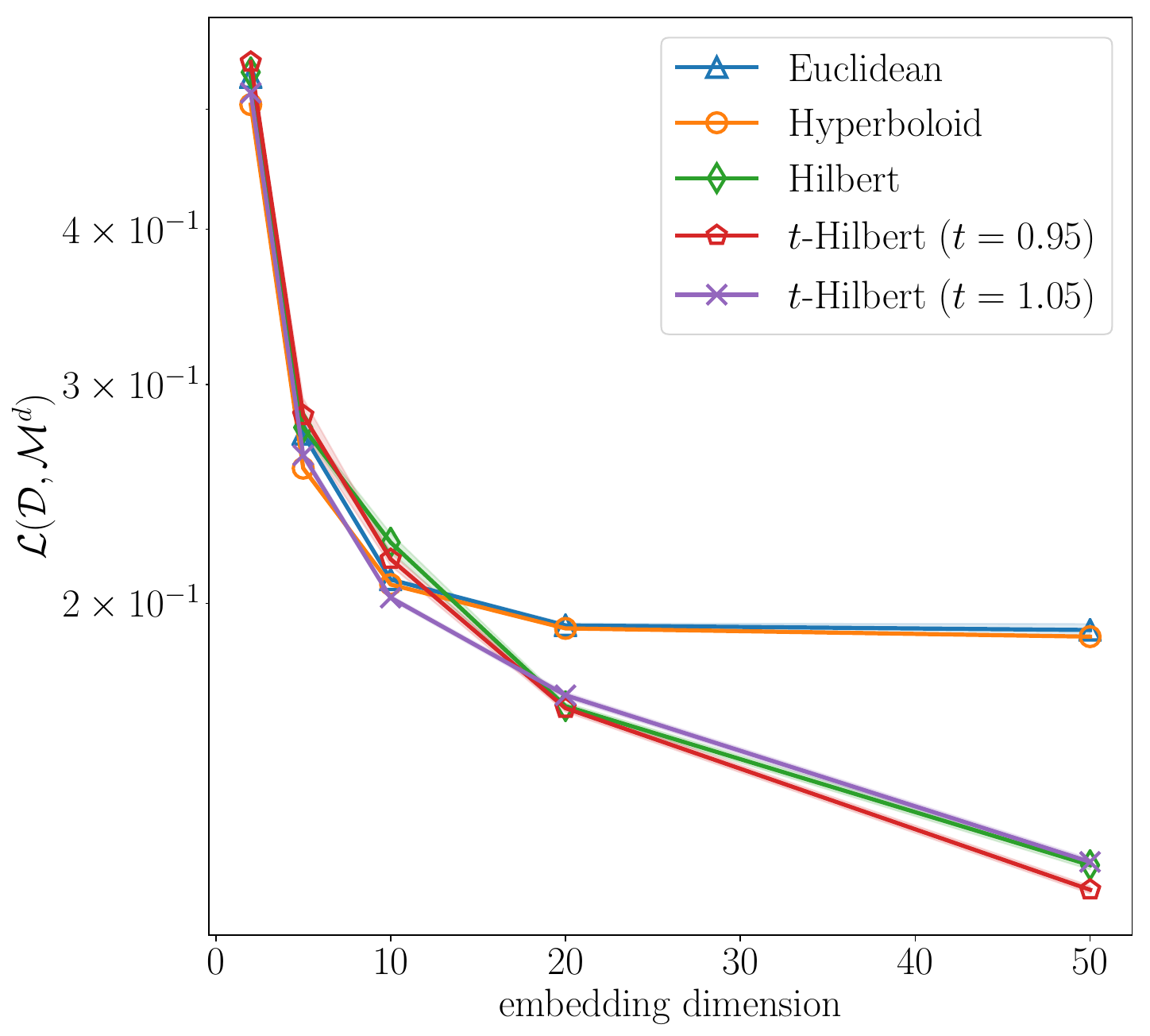}}
    \subfigure[Baraba\'asi-Albert graphs ($n=200$, $m=2$)]{\includegraphics[width=0.25\linewidth]{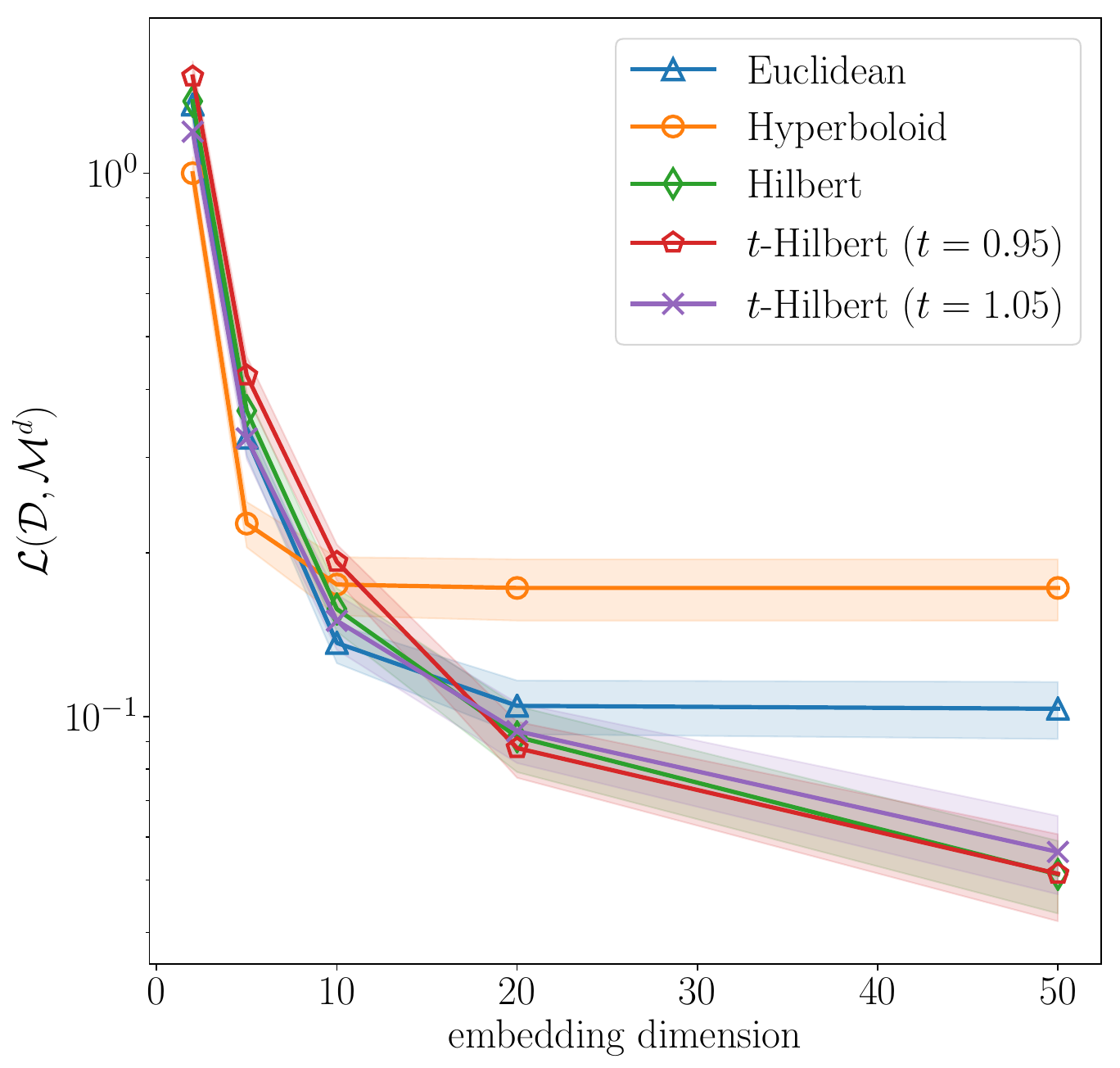}}
    \vspace{-0.2cm}
    \caption{Embedding loss across different embedding dimensions on three datasets.}
    \label{fig:emb}
    \end{center}
    \vspace{-0.5cm}
\end{figure*}

\subsection{Non-linear Embeddings of \acrotem s}
We show an isometry of the co-simplex of \acrotem s to a surface with a corresponding distance $(\tilde{V}^d_t, \|\cdot\|_{\tNH})$. Let $\tilde{V}^d_t = \{\ve{v} \in \mathbb{R}^d\,\vert\, \ve{v}\cdot \check{f}^*(\ve{v})^{1-t} = 0\}$ be the surface of $d$-dimensional vectors that are orthogonal to the normal of the co-simplex at the corresponding \acrotem\, given by the tempered softmax link~\eqref{eq:tempered-softmax}. This surface is the generalization of the linear vector space defined for the Hilbert simplex geometry at $t=1$. We define the distance $\|\cdot\|_{\tNH}$ in $\tilde{V}^d_t$ by first introducing the ball of radius $r$ as
\begin{equation}
    \label{eq:ball}
    B^r_{\tilde{V}^d_t} = \{\ve{u} \in \mathbb{R}^d: \vert u_i \ominus_t u_j \vert \leq r, \forall i \neq j\}\,.
\end{equation}
The distance between $\ve{v}, \ve{v}' \in \tilde{V}^d_t$ is then defined as
\begin{equation}
    \label{eq:t-nh}
\begin{split}
    \rho_{\tNH}(\ve{v}, \ve{v}') & = \Vert \ve{v} \ominus_t \ve{v}'\Vert_{\tNH} = \inf\{\tau:\, \ve{v}\ominus_t\ve{v}' \in B^{\tau}_{\tilde{V}^d_t}\}\,.
    \end{split}
\end{equation}
We can now establish our first isometry result for discrete \acrotem s.
\begin{theorem}
\label{thm:unconst}
The space of discrete \acrotem s with the $t$-Hilbert distance $(\tilde{\Delta}_t^d, \rho_{\tHG})$ is isometric to $(\tilde{V}^d_t, \rho_{\tNH})$ via the constrained overparameterized representation~\eqref{eq:thetc}.
\end{theorem}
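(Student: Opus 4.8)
The plan is to exhibit the map $\Psi:\tilde{\Delta}_t^d\to\tilde{V}^d_t$ given by the constrained overparameterized representation, $\Psi(\tilde{\ve p})=\check{\ve\theta}(\tilde{\ve p})=\log_t\frac{\tilde{\ve p}}{\tilde{\lambda}_t(\tilde{\ve p})}$ from \eqref{eq:thetc}, and to verify in turn that (a) $\Psi$ really lands in $\tilde{V}^d_t$, (b) $\Psi$ is a bijection, and (c) $\rho_{\tHG}(\tilde{\ve p},\tilde{\ve q})=\rho_{\tNH}(\Psi(\tilde{\ve p}),\Psi(\tilde{\ve q}))$ for all $\tilde{\ve p},\tilde{\ve q}\in\tilde{\Delta}_t^d$. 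For (a), I would first record the inversion identity $\check{f}^*_t(\check{\ve\theta}(\tilde{\ve p}))=\tilde{\ve p}$ on $\tilde{\Delta}_t^d$: this is a one-line check from \eqref{eq:tempered-softmax}, since $\exp_t\check{\theta}_i=\tilde{p}_i/\tilde{\lambda}_t(\tilde{\ve p})$ and the constraint $\sum_j\tilde{p}_j^{1/t^*}=1$ makes the common factor $\tilde{\lambda}_t(\tilde{\ve p})$ cancel out of numerator and normalizer. Second, $\check{\ve\theta}(\tilde{\ve p})$ lies in the tangent space $T_{\tilde{\ve p}}\tilde{\Delta}_t^d=\{\ve u:\tilde{\ve p}^{1-t}\cdot\ve u=0\}$, being fixed by the projector $\Mat P_t(\tilde{\ve p})$ as recorded just before Proposition~\ref{prop:lambda}. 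Combining, $\check{\ve\theta}(\tilde{\ve p})\cdot\check{f}^*_t(\check{\ve\theta}(\tilde{\ve p}))^{1-t}=\check{\ve\theta}(\tilde{\ve p})\cdot\tilde{\ve p}^{1-t}=0$, so $\Psi(\tilde{\ve p})\in\tilde{V}^d_t$ by definition of that surface.

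For (b), injectivity of $\Psi$ is immediate from $\check{f}^*_t\circ\Psi=\mathrm{id}$. For surjectivity, given $\ve v\in\tilde{V}^d_t$ put $\tilde{\ve p}\defeq\check{f}^*_t(\ve v)$, which lies in $\tilde{\Delta}_t^d$ directly from \eqref{eq:tempered-softmax}. Both $\ve v$ and $\Psi(\tilde{\ve p})$ are preimages of $\tilde{\ve p}$ under $\check{f}^*_t$, hence by Proposition~\ref{prop:temp-soft-inv} they differ by a scalar multiple of the normal direction $\hat{\ve n}\propto\tilde{\ve p}^{1-t}$; but $\ve v\in\tilde{V}^d_t$ means $\ve v\cdot\tilde{\ve p}^{1-t}=0$ and $\Psi(\tilde{\ve p})\cdot\tilde{\ve p}^{1-t}=0$, while $\tilde{\ve p}^{1-t}\cdot\tilde{\ve p}^{1-t}\neq 0$, so that multiple must be $0$ and $\ve v=\Psi(\tilde{\ve p})$.

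For (c) — the heart of the argument — write $\tilde{\lambda}_p\defeq\tilde{\lambda}_t(\tilde{\ve p})$, $\tilde{\lambda}_q\defeq\tilde{\lambda}_t(\tilde{\ve q})$. Componentwise, $w_i\defeq\check{\theta}_i(\tilde{\ve p})\ominus_t\check{\theta}_i(\tilde{\ve q})=\log_t\tfrac{\tilde{p}_i}{\tilde{\lambda}_p}\ominus_t\log_t\tfrac{\tilde{q}_i}{\tilde{\lambda}_q}=\log_t\tfrac{\tilde{p}_i\tilde{\lambda}_q}{\tilde{q}_i\tilde{\lambda}_p}$, using $\log_t a\ominus_t\log_t b=\log_t(a/b)$. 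Using associativity/commutativity of the deformed algebra \eqref{eq:t-plus}--\eqref{eq:zero-element}, i.e. $(a_i\ominus_t b_i)\ominus_t(a_j\ominus_t b_j)=(a_i\ominus_t a_j)\ominus_t(b_i\ominus_t b_j)$, the constants $\tilde{\lambda}_p,\tilde{\lambda}_q$ cancel and $w_i\ominus_t w_j=\log_t\tfrac{\tilde{p}_i}{\tilde{p}_j}\ominus_t\log_t\tfrac{\tilde{q}_i}{\tilde{q}_j}=\log_t\tfrac{\tilde{p}_i\tilde{q}_j}{\tilde{p}_j\tilde{q}_i}=\log_t\tfrac{\tilde{p}_i/\tilde{q}_i}{\tilde{p}_j/\tilde{q}_j}$. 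By \eqref{eq:ball} and \eqref{eq:t-nh}, $\rho_{\tNH}(\Psi(\tilde{\ve p}),\Psi(\tilde{\ve q}))=\max_{i\neq j}|w_i\ominus_t w_j|$, and a short case analysis on the sign of $1-t$ (using monotonicity of $\log_t$ and $\log_t 1=0$) identifies this maximum with $\log_t\tfrac{\max_k\tilde{p}_k/\tilde{q}_k}{\min_k\tilde{p}_k/\tilde{q}_k}$, which is exactly $\rho_{\tHG}(\tilde{\ve p},\tilde{\ve q})$ by \eqref{eq:t-hilbert-dist-simplex}.

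The main obstacle is the bookkeeping in step (c): since the $t$-algebra is not distributive and $\ominus_t$ is not antisymmetric, the cancellation of the normalizers $\tilde{\lambda}_p,\tilde{\lambda}_q$ and the reduction of the family of pairwise $t$-differences to the single cross-ratio $\max_k/\min_k$ must be justified through the identities \eqref{eq:t-plus}--\eqref{eq:zero-element} rather than by analogy with ordinary arithmetic; in particular, one has to be careful that the absolute value in \eqref{eq:ball} ranges over \emph{ordered} pairs $i\neq j$, so that the Minkowski gauge collapses onto the $t$-Hilbert cross-ratio and not onto its reciprocal. A secondary technical point is the fiber argument in (b), which relies on the precise normal direction $\hat{\ve n}\propto\tilde{\ve p}^{1-t}$ supplied by Proposition~\ref{prop:temp-soft-inv} together with the fact that this direction is transverse to the tangent space $T_{\tilde{\ve p}}\tilde{\Delta}_t^d$.
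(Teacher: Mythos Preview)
Your argument is correct and its core step (c) is exactly the paper's computation: reducing the pairwise $t$-differences of the $\check{\ve\theta}$-coordinates, after the normalizers $\tilde\lambda_p,\tilde\lambda_q$ cancel, to $\log_t\frac{\max_i \tilde p_i/\tilde q_i}{\min_i \tilde p_i/\tilde q_i}=\rho_{\tHG}(\tilde{\ve p},\tilde{\ve q})$. The paper simply asserts bijectivity, whereas you supply the details in (a)--(b); one minor point is that Proposition~\ref{prop:temp-soft-inv} gives only invariance of $\check f^*_t$ along $\hat{\ve n}$, not the full fiber description, so your surjectivity step tacitly needs the easy converse that $\check f^*_t(\ve v)=\tilde{\ve p}$ forces $\exp_t v_i=Z\tilde p_i$ for some $Z>0$, whence $\ve v-\check{\ve\theta}(\tilde{\ve p})\in\mathbb R\,\tilde{\ve p}^{1-t}$.
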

Next, we introduce an extension of the variation semi-norm, defined in~\citet{NLEHilbert-2023}. Let
\begin{equation}
    \Vert \ve{x}\Vert_{\tvar} = \max_i x_i \ominus_t \min_i x_i\,.
\end{equation}
The function $\Vert \cdot\Vert_{\tvar}$ is positive definite and satisfies $t$-triangle inequality, but does not satisfy absolute homogeneity. Moreover, similar to its $t=1$ base case, $\Vert r\, \ve{1}\Vert_{\tvar} = 0$ for any $r \in \mathbb{R}$ (see the appendix). Our second result establishes an isometry for the discrete \acrotem s to $\mathbb{R}^d$ via the semi-norm $\|\cdot\|_{\tvar}$.
\begin{theorem}
\label{thm:const}
The space of discrete \acrotem s with the $t$-Hilbert distance $(\tilde{\Delta}_t^d, \rho_{\tHG})$ is isometric to $(\mathbb{R}^d, \rho_{\tvar})$ via the unconstrained overparameterized representation mapping given by $\ve{\theta} = \log_t \tilde{\ve{p}} \in \mathbb{R}^d$ where
\begin{equation}
    \rho_{\tvar}(\ve{\theta}, \ve{\theta}') = \Vert \ve{\theta} \ominus_t \ve{\theta}' \Vert_{\tvar}\,.
\end{equation}
\end{theorem}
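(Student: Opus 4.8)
The plan is to verify the claimed isometry by reducing it to a one-line coordinatewise computation, using the closed form of $\rho_{\tHG}$ in~\eqref{eq:t-hilbert-dist-simplex} and the definition of the semi-norm $\Vert\cdot\Vert_{\tvar}$. Write $\Theta(\tilde{\ve{p}}) = \log_t\tilde{\ve{p}}$ for the proposed map. Since $t<2$, the function $\log_t$ is finite-valued and strictly increasing on $(0,\infty)$, so $\Theta$ is a well-defined injection of $\tilde{\Delta}_t^d$ into $\mathbb{R}^d$; the only point needing a separate (short) argument is the precise sense in which its image fills $(\mathbb{R}^d,\rho_{\tvar})$, which I give last.

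First I would compute the vector inside the semi-norm. By the identity $\log_t a \ominus_t \log_t b = \log_t(a/b)$ recorded right after~\eqref{eq:t-minus}, applied coordinatewise, $\big(\log_t\tilde{\ve{p}}\ominus_t\log_t\tilde{\ve{q}}\big)_i = \log_t(\tilde{p}_i/\tilde{q}_i)$. Because $\log_t$ is strictly increasing, $\max_i\log_t(\tilde{p}_i/\tilde{q}_i) = \log_t\big(\max_i \tilde{p}_i/\tilde{q}_i\big)$ and likewise for the minimum, so by the definition of $\Vert\cdot\Vert_{\tvar}$ and one further application of the same $\ominus_t$ identity, $\rho_{\tvar}(\log_t\tilde{\ve{p}},\log_t\tilde{\ve{q}}) = \log_t\big(\max_i \tilde{p}_i/\tilde{q}_i\big) \ominus_t \log_t\big(\min_i \tilde{p}_i/\tilde{q}_i\big) = \log_t\!\big(\max_i (\tilde{p}_i/\tilde{q}_i) \,/\, \min_i (\tilde{p}_i/\tilde{q}_i)\big)$, which is exactly $\rho_{\tHG}(\tilde{\ve{p}},\tilde{\ve{q}})$ by~\eqref{eq:t-hilbert-dist-simplex}. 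This establishes that $\Theta$ is distance-preserving.

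For the ``onto $(\mathbb{R}^d,\rho_{\tvar})$'' part I would use that $\rho_{\tvar}$ is a \emph{pseudometric}: $\Vert\ve{x}\Vert_{\tvar}=0$ exactly when all coordinates of $\ve{x}$ coincide, so $\ve{\theta}$ and $\ve{\theta}'$ are $\rho_{\tvar}$-identified iff $\ve{\theta}=\ve{\theta}'\oplus_t r\ve{1}$ for some $r$, and $\rho_{\tvar}$ is invariant under $\oplus_t r\ve{1}$ since the $r\ve{1}$ terms cancel in $\ominus_t$. It then suffices to show the image of $\Theta$ meets every such orbit: given $\ve{\theta}$, asking $\Theta(\tilde{\ve{p}})=\ve{\theta}\oplus_t r\ve{1}$ forces $\tilde{p}_i=\exp_t(\theta_i)\exp_t(r)$, and the constraint $\tilde{\ve{p}}\in\tilde{\Delta}_t^d$ pins down $\exp_t(r)=\big(\sum_i\exp_t(\theta_i)^{1/t^*}\big)^{-t^*}$; this is a positive real, hence in the range of $\exp_t$ on its natural domain (which is $(0,\infty)$ for both $t<1$ and $1<t<2$), so a valid $r$ exists. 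Combining with the previous paragraph, $\Theta$ induces an isometry between $(\tilde{\Delta}_t^d,\rho_{\tHG})$ and $(\mathbb{R}^d,\rho_{\tvar})$. Alternatively one can route through Theorem~\ref{thm:unconst}: from~\eqref{eq:thetc} one checks $\check{\ve{\theta}}=\log_t\tilde{\ve{p}}\oplus_t\lambda_t\ve{1}$, and since both $\rho_{\tNH}$ and $\rho_{\tvar}$ are invariant under $\oplus_t r\ve{1}$ the two isometries coincide.

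Since the core identity is a two-line manipulation, there is no deep obstacle. The only points requiring care are the interchange of $\log_t$ with $\max$/$\min$ (precisely the strict monotonicity of $\log_t$, valid throughout $t<2$, together with the observation that all arguments $\tilde{p}_i/\tilde{q}_i$ are strictly positive so every $\ominus_t$ above is defined) and the range-of-$\exp_t$ bookkeeping in the last step, where the sign of $1-t$ flips the shape of the domain of $\exp_t$; I would treat $t<1$ and $t>1$ separately there.
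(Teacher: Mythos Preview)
Your argument is correct and follows essentially the same route as the paper: both reduce the claim to the coordinatewise identity $\log_t\tilde p_i\ominus_t\log_t\tilde q_i=\log_t(\tilde p_i/\tilde q_i)$, commute $\log_t$ with $\max/\min$ by monotonicity, and land on~\eqref{eq:t-hilbert-dist-simplex}. Your treatment is in fact more careful than the paper's on the bijectivity issue---the paper simply asserts that $\log_t$ is bijective from $\tilde\Delta_t^d$ to $\mathbb{R}^d$, whereas you correctly recognize that $\rho_{\tvar}$ is only a pseudometric and show that the image of $\Theta$ meets every $\oplus_t r\ve{1}$-orbit.
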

\begin{remark}
In fact, we have
\[
\rho_{\tvar}(\ve{\theta}, \ve{\theta}') = \rho_{\tvar}(\check{\ve{\theta}}, \check{\ve{\theta}}') = \rho_{\tNH}(\ve{\theta}, \ve{\theta}')\,,
\]
and the two distances are interchangeable for the construction.
\end{remark}

\subsection{Differentiable Approximations}
The $\max$ operator in the $t$-Funk and $t$-Hilbert distance is non-differentiable. Similar to \citet{NLEHilbert-2023}, we explore differentiable approximations of the distances via smoothening the $\max$ function. Our approximation is based on the tempered log-sum-exp function
\begin{equation}
\label{eq:lse}
    \text{LSE}_{t}(\ve{x}, T) = \frac{1}{T}\, \log_t \sum_i \exp_t(T x_i)\,,
\end{equation}
using which we define the differentiable $t$-Funk and $t$-Hilbert distance with smoothing factor $T$ via the $\text{LSE}_{t}$ approximation by setting $x_i = \log_t \frac{\tilde{p}_i}{\tilde{q}_i}$ as
\begin{align*}
    \rho_{\tFDdiff}(\tilde{\ve{p}}, \tilde{\ve{q}}, T) & = \text{LSE}_{t}(\log_t\frac{\tilde{\ve{p}}}{\tilde{\ve{q}}}, T)\,,\\
    \rho_{\tHGdiff}(\tilde{\ve{p}}, \tilde{\ve{q}}, T) & = \rho_{\tFDdiff}(\tilde{\ve{p}}, \tilde{\ve{q}}, T) \oplus_t \rho_{\tFDdiff}(\tilde{\ve{q}}, \tilde{\ve{p}}, T)\,.
\end{align*}
We delegate the approximation error of the differentiable approximation as well as further experimental analysis to the appendix. The balls of various radii with respect to $\rho_{\tHGdiff}$ for different smoothing factors $T$ are shown in Figure~\ref{fig:diff-simplex}.
\subsection{Comparing Different Geometries}
We compare the representation quality of different geometries for embedding a set of points from three different datasets: i) randomly sampled points in $\bbR^{500}$, ii) Erd\"os-R\'enyi graphs, and iii) Baraba\'asi-Albert graphs. The datasets are generated according to~\citet{NLEHilbert-2023}, and the details are given in the appendix. We plot the approximation loss in Figure~\ref{fig:emb}. We observe that the $t$-Hilbert distance provides some advantage compared to the other geometries for slightly larger or smaller values of $t$ than one.

\section{Conclusion}


Discrete $t$-tempered exponential measures (TEMs) $\tilde{p}$ are positive measures  obtained from maximum entropy 
defined according to $t$-logarithms/$t$-exponentials~\citep{amid2023clustering}.
TEMs form a family $\tilde\Delta_t^d$ 
with normalized codensities $\tilde{p}^{2-t}$ falling inside the probability simplex $\Delta^d$.
In this work, we first presented the minimal and overparameterized constrained/unconstrained parameterizations of discrete TEMs.
We then described their embeddings   using a generalization of Hilbert's geometry
with corresponding $t$-Hilbert distance  satisfying the information monotonicity (Theorem~1).
We elicited isometric embeddings of $t$-Hilbert TEM spaces in Theorem~2 and Theorem~3.
Last, we reported a differentiable approximation of the $t$-Hilbert distances for machine learning and experimentally studied its approximation properties and embedding performance (Figure~5).
Besides, this work extends the $t$-calculus~\citep{nlwGA} by unraveling tempered extensions of the derivative and Riemannian integration which should prove useful in other settings.

An interesting open problem is to develop matrix Funk distances, properly integrating the case when matrices have different eigensystems. This is non-trivial because intuitively (all other things being equal), the distance should be largest when the eigensystems are the same,
mimicking the fact that learning is often hardest in that case \citep{freelunch}.

\bibliographystyle{plainnat}
\bibliography{refs}

\newpage
\onecolumn
\appendix

\section{Deformed $t$-logarithms and $t$-exponentials, and $t$-algebra}\label{app:deformedlogexp}

By observing that the ordinary logarithm $\log(x)$ can be expressed as the definite Riemannian integral
$\log(x)=\int_1^x \frac{1}{u}\du$ measuring the oriented area underneath the graph of the strictly decreasing function $f(u)=\frac{1}{u}$, one can generalize the logarithm by using any arbitrary strictly decreasing positive function $f(u)$~\citep{nGT}:
$\log_f(x)=\int_1^x \frac{1}{f(u)}\du$ for $x>0$. Hence, we always have $\log_f(1)=0$.
In particular, by choosing $f_t(u)=\frac{1}{u^t}$ for $t>0$, we obtain the $t$-logarithm~\citep{texp1} $\log_t(x)=\frac{1}{1-t} (x^{1-t}-1)$ for $t\not=1$ and $\log_1(u)=\log u$ when $t=1$.
The reciprocal function of the $t$-logarithm is called the $t$-deformed exponential $\exp_t(y)$:
$\frac{1}{1-t} (x^{1-t}-1)=y \Rightarrow x=\exp_t(y)=(1+(1-t)y)^{\frac{1}{1-t}}$ provided that $1+(1-t)y\geq 0$.
Hence, we define $\exp_t(y)=[(1+(1-t)y)^{\frac{1}{1-t}}]_+$ for $t\not =1$ where $[u]_+=\max(u,0)$, and $\exp_1(y)=\exp(y)$.
Notice that clipping of the $t$-exponentials may occur depending on the value of $t$.

\section{Tempered Funk Distance from a Weak Finsler Structure}\label{sec-temp-Funk}
We review some preliminary material related to Finsler geometry. The interested reader is referred to~\citet{troyanov2014origin,HB-FinslerHilbert-2014} for further details and historical remarks. 

\subsection{Weak Finsler Structure on a Manifold}
\begin{definition}
A weak Finsler structure on a smooth Manifold $M$ is a lower-semicontinuous function $F: TM \rightarrow [0, \infty]$, called the Lagrangian, such that for every point $\ve{x} \in M$ and and tangent vectors $\ve{\xi}, \ve{\xi}_1, \ve{\xi}_2 \in T_{\ve{x}}M$ the following properties hold:
\begin{enumerate}
    \item $F(\ve{x}, \lambda\,\ve{\xi}) = \lambda\, F(\ve{x}, \ve{\xi})$ for all $\lambda \geq 0$
    \item $F(\ve{x}, \ve{\xi}_1 + \ve{\xi}_2) \leq F(\ve{x}, \ve{\xi}_1) + F(\ve{x}, \ve{\xi}_2)$\quad (triangular inequality)
\end{enumerate}
In addition, if $F: TM \rightarrow [0, \infty]$ is finite and continuous and if $F(\ve{x}, \lambda\,\ve{\xi}) > 0$ for $\ve{\xi} \neq \ve{0}$, then $F$ is called a Finsler structure.
\end{definition}
Note that a weak Finsler structure is a weak Minkowski norm at every point $\ve{x} \in M$. For instance, a weak Finsler structure on $T_{\ve{x}}M \subseteq \mathbb{R}^n$ can be obtained from a weak Minkowski norm $F_0: \mathbb{R}^n \rightarrow \mathbb{R}$ by setting 
\[
F(\ve{x}, \ve{\xi}) = F_0(\ve{\xi})\, .
\]
Given a Finsler structure $F$ on $M$, its domain $\mathcal{D}_F \subseteq TM$ is defined as the set of all vectors with finite $F$-norm. The \emph{unit domain} $\mathcal{U}$ is defined as the bundle of all tangent unit balls,
\[
\mathcal{U} = \{(\ve{x}, \ve{\xi}) \in TM\vert F(\ve{x}, \ve{\xi}) < 1\}\,.
\]
The restriction $\mathcal{U}_{\ve{x}} = \mathcal{U} \cap T_{\ve{x}}M$ is a bounded convex set. A weak Finsler structure can be recovered from its unit domain via
\[
F(\ve{x}, \ve{\xi}) = \inf_{\tau > 0} \frac{1}{\tau}\ve{\xi} \in \mathcal{U}_{\ve{x}}\,.
\]


Our construction generalizes the notion of length in a Finsler geometry.
Given a Finsler structure $F$ on $M$, the forward-length of a smooth curve $\ve{\gamma}: [0, 1] \rightarrow M$ is defined as
\begin{equation}
    \ell^+(\ve{\gamma}) = \int_0^1 F(\ve{\gamma}(s), \dot{\ve{\gamma}}(s))\, \mathrm{d}s\,\label{deflength}.
\end{equation}
We may also define the backward-length
\begin{equation}
    \ell^-(\ve{\gamma}) = \int_0^1 F(\ve{\gamma}(s), -\dot{\ve{\gamma}}(s))\, \mathrm{d}s\,\label{deflength}.
\end{equation}
The forward- and backward-lengths coincide when the Minkowski norms are symmetric but differ otherwise.
This explains the fact that Finsler distances are quasi-metrics that may not be necessarily symmetric (e.g., Funk) although they satisfy the triangle inequality. To contrast with oriented Finsler distances, Riemannian distances are always symmetric and hence define proper metrics (see~\citet{matveev2012can,arnaudon2012medians}).

To extend these definitions, we first introduce the notion of a $t$-derivative and $t$-Riemann integral. We will simply refer to the forward-length of a curve as its length and omit the $+$ superscript. 

\subsection{From $t$-Derivative and $t$-Riemann Integral to $t$-Length}

\noindent \textbf{$t$-derivative} We generalize the notion of the standard derivative a scalar function $f: \mathbb{R} \rightarrow \mathbb{R}$ to a $t$-derivative.
\begin{definition}\label{tder}
  Suppose $f$ is defined on an open neighborhood of some $x \in \mathbb{R}$. When it exists, the $t$-derivative of $f$ in $x$ is the real
  \begin{equation}
    \label{eq:t-derivative}
    \mathrm{D}_t f(x) \defeq \lim_{\delta \rightarrow 0} \frac{f(x + \delta) \ominus_t f(x)}{\delta}.
\end{equation}
  \end{definition}
$\mathrm{D}_t$ generalizes the notion of standard derivative for $t=1$. Note that we can obtain a direct expression of $\mathrm{D}_t f$ by using the definition of $\ominus_t$: 
\begin{equation}
\label{eq:t-derivative-simple}
    \mathrm{D}_t f(x) = \lim_{\delta \rightarrow 0} \frac{1}{\delta} \cdot \frac{f(x+\delta) - f(x)}{1+(1-t)f(x)} = \frac{1}{1+(1-t)f(x)} \cdot \lim_{\delta \rightarrow 0} \frac{f(x+\delta) - f(x)}{\delta} = \frac{f'(x)}{1+(1-t)f(x)}.
\end{equation}
For instance, using \eqref{eq:t-derivative-simple}, we can find the functions whose $t$-derivative is constant: 
\begin{equation}
  \label{eq:t-const-fun}
  \mathrm{D}_t f(x) = K.
\end{equation}
This involves solving a simple differential equation $f' - (1-t)K f = K$, whose set of solutions is easily found to be
\begin{equation}
    \label{eq:t-const-fun}
    f(x) = V \exp\left((1-t)K x\right) - \frac{1}{1-t} \quad (V\mbox{ does not depend on }x).
  \end{equation}
Remark the difference between this function and the solution for $t=1$, which would be noted $f(x) = Kx + V$, and the risk to diverge in $t=1$. This can be prevented if we enforce the solution to converge to the solution of the conventional derivative for $t\rightarrow 1$. To get there, we compute its Taylor expansion around $t=1$,
  \begin{equation}
    \label{eq:t-const-fun2}
    f(x) = V - \frac{1}{1-t} + VK x (1-t) + \frac{V K^2 x^2(1-t)^2}{2} + o((1-t)^2).
  \end{equation}
  We see that there is only one choice, $V = 1/(1-t)$, which yields the desired series, $f(x) = Kx + o(1-t)$, and we have a unique solution to the differential equation that satisfies the behavior,
  \begin{equation}
    \label{eq:t-const-funfinal}
    f(x) = \frac{ \exp\left((1-t)K x\right) - 1}{1-t} \quad \left(\mbox{unique solution for } \mathrm{D}_t f(x) = K \wedge \lim_{t\rightarrow 1} f(x) = Kx\right).
  \end{equation}
Proceeding in a similar way, we would find that the unique function to $\mathrm{D}_t \circ \mathrm{D}_t f(x) = K$ with $\lim_{t\rightarrow 1} f(x) = (K/2) \cdot x^2$ (the second order derivative of $f$ is a fixed constant, $K$) is 
\begin{equation}
    \label{eq:t-const2-fun}
    f(x) = \frac{\exp\Big(\frac{\frac{1}{K} \exp((1 - t)\,K\, x) - (1 - t)\,x - \frac{1}{K}}{1 - t}\Big) - 1}{1 - t}.
  \end{equation}
Remark the non-trivial variations of functions in \eqref{eq:t-const-fun2}, \eqref{eq:t-const2-fun}. Using \eqref{eq:t-derivative-simple} or by simple calculation, we obtain
\begin{equation}
    \label{eq:t-der-log}
    \mathrm{D}_t \log_t(x) \defeq \lim_{\delta \rightarrow 0} \frac{\log_t(\frac{x + \delta }{x})}{\delta} = \lim_{\delta \rightarrow 0} \frac{\frac{\delta }{x}}{\delta } = \frac{1}{x}.
\end{equation}
Thus, for $\delta \ll x$, we can define an expansion
\begin{equation} 
    \log_t(x + \delta) \approx \log_t(x) \oplus_t \frac{\delta}{x}.
\end{equation}
\noindent \textbf{$t$-Riemann Integral} We wish to define a generalization of Riemann integration to the tempered algebra and for this objective, given an interval $[a,b]$ and a division $\Delta$ of this interval using $n+1$ reals $x_0 \defeq a <x_1 < ... < x_{n-1} < x_n \defeq  b$, we define the Riemann $t$-sum over $[a,b]$ using $\Delta$ as
\begin{eqnarray*}
S_\Delta(f) & \defeq & (\mbox{\Large $\oplus$}_t)_{i=1}^n |\mathbb{I}_i| f(\xi_i), \quad (\mathbb{I}_i \defeq [x_{\pi(i)-1}, x_{\pi(i)}], |\mathbb{I}_i | \defeq x_{\pi(i)} - x_{\pi(i)-1}), \xi_i \in \mathbb{I}_i,
\end{eqnarray*}
where $\pi$ is any permutation of $[n]\defeq \{1, 2, ..., n\}$ ($\oplus_t$ is commutative, so changing $\pi$ does not change the result; we do not put the $\xi_i$s in the argument of $S_.$ for readability). In the classical definition, $\pi = \mathrm{id}$. It will be useful to remember that we can fix beforehand the $\xi_i$s so they are given for a given division. Let $s(\Delta) \defeq \max_i |\mathbb{I}_i|$ denote the step of division $\Delta$. The conditions for $t$-Riemann integration are the same as for $t=1$.
\begin{definition}\label{tint}
  A continuous function $f : [a,b] \rightarrow \mathbb{R}$ is $t$-Riemann integrable iff there exists $L$ such that
\begin{eqnarray}
\forall \epsilon > 0, \exists \delta > 0 : \forall \mbox{ division } \Delta \mbox{ with }  s(\Delta) < \delta, \left|S_\Delta(f) - L\right| < \varepsilon. \label{eq-tRiem}
  \end{eqnarray}
  When this happens, we note
  \begin{equation}
\label{eq:t-int-def}
    \riemannint{t}{a}{b} f(x) \mathrm{d}_t x = L.
  \end{equation}
  \end{definition}

    \noindent \textbf{$t$-Riemann Integral} We define
\begin{equation}
\label{eq:t-int}
    \riemannint{t}{a}{b} f(x) \mathrm{d}_t x = F(b) \ominus_t F(a),
  \end{equation}
  as the \emph{$t$-integral} of $f$, where $F$ is called a \emph{$t$-primitive} s.t. $\mathrm{D}_t F = f$. For instance, from~\eqref{eq:t-der-log} and for $a, b \geq 0$, we have
\begin{equation}
    \riemannint{t}{a}{b} \frac{1}{x}\, \mathrm{d}_t x = \log_t b\, \ominus_t\, \log_t a = \log_t\frac{b}{a}\,.
  \end{equation}

      \noindent \textbf{$t$-Length} We use~\eqref{eq:t-int} to define a generalization of length to a $t$-length for smooth curve on a Finsler manifold. Let $\ve{\gamma}: [0, 1] \rightarrow M$ denote a smooth curve on a Finsler manifold with a Finsler structure $F$. We define the (forward) $t$-length of the curve $\ve{\gamma}$ as
\begin{equation}
    \label{eq:t-length}
   \ell^{(t)}(\ve{\gamma}) \defeq \riemannint{t}{0}{1} F(\ve{\gamma}(s), \dot{\ve{\gamma}}(s))\, \mathrm{d}_t s\,.
 \end{equation}
\subsection{Tautological Finlser Structure of a Convex Set}
\begin{definition}
Given a proper convex set $\Omega$, the tautological weak Finsler structure $F_f$ on $\Omega$ is the Finsler structure for which the unit ball at a point $\ve{x} \in \Omega$ is the domain $\Omega$ itself, with the point $\ve{x}$ as the center, thus, resulting the unit domain
\begin{equation}
    \mathcal{U} = \{(\ve{x}, \ve{\xi})\in T\Omega\,\vert\, \ve{\xi} \in \Omega - \ve{x}\}\,,
\end{equation}
where the $\Omega - \ve{x}$ should be viewed as a translation of the convex set $\Omega$ by the point $\ve{x}$. The Lagrangian is then given by
\begin{equation}
\label{eq:taut}
    F_f(\ve{x}, \ve{\xi}) = \inf\{\tau > 0\,\vert\, \ve{x} + \frac{\ve{\xi}}{\tau} \in \Omega\}\,,
\end{equation}
Consequently, $\ve{x} + \frac{\ve{\xi}}{F_f(\ve{x}, \ve{\xi})} \in \partial\Omega$, otherwise $F_f(\ve{x}, \ve{\xi})=0$ if the ray $\ve{x} + \mathbb{R}_+\ve{\xi}$ is contained in $\Omega$.
\end{definition}
In can be shown that the tautological Finsler structure of a half-space $\mathcal{H} = \{\ve{x}\in \mathbb{R}^d\,\vert\,\ve{\nu}\cdot\ve{x}\leq c\} \in \mathbb{R}^d$ is given by
\begin{equation}
\label{eq:finsler-halfspace}
    F_f(\ve{x}, \ve{\xi}) = \max\Big(\frac{\ve{\nu}\cdot\ve{\xi}}{c - \ve{\nu}\cdot\ve{x}},\, 0\Big)\,.
\end{equation}
\begin{theorem}
\label{thm:funk}
The tautological distance in a proper convex domain $\Omega \in \mathbb{R}^d$ induced by the $t$-length of the ray is given by
\begin{equation}
\label{eq:rhof}
    \rho_f(\ve{r}, \ve{s}) = \log_t \frac{\Vert \ve{r} - \bar{\ve{s}}\Vert}{\Vert \ve{s} - \bar{\ve{s}}\Vert}\,,
\end{equation}
where $\bar{\ve{s}}$ is the intersection of the ray $R(\ve{r}, \ve{s})$ emanating from $\ve{r}$ and passing through $\ve{s}$ with the boundary $\partial \Omega$. If the ray is contained in $\Omega$, then $\bar{\ve{s}}$ is considered to be a point at infinity and therefore, $\rho_f(\ve{r}, \ve{s}) = 0$.
\end{theorem}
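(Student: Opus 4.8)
The plan is to take $\rho_f(\ve{r},\ve{s})$ to be, as the statement prescribes, the $t$-length $\ell^{(t)}(\ve{\gamma})$ of the straight segment $\ve{\gamma}$ from $\ve{r}$ to $\ve{s}$, to reduce the tautological Lagrangian along $\ve{\gamma}$ to an explicit one-dimensional integrand, and then to evaluate the resulting $t$-integral using the fundamental identity~\eqref{eq:t-int}.

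First I would parametrize $\ve{\gamma}\colon[0,1]\to\Omega$ affinely by $\ve{\gamma}(s)=\ve{r}+s(\ve{s}-\ve{r})$, so that $\dot{\ve{\gamma}}(s)=\ve{s}-\ve{r}$. Since $\Omega$ is convex, $\Omega\cap(\ve{r}\ve{s})$ is an interval containing $\ve{r}$ and $\ve{s}$ in its interior whose forward endpoint is $\bar{\ve{s}}$; write $\bar{\ve{s}}=\ve{r}+t_0(\ve{s}-\ve{r})$ with $t_0>1$ (and $t_0=+\infty$ when the forward ray stays inside $\Omega$). Substituting $\ve{x}=\ve{\gamma}(s)$, $\ve{\xi}=\dot{\ve{\gamma}}(s)$ in~\eqref{eq:taut}, the membership $\ve{\gamma}(s)+\dot{\ve{\gamma}}(s)/\tau\in\partial\Omega$ amounts to $s+1/\tau=t_0$, so that $\ve{x}+\ve{\xi}/\tau\in\Omega$ exactly for $\tau>1/(t_0-s)$, giving
\[
F_f(\ve{\gamma}(s),\dot{\ve{\gamma}}(s))=\frac{1}{t_0-s}\qquad(s\in[0,1]),
\]
which is $0$ in the unbounded case $t_0=+\infty$. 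Equivalently one may read this off from~\eqref{eq:finsler-halfspace} applied to a supporting half-space of $\Omega$ at $\bar{\ve{s}}$, since on the line $(\ve{r}\ve{s})$ only the forward exit point matters.

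Next, by~\eqref{eq:t-length} we have $\ell^{(t)}(\ve{\gamma})=\riemannint{t}{0}{1}\frac{1}{t_0-s}\,\mathrm{d}_t s$, and I would evaluate it by exhibiting the $t$-primitive $H(s)=\log_t\frac{t_0}{t_0-s}$, which is continuous on $[0,1]$ because $t_0>1$ and satisfies $H(0)=\log_t 1=0$. Using~\eqref{eq:t-derivative-simple} one checks $H'(s)=t_0^{1-t}(t_0-s)^{t-2}$ and $1+(1-t)H(s)=\bigl(t_0/(t_0-s)\bigr)^{1-t}=t_0^{1-t}(t_0-s)^{t-1}$, whence $\mathrm{D}_t H(s)=H'(s)/\bigl(1+(1-t)H(s)\bigr)=1/(t_0-s)$. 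Then~\eqref{eq:t-int} yields $\ell^{(t)}(\ve{\gamma})=H(1)\ominus_t H(0)=\log_t\frac{t_0}{t_0-1}\ominus_t 0=\log_t\frac{t_0}{t_0-1}$ by the neutral-element property~\eqref{eq:zero-element}; in the unbounded case the integrand vanishes and $\ell^{(t)}(\ve{\gamma})=0$. Finally, collinearity of $\ve{r},\ve{s},\bar{\ve{s}}$ gives $\|\ve{r}-\bar{\ve{s}}\|=t_0\|\ve{s}-\ve{r}\|$ and $\|\ve{s}-\bar{\ve{s}}\|=(t_0-1)\|\ve{s}-\ve{r}\|$, so $t_0/(t_0-1)=\|\ve{r}-\bar{\ve{s}}\|/\|\ve{s}-\bar{\ve{s}}\|$, which is exactly~\eqref{eq:rhof}; when $t_0=+\infty$ the point $\bar{\ve{s}}$ is at infinity, the ratio tends to $1$, and $\log_t 1=0$ matches $\ell^{(t)}(\ve{\gamma})=0$.

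The one genuinely delicate step is the middle one: making sure $H$ is an admissible $t$-primitive on the whole interval (this is what forces $t_0>1$, i.e.\ $\ve{s}\in\Omega$) so that the fundamental identity~\eqref{eq:t-int} applies, and choosing the normalization $H(0)=0$ so as to sidestep the fact that $\ominus_t$ is not translation-invariant, which would otherwise make $H(1)\ominus_t H(0)$ depend on the choice of primitive. I would also add a one-line remark that although $\ell^{(t)}$ is defined through a fixed $[0,1]$-parametrization, the value obtained is the affine-invariant cross-ratio-type quantity above, so the construction is in fact parametrization independent.
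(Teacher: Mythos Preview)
Your proof is correct and the core computation---parametrize the segment, reduce the tautological Lagrangian along it to $1/(t_0-s)$, exhibit a $t$-primitive, apply~\eqref{eq:t-int}---is exactly what the paper does as well. The differences are organizational. The paper obtains the integrand via Lemma~\ref{lem:finsler-rs} (the propagation formula for $F_f$ along a ray) rather than directly from~\eqref{eq:taut} as you do, and it uses unit-speed parametrization on $[0,\|\ve{s}-\ve{r}\|]$ with the primitive $\log_t\frac{1}{\|\ve{a}-\ve{r}\|-\tau}$ instead of your normalized $H$. The paper also adds a half-space computation together with a sandwich argument (Lemma~\ref{lem:subset}) to identify the segment's $t$-length with the infimum over all curves; you skip this, which is consistent with the theorem statement as written (``induced by the $t$-length of the ray''). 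Your route is slightly more self-contained since it avoids the two auxiliary lemmas.

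One small remark: your caution about primitive-dependence is unnecessary. If $F$ and $G$ are both $t$-primitives of the same $f$, then from~\eqref{eq:t-derivative-simple} one gets $1+(1-t)F=K\bigl(1+(1-t)G\bigr)$ for some constant $K>0$, and a two-line check shows $F(b)\ominus_t F(a)=G(b)\ominus_t G(a)$. So the normalization $H(0)=0$ is convenient but not required; the paper's un-normalized primitive also gives the right answer via $\log_t a\ominus_t\log_t b=\log_t(a/b)$.
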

Note that the tautological distance via the $t$-length in~\eqref{eq:rhof} is identical to the $t$-Funk distance we define in~\eqref{eq:t-funk-dist}. To prove Theorem~\ref{thm:funk}, we first need to restate two lemmas from~\citet{HB-FinslerHilbert-2014}.
\begin{lemma}[\citet{HB-FinslerHilbert-2014}]
\label{lem:subset}
Let $\Omega_1$ and $\Omega_2$ be two convex domains in $\mathbb{R}^d$. If $F_1$ and $F_2$ are the corresponding tautological structures, then 
\[
\Omega_1 \subseteq \Omega_2 \Longleftrightarrow F_1(\ve{x}, \ve{\xi}) \geq F_2(\ve{x}, \ve{\xi})
\]
for all $(\ve{x}, \ve{\xi}) \in T\Omega_1$.
\end{lemma}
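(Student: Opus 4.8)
The plan is to read the tautological Lagrangian~\eqref{eq:taut} as a gauge: for each base point, $F_i(\ve{x},\cdot)$ is the Minkowski functional of the translated convex body $\Omega_i - \ve{x}$, namely $F_i(\ve{x},\ve{\xi}) = \inf S_i(\ve{x},\ve{\xi})$ with admissible set $S_i(\ve{x},\ve{\xi}) = \{\tau > 0 : \ve{x} + \ve{\xi}/\tau \in \Omega_i\}$. Both implications then reduce to elementary monotonicity of this infimum under set inclusion, with the sole subtlety being the behavior of $F_2$ at base points that need not lie in $\Omega_2$.

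For the forward implication ($\Omega_1 \subseteq \Omega_2 \Rightarrow F_1 \ge F_2$), I would fix $(\ve{x},\ve{\xi}) \in T\Omega_1$ and observe that $\Omega_1 \subseteq \Omega_2$ gives $S_1(\ve{x},\ve{\xi}) \subseteq S_2(\ve{x},\ve{\xi})$ pointwise. Taking infima over the smaller set can only increase the value, so $F_1(\ve{x},\ve{\xi}) = \inf S_1 \ge \inf S_2 = F_2(\ve{x},\ve{\xi})$, using the conventions $\inf\emptyset = +\infty$ when the ray never meets the body and $\inf = 0$ when the ray is contained in it. This is just the order-reversing behavior of the gauge under inclusion and needs nothing beyond well-definedness of the two structures.

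For the converse I would argue by contraposition. Assuming $\Omega_1 \not\subseteq \Omega_2$, pick $\ve{x} \in \Omega_1 \setminus \Omega_2$ and produce a direction witnessing $F_1 < F_2$. Since $\Omega_2$ is open and convex and $\ve{x} \notin \Omega_2$, the separating-hyperplane theorem gives $\ve{\nu} \ne \ve{0}$ and $c$ with $\ve{\nu}\cdot\ve{z} \le c \le \ve{\nu}\cdot\ve{x}$ for all $\ve{z} \in \Omega_2$. Then $\ve{\nu}\cdot(\ve{x} + s\ve{\nu}) = \ve{\nu}\cdot\ve{x} + s\|\ve{\nu}\|^2 > c$ for every $s>0$, so the whole ray $\ve{x} + \mathbb{R}_+\ve{\nu}$ avoids $\Omega_2$, whence $S_2(\ve{x},\ve{\nu}) = \emptyset$ and $F_2(\ve{x},\ve{\nu}) = +\infty$. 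On the other hand $\ve{x} \in \Omega_1$ open forces $\ve{x} + s\ve{\nu} \in \Omega_1$ for all sufficiently small $s>0$, so $S_1(\ve{x},\ve{\nu})$ is nonempty and $F_1(\ve{x},\ve{\nu}) < \infty$. This contradicts $F_1 \ge F_2$ on $T\Omega_1$ evaluated at $(\ve{x},\ve{\nu})$, forcing $\Omega_1 \subseteq \Omega_2$.

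The routine direction is the forward one. The main obstacle is the converse, precisely because the hypothesis $F_1 \ge F_2$ is only posited on $T\Omega_1$, so $F_2$ is evaluated at base points $\ve{x}$ a priori outside $\Omega_2$; the tautological structure must therefore be read with $\inf\emptyset = +\infty$, and the separating-hyperplane step is exactly what converts ``$\ve{x}\notin\Omega_2$'' into an escaping ray that witnesses $F_2 = +\infty$ while $F_1$ stays finite. I would also record the mild standing hypotheses ($\Omega_2$ nonempty, the domains proper open convex) under which the extremal values $F_i \in \{0,+\infty\}$ are handled consistently in both directions.
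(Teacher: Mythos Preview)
The paper does not actually prove this lemma; it merely restates it from the cited source \citet{HB-FinslerHilbert-2014} (together with Lemma~\ref{lem:finsler-rs}) and invokes it as a black box in the proof of Theorem~\ref{thm:funk}. There is therefore no in-paper argument to compare against, and your proposal stands on its own.

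Your proof is correct. The forward direction is precisely the order-reversing behavior of the gauge under inclusion of the translated bodies $\Omega_i-\ve{x}$. For the converse, your separating-hyperplane construction is valid, but once you have already committed to the convention $\inf\emptyset=+\infty$ there is a shorter witness: take $\ve{\xi}=\ve{0}$. For $\ve{x}\in\Omega_1$ one has $\ve{x}+\ve{0}/\tau=\ve{x}\in\Omega_1$ for every $\tau>0$, hence $F_1(\ve{x},\ve{0})=0$; whereas if $\ve{x}\notin\Omega_2$ the admissible set $S_2(\ve{x},\ve{0})$ is empty and $F_2(\ve{x},\ve{0})=+\infty$, already contradicting $F_1\geq F_2$ at $(\ve{x},\ve{0})\in T\Omega_1$. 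Your hyperplane route has the advantage of producing a genuine nonzero direction $\ve{\nu}$ along which the ray visibly escapes $\Omega_2$, which is closer in spirit to the Finsler picture and avoids any unease about the degenerate vector; but it is not needed for the bare equivalence.
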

\begin{lemma}[\citet{HB-FinslerHilbert-2014}]
\label{lem:finsler-rs}
Let $\Omega$ be a convex domain in $\mathbb{R}^d$ and $\ve{r}, \ve{s} \in \Omega$. If $\ve{s} = \ve{r} + \tau\ve{\xi}$ for some $\ve{\xi} \in \mathbb{R}^d$ and $\tau \geq 0$, then
\begin{equation}
    F_f(\ve{s}, \ve{\xi}) = \frac{F_f(\ve{r}, \ve{\xi})}{1 - \tau\,F_f(\ve{r}, \ve{\xi})}\,.
\end{equation}
\end{lemma}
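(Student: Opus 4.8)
The plan is to reduce the claim to a purely one-dimensional statement about where the ray in direction $\ve{\xi}$ exits $\Omega$, and then perform a short algebraic inversion. First I would pin down the geometric meaning of the Lagrangian. From the definition $F_f(\ve{x},\ve{\xi}) = \inf\{\tau>0 : \ve{x}+\ve{\xi}/\tau\in\Omega\}$, together with the fact that $\Omega$ is open, convex, and contains $\ve{x}$, the intersection of the ray $\{\ve{x}+\mu\ve{\xi} : \mu\ge 0\}$ with $\Omega$ is a half-open interval $[0,\lambda)$, with $\lambda=+\infty$ precisely when the ray never leaves $\Omega$. Since $\ve{x}+\ve{\xi}/\tau\in\Omega$ exactly when $1/\tau<\lambda$, the infimum is $\tau=1/\lambda$, so $F_f(\ve{x},\ve{\xi}) = 1/\lambda$ and the exit point $\ve{x}+\lambda\ve{\xi}=\ve{x}+\ve{\xi}/F_f(\ve{x},\ve{\xi})$ lies on $\partial\Omega$, consistent with the remark following the definition in \eqref{eq:taut}.

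Next I would use that $\ve{r}$ and $\ve{s}=\ve{r}+\tau\ve{\xi}$ lie on the same ray, with the same forward direction $\ve{\xi}$, and hence exit $\Omega$ through the same boundary point $\bar{\ve{s}}\in\partial\Omega$. Writing this common point two ways, $\bar{\ve{s}} = \ve{r}+\lambda_{\ve{r}}\ve{\xi} = \ve{s}+\lambda_{\ve{s}}\ve{\xi}$, and substituting $\ve{s}=\ve{r}+\tau\ve{\xi}$ gives $\lambda_{\ve{r}} = \tau + \lambda_{\ve{s}}$. By the first step, $\lambda_{\ve{r}}=1/F_f(\ve{r},\ve{\xi})$ and $\lambda_{\ve{s}}=1/F_f(\ve{s},\ve{\xi})$, so
\[
\frac{1}{F_f(\ve{s},\ve{\xi})} \;=\; \frac{1}{F_f(\ve{r},\ve{\xi})} - \tau \;=\; \frac{1-\tau\,F_f(\ve{r},\ve{\xi})}{F_f(\ve{r},\ve{\xi})},
\]
and inverting both sides yields exactly $F_f(\ve{s},\ve{\xi}) = F_f(\ve{r},\ve{\xi})/(1-\tau\,F_f(\ve{r},\ve{\xi}))$.

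I would then dispatch the degenerate cases to confirm there is no division trouble. If $\ve{\xi}=\ve{0}$ then $\ve{s}=\ve{r}$ and both sides vanish; if the ray stays inside $\Omega$ then $F_f(\ve{r},\ve{\xi})=F_f(\ve{s},\ve{\xi})=0$ (the $\lambda=+\infty$ case) and the right-hand side is $0$. In the remaining case, $\ve{s}\in\Omega$ forces $\tau<\lambda_{\ve{r}}=1/F_f(\ve{r},\ve{\xi})$, hence $\tau\,F_f(\ve{r},\ve{\xi})<1$, so the denominator $1-\tau\,F_f(\ve{r},\ve{\xi})$ is strictly positive and the inversion above is legitimate.

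The main obstacle, though a modest one, is establishing the geometric identity $F_f=1/\lambda$ cleanly, i.e. showing the ray meets $\Omega$ in a single interval with a well-defined exit point that is shared by every point upstream on the ray. This is exactly the step where convexity and openness of $\Omega$ are used; once it is in place, the remainder is the one-line algebraic manipulation above.
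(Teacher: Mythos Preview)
Your argument is correct and is precisely the standard proof: interpret $F_f(\ve{x},\ve{\xi})$ as the reciprocal of the exit parameter along the ray, observe that $\ve{r}$ and $\ve{s}=\ve{r}+\tau\ve{\xi}$ share the same boundary exit point, and read off the relation $\lambda_{\ve{s}}=\lambda_{\ve{r}}-\tau$. The paper itself does not supply a proof of this lemma; it is quoted verbatim from \citet{HB-FinslerHilbert-2014} and used as a black box in the proof of Theorem~\ref{thm:funk}, so there is nothing further to compare against.
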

\begin{proof}[Proof of Theorem~\ref{thm:funk}]
We start by calculating the distance when the domain is a half-space $\Omega = \mathcal{H} = \{\ve{x}\in \mathbb{R}^d\,\vert\,\ve{\nu}\cdot\ve{x}\leq c\}$ for some $\ve{\nu} \neq \ve{0}$ by $t$-integrating~\eqref{eq:finsler-halfspace} along a curve $\ve{\beta}$ connecting $\ve{r}$ and $\ve{s}$:
\begin{align}
    \rho^{\mathcal{H}}_f(\ve{r}, \ve{s}) & = \riemannint{t}{0}{1} \max\Big(\frac{\ve{\nu}\cdot\dot{\ve{\beta}}(\tau)}{c - \ve{\nu}\cdot\ve{\beta}(\tau)},\, 0\Big) \mathrm{d}_t \tau\nonumber\\
    & = \riemannint{t}{0}{1} \max\Big(\frac{(c - \ve{\nu}\cdot\ve{\beta}(\tau))'}{\vert c - \ve{\nu}\cdot\ve{\beta}(\tau)\vert},\, 0\Big) \mathrm{d}_t \tau\nonumber\\
    & = \max\Big(\log_t \big(\frac{c - \ve{\nu}\cdot\ve{r}}{c - \ve{\nu}\cdot\ve{s}}\big),\, 0\Big)\,.\label{eq:taut-int-form}
\end{align}
Suppose $\ve{a} \in \partial\mathcal{H}$ is the boundary point along the ray emanating from $\ve{r}$ and passing through $\ve{s}$. Then $\ve{\nu}\cdot \ve{a} = c$ and can write~\eqref{eq:taut-int-form}
\begin{equation}
\label{eq:taut-int-form2}
    \rho_f^{\mathcal{H}}(\ve{r}, \ve{s}) = \log_t\frac{\Vert \ve{a} - \ve{r}\Vert}{\Vert \ve{a} - \ve{s}\Vert}\,,
\end{equation}
where $\Vert \cdot\Vert$ is any arbitrary norm (e.g., $\|\cdot\|_2$). For a general convex domain $\Omega$, let $\ve{a} \in \partial \Omega$ be defined similarly to the previous case for the points $\ve{r}$ and $\ve{s}$. We have
\begin{equation}
    F^{\Omega}_f(\ve{r}, \ve{\xi}) = \frac{1}{\Vert \ve{a} - \ve{r}\Vert}\,,
\end{equation}
where $\ve{\xi} = \frac{\ve{s} - \ve{r}}{\Vert\ve{s} - \ve{r}\Vert}$ is the unit vector along the ray connecting $\ve{r}$ and $\ve{s}$. Using Lemma~\ref{lem:finsler-rs}, we have
\begin{equation}
    F^{\Omega}_f(\ve{\beta}(\tau), \dot{\ve{\beta}}(\tau)) = \frac{F^{\Omega}_f(\ve{r}, \ve{\xi})}{1 - \tau\,F^{\Omega}_f(\ve{r}, \ve{\xi})} = \frac{1}{\Vert\ve{a} - \ve{r}\Vert - \tau}
\end{equation}
along the curve $\ve{\beta}(\tau) = \ve{r} + \tau\,\ve{\xi}$ and thus, we can calculate the $t$-length along $\ve{\beta}$ as
\begin{align}
    \rho^{\Omega}_f(\ve{r}, \ve{s}) & = \riemannint{t}{0}{\Vert\ve{s} - \ve{r}\Vert} \frac{1}{\Vert\ve{a} - \ve{r}\Vert - \tau} \mathrm{d}_t \tau\nonumber\\
    & = \log_t\frac{1}{\Vert\ve{a} - \ve{r}\Vert - \Vert\ve{s} - \ve{r}\Vert} \ominus_t \log_t\frac{1}{\Vert\ve{a} - \ve{r}\Vert}\nonumber\\
    & = \log_t\frac{\Vert \ve{a} - \ve{r}\Vert}{\Vert \ve{a} - \ve{s}\Vert}\,,
\end{align}
since $\ve{r}$, $\ve{s}$, and $\ve{a}$ are collinear. Thus, taking $\mathcal{H}$ to be the supporting hyperplane to $\Omega$ at $\ve{a}$, we have $\rho^{\Omega}_f(\ve{r}, \ve{s}) \leq \rho^{\mathcal{H}}_f(\ve{r}, \ve{s})$. However, from Lemma~\ref{lem:monotone}, we have $\rho^{\Omega}_f(\ve{r}, \ve{s}) \geq \rho^{\mathcal{H}}_f(\ve{r}, \ve{s})$ and thus, the result holds with equality.

\end{proof}
\begin{proposition}
The unit speed linear $t$-geodesic starting at $\ve{r}\in\Omega$ in the direction of $\ve{\xi}\in T_{\ve{r}}\Omega$ is the path
\begin{equation}
    \ve{\gamma}_{\ve{r}, \ve{\xi}}(\tau) = \ve{r} + \frac{1 - \exp_t\ominus_t \tau}{F_f(\ve{r}, \ve{\xi})}\,\ve{\xi}\,.
\end{equation}
\end{proposition}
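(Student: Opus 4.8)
The curve $\ve{\gamma}_{\ve{r},\ve{\xi}}$ is a straight ray out of $\ve{r}$ in the direction $\ve{\xi}$, so the whole content of the statement is the \emph{parametrization}: I must check (i) $\ve{\gamma}_{\ve{r},\ve{\xi}}(0)=\ve{r}$ and $\dot{\ve{\gamma}}_{\ve{r},\ve{\xi}}(0)$ is a positive multiple of $\ve{\xi}$, (ii) the $t$-length of the arc $\ve{\gamma}_{\ve{r},\ve{\xi}}|_{[0,\tau]}$ equals $\tau$ (unit $t$-speed), and (iii) this arc satisfies $\oplus_t$-additivity of the tautological (i.e.\ $t$-Funk) distance, hence is a $t$-geodesic. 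Write $c(\tau)\defeq(1-\exp_t(\ominus_t\tau))/F_f(\ve{r},\ve{\xi})$, so $\ve{\gamma}_{\ve{r},\ve{\xi}}(\tau)=\ve{r}+c(\tau)\ve{\xi}$. The identity that drives everything is
\[
1-c(\tau)\,F_f(\ve{r},\ve{\xi})=\exp_t(\ominus_t\tau)=\frac{1}{\exp_t(\tau)},
\]
which follows from $\ominus_t\tau=0\ominus_t\tau=\log_t(1/\exp_t\tau)$ and $\exp_t\circ\log_t=\mathrm{id}$. Since $\ominus_t 0=0$ (Eq.~\eqref{eq:zero-element}) and $\exp_t 0=1$, we get $c(0)=0$, hence $\ve{\gamma}_{\ve{r},\ve{\xi}}(0)=\ve{r}$; differentiating $c$ gives $c'(0)=1/F_f(\ve{r},\ve{\xi})>0$, so $\dot{\ve{\gamma}}_{\ve{r},\ve{\xi}}(0)=\ve{\xi}/F_f(\ve{r},\ve{\xi})$, a positive multiple of $\ve{\xi}$ with unit $F_f$-norm.

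\textbf{Unit $t$-speed.} First I would compute the Finsler speed along the curve. Applying Lemma~\ref{lem:finsler-rs} with $\ve{s}=\ve{\gamma}_{\ve{r},\ve{\xi}}(\tau)=\ve{r}+c(\tau)\ve{\xi}$ gives $F_f(\ve{\gamma}_{\ve{r},\ve{\xi}}(\tau),\ve{\xi})=F_f(\ve{r},\ve{\xi})/(1-c(\tau)F_f(\ve{r},\ve{\xi}))=F_f(\ve{r},\ve{\xi})\exp_t(\tau)$. Using positive homogeneity of $F_f$, $\dot{\ve{\gamma}}_{\ve{r},\ve{\xi}}(\tau)=c'(\tau)\ve{\xi}$, and $c'(\tau)=(1+(1-t)\tau)^{-1/(1-t)-1}/F_f(\ve{r},\ve{\xi})$, this gives
\[
F_f(\ve{\gamma}_{\ve{r},\ve{\xi}}(\tau),\dot{\ve{\gamma}}_{\ve{r},\ve{\xi}}(\tau))=c'(\tau)\,F_f(\ve{r},\ve{\xi})\exp_t(\tau)=\frac{1}{1+(1-t)\tau}.
\]
Then by \eqref{eq:t-length}, $\ell^{(t)}(\ve{\gamma}_{\ve{r},\ve{\xi}}|_{[0,\tau]})=\riemannint{t}{0}{\tau}\frac{1}{1+(1-t)s}\,\mathrm{d}_t s$. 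By \eqref{eq:t-derivative-simple}, the identity map $s\mapsto s$ satisfies $\mathrm{D}_t[\,\mathrm{id}\,](s)=1/(1+(1-t)s)$, so it is a $t$-primitive of the integrand, and the $t$-integral equals $\tau\ominus_t 0=\tau$ by \eqref{eq:zero-element}. That is the unit $t$-speed property.

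\textbf{$t$-geodesic.} Finally I would use Theorem~\ref{thm:funk}. Let $\ve{b}\in\partial\Omega$ be the exit point of the ray $R(\ve{r},\ve{\xi})$; from the proof of Theorem~\ref{thm:funk}, $\|\ve{b}-\ve{r}\|=\|\ve{\xi}\|/F_f(\ve{r},\ve{\xi})$, and collinearity of $\ve{r},\ve{\gamma}_{\ve{r},\ve{\xi}}(\tau),\ve{b}$ gives $\|\ve{b}-\ve{\gamma}_{\ve{r},\ve{\xi}}(\tau)\|=\|\ve{b}-\ve{r}\|-c(\tau)\|\ve{\xi}\|=(\|\ve{\xi}\|/F_f(\ve{r},\ve{\xi}))\exp_t(\ominus_t\tau)$. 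Plugging into \eqref{eq:rhof}, for $0\le\tau_1\le\tau_2$,
\[
\rho_f(\ve{\gamma}_{\ve{r},\ve{\xi}}(\tau_1),\ve{\gamma}_{\ve{r},\ve{\xi}}(\tau_2))=\log_t\frac{\|\ve{b}-\ve{\gamma}_{\ve{r},\ve{\xi}}(\tau_1)\|}{\|\ve{b}-\ve{\gamma}_{\ve{r},\ve{\xi}}(\tau_2)\|}=\log_t\frac{\exp_t\tau_2}{\exp_t\tau_1}=\tau_2\ominus_t\tau_1
\]
by the definition \eqref{eq:t-minus} of $\ominus_t$; in particular $\rho_f(\ve{r},\ve{\gamma}_{\ve{r},\ve{\xi}}(\tau))=\tau$, consistent with the $t$-length above. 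Since $\tau_1\oplus_t(\tau_2\ominus_t\tau_1)=\log_t(\exp_t\tau_1\cdot\exp_t\tau_2/\exp_t\tau_1)=\tau_2$ by \eqref{eq:t-plus}, the arc satisfies the $\oplus_t$-triangle equality $\rho_f(\ve{r},\ve{\gamma}_{\ve{r},\ve{\xi}}(\tau_1))\oplus_t\rho_f(\ve{\gamma}_{\ve{r},\ve{\xi}}(\tau_1),\ve{\gamma}_{\ve{r},\ve{\xi}}(\tau_2))=\rho_f(\ve{r},\ve{\gamma}_{\ve{r},\ve{\xi}}(\tau_2))$, so it is a $t$-geodesic through $\ve{r}$ with initial direction proportional to $\ve{\xi}$. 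I expect no genuine obstacle: the only non-mechanical points are recognizing that $s\mapsto s$ is the closed-form $t$-primitive of $s\mapsto 1/(1+(1-t)s)$, and keeping straight the $t$-algebra identities $\exp_t(\ominus_t\tau)=1/\exp_t\tau$ and $a\oplus_t(b\ominus_t a)=b$; the remainder is bookkeeping with Lemma~\ref{lem:finsler-rs} and Theorem~\ref{thm:funk}.
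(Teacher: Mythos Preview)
Your proof is correct and in fact strictly more thorough than the paper's. The paper's argument consists of only the minimal computation you carry out at the end of your ``$t$-geodesic'' paragraph: it identifies the boundary point $\bar{\ve{s}}=\ve{r}+\ve{\xi}/F_f(\ve{r},\ve{\xi})$ from \eqref{eq:taut}, observes $\bar{\ve{s}}-\ve{\gamma}(\tau)=(\exp_t\ominus_t\tau)\,\ve{\xi}/F_f(\ve{r},\ve{\xi})$, and plugs into \eqref{eq:rhof} to get $\rho_f(\ve{r},\ve{\gamma}(\tau))=\log_t(1/\exp_t\ominus_t\tau)=\tau$. That is the entire proof in the paper.

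What you do differently: your ``Unit $t$-speed'' section gives an independent verification via the Finsler $t$-integral \eqref{eq:t-length}, using Lemma~\ref{lem:finsler-rs} to compute the pointwise Lagrangian $F_f(\ve{\gamma}(\tau),\dot{\ve{\gamma}}(\tau))=1/(1+(1-t)\tau)$ and recognizing $s\mapsto s$ as its $t$-primitive. This is a genuinely different route that checks consistency of the two definitions of $t$-length (the intrinsic $t$-integral versus the closed-form $t$-Funk distance of Theorem~\ref{thm:funk}). You also explicitly verify the initial conditions and the $\oplus_t$-additivity $\rho_f(\ve{\gamma}(\tau_1),\ve{\gamma}(\tau_2))=\tau_2\ominus_t\tau_1$ between arbitrary intermediate points, which the paper leaves implicit. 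Your extra work buys a self-contained justification of the word ``$t$-geodesic'' in the statement; the paper's proof only certifies ``unit speed'' in the sense $\rho_f(\ve{r},\ve{\gamma}(\tau))=\tau$.
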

\begin{proof}
By~\eqref{eq:taut}, we have
\[
\bar{\ve{s}} = \ve{r} + \frac{\ve{\xi}}{F_f(\ve{r}, \ve{\xi})}\,, \text{ \,\, and therefore \,\, } \bar{\ve{s}} - \ve{\gamma}(\tau) = \frac{\exp_t\ominus_t \tau}{F_f(\ve{r}, \ve{\xi})}\cdot \ve{\xi}\,.
\]
Thus, we have
\[
\rho_f(\ve{r}, \ve{\gamma}(\tau)) = \log_t\frac{\Vert \ve{r} - \bar{\ve{s}}\Vert}{\Vert \ve{\gamma}(\tau) - \bar{\ve{s}}\Vert} = \log_t \frac{1}{\exp_t\ominus_t \tau} = \tau\,.
\]
\end{proof}

\section{Properties of the Function $\Vert \cdot\Vert_{\tvar}$}
The function $\Vert \cdot\Vert_{\tvar}$ behaves almost like a \emph{semi-$t$-norm} in the following sense:
\begin{enumerate}
    \item $t$-Subadditivity: $\semitnorm{t}{\ve{u} \oplus_t \ve{v}} \leq \semitnorm{t}{\ve{u}} \oplus_t \semitnorm{t}{\ve{v}}$, for all $\ve{u}, \ve{v} \in \bbR^d$.
    \item Absolute homogeneity does not hold in general but for $\alpha > 0$, we have: $\semitnorm{t}{\alpha\,\ve{u}} = \alpha\cdot\semitnorm{t_\alpha}{\ve{u}}$, where $t_\alpha= 1 - (1 - t)\alpha$ (given that $\exp_t(\min_i \alpha\,u_i) = \exp_{t_\alpha}\!(\min_i u_i) \neq 0$).
    \item Positive semi-definiteness: $\semitnorm{t}{\ve{u}} \geq 0$, and if $\semitnorm{t}{\ve{u}} = 0$ then $\ve{u} = r\ve{1}$ for $r \in \bbR$. 
\end{enumerate}

\section{Proofs}
\begin{proof}[Proof of Proposition~\ref{prop:lambda}] The proof follows by substituting the definition of $\log_t$ in~\eqref{eq:thetc} and applying the equality $\Mat{P}_t\check{\ve{\theta}} = \check{\ve{\theta}}$.
\end{proof}
\begin{proof}[Proof of Proposition~\ref{prop:temp-soft-inv}]
By~\eqref{eq:thetc} and for a given $r \in \mathbb{R}$, we can write
\begin{equation}
    \check{\ve{\theta}} + r\,\hat{\ve{n}} = \check{\ve{\theta}} + r\|\tilde{\ve{p}}^{1-t}\|_{2}\cdot \tilde{\ve{p}}^{1-t} = \log_t \frac{\tilde{\ve{p}}}{K}\,,
\end{equation}
where $K = 1/\exp_t(\lambda + r\|\tilde{\ve{p}}^{1-t}\|_{2})$. Applying the function $\check{f}^*$ to the rhs concludes the proof.
\end{proof}
\begin{proof}[Proof of Proposition~\ref{prop:t-hilbert-metric}]
Consider the function $h_t(u)=\log_t \exp(u)$ which satisfies $h_t(u)=0$
if and only if $u=0$ for any $t$.
The function $h_t(u)$ is increasing for $t>1$ and $x>0$ since
$h_t'(u)=\frac{1-\exp((1-t)u)}{t-1}>0$, and hence $h_t(u)\geq 0$.
Function $h_t(u)$ is also subadditive for $t>1$ since
$h_t(a+b)=\log_t(e^a e^b)=\log_t e^a+\log_t e^b+(1-t)(\log_t e^a)(\log_t
e^b)=h_t(a)+h_t(b)-(1-t)h_t(a)h_t(b)\leq h_t(a)+h_t(b)$.
Thus we have the property that $h_t(u)$ which is a metric
transform~\citep{MetricTransform-1981}.
It follows that $\rho_t^\Omega(p,q)$ is a metric for $t\geq 1$.
\end{proof}
\begin{proof}[Proof of Theorem~\ref{thm:contr}]
Given $\ve{r}, \ve{s} \in \calC$, we have $$\rho_{\tHG}^{\calC}(\ve{r}, \ve{s}) = \log_t \exp \rho_{\HG}^{\calC}(\ve{r}, \ve{s})\,.$$
Since $\rho_{\HG}^{\calC}$ is a contraction for positive linear maps, it suffices to show that the function $x \mapsto \log_t \exp x$ is monotonic for $t < 2$. Monotonicity simply follows from the derivative $(\log_t \exp)' (x) = \exp^{1 - t} (x) > 0$. From~\citet{HilbertContraction-1982} (Theorem 4.1) and the fact that $\rho_{\tHG}^{\calC}$ is a monotonic function of $\rho_{\HG}^{\calC}$, we conclude that $\kappa_{\tHG}(A) \geq \kappa_{\HG}(A)$ for all positive $A$.
\end{proof}
\begin{proof}[Proof of Lemma~\ref{lem:tstar-dist}]
\begin{align*}
   t^*\rho^{\Delta^d}_{\tsFD}(\tilde{\ve{p}}^{1/t^*}, \tilde{\ve{q}}^{1/t^*}) =  t^* \log_{t^*}\max_i \frac{\tilde{p}^{1/t^*}_i}{\tilde{q}^{1/t^*}_i} = t^* \max_i \log_{t^*} \frac{\tilde{p}^{1/t^*}_i}{\tilde{q}^{1/t^*}_i} = \max_i\log_t \frac{\tilde{p}_i}{\tilde{q}_i}
   = \log_t \max_i \frac{\tilde{p}_i}{\tilde{q}_i}\,.
\end{align*}
\end{proof}
    \begin{proof}[Proof of Proposition~\ref{prop:dist-properties}]
  \,

\noindent \textbf{Proof of condition (i)} We remark that for any $\tilde{\ve{p}}, \tilde{\ve{q}}$,
    \begin{eqnarray}
m_t (\tilde{\ve{p}},\tilde{\ve{q}}) \cdot \tilde{p}_i \leq \tilde{q}_i \leq M_t (\tilde{\ve{p}},\tilde{\ve{q}}) \cdot \tilde{p}_i, \forall i \in [m].
      \end{eqnarray}
Hence, if $\rho_{\tHG}(\tilde{\ve{p}},\tilde{\ve{q}}) = 0$ then $m_t (\tilde{\ve{p}},\tilde{\ve{q}}) = M_t (\tilde{\ve{p}},\tilde{\ve{q}}) = K$ and $\tilde{\ve{p}} = K \cdot \tilde{\ve{q}}$, and if $\tilde{\ve{p}} = K \cdot \tilde{\ve{q}}$ then we immediately have $\rho_{\tHG}(\tilde{\ve{p}},\tilde{\ve{q}}) = 0$.

\noindent \textbf{Proof of condition (ii)} It follows directly from the definition of $\ominus_t$ and $t$-negation that $\ominus_t \log_t a/b = \ominus_t (\log_t a \ominus_t \log_t b) = \log_t b \ominus_t \log_t a = \log_t b/a$. Thus,
\begin{align}
    \rho_{\tHG}(\tilde{\ve{p}},\tilde{\ve{q}}) & = \ominus_t \min_i \big(\log_t \tilde{p}_i \ominus_t \log_t \tilde{q}_i\big) \ominus_t \ominus_t\max_j \big(\log_t \tilde{p}_j \ominus_t \log_t \tilde{q}_j\big) = \rho_{\tHG}(\tilde{\ve{q}},\tilde{\ve{p}}).
\end{align}

\noindent \textbf{Proof of condition (iii)} Similar to the proof of \cite{bHM} (Theorem 2.1), we have
\[
\tilde{\ve{p}} \leq M(\tilde{\ve{p}}, \tilde{\ve{q}}) \tilde{\ve{q}} \leq M(\tilde{\ve{p}}, \tilde{\ve{q}}) M(\tilde{\ve{q}},  \tilde{\ve{r}}) \tilde{\ve{r}}
\]
Thus,
\[
\log_t M(\tilde{\ve{p}}, \tilde{\ve{r}}) \leq \log_t M(\tilde{\ve{p}}, \tilde{\ve{q}}) \oplus_t \log_t M(\tilde{\ve{q}},  \tilde{\ve{r}})\, .
\]
Similarly, we have
\[
\log_t m(\tilde{\ve{p}}, \tilde{\ve{r}}) \geq \log_t m(\tilde{\ve{p}}, \tilde{\ve{q}}) \oplus_t \log_t m(\tilde{\ve{q}}, \tilde{\ve{r}})\, .
\]
Combining the two inequalities concludes the proof of the statement.

\noindent \textbf{Proof of condition (iv)} We need to show
\[
\log_t \Big(\frac{M(\tilde{\ve{p}}, \tilde{\ve{q}}) \cdot M(\tilde{\ve{q}},  \tilde{\ve{r}})}{m(\tilde{\ve{p}}, \tilde{\ve{q}}) \cdot m(\tilde{\ve{q}},  \tilde{\ve{r}})}\Big) \leq \log_t \Big(\frac{M(\tilde{\ve{p}}, \tilde{\ve{q}})}{m(\tilde{\ve{p}}, \tilde{\ve{q}})}\Big) + \log_t \Big(\frac{M(\tilde{\ve{q}},  \tilde{\ve{r}})}{m(\tilde{\ve{q}},  \tilde{\ve{r}})}\Big)\,,
\]
for $1 \leq t < 2$. The inequality amounts to showing
\[
\log_t(a\,b) \leq \log_t a + \log_t b\, ,\quad  a, b \geq 1,
\]
or, since $(1 - t) \leq 1$, to show
\[
(ab)^\tau + 1 \geq a^\tau + b^\tau,\,\,\, a, b \geq 1 \text{ and }\tau > 0.
\]
Fix $a$ and consider the function $f(x) = (ax)^\tau + 1 - a^\tau - x^\tau$. Note that $f(1) = 0$ and $f'(x) = \tau\,x^{\tau-1}(a^\tau - 1) \geq 0$. Thus, the function is increasing for all $x > 1$.
    \end{proof}
\begin{proof}[Proof of Lemma~\ref{lem:monotone}]
Denote $\iota = \max\{\tilde{p}_1/\tilde{q}_1, \tilde{p}_2/\tilde{q}_2\}$. Assuming $\tilde{q}_1, \tilde{q}_2 > 0$, we have $\tilde{p}_1 \leq \iota\,\tilde{q}_1$ and $\tilde{p}_2 \leq \iota\,\tilde{q}_2$. Thus,
\begin{equation}
    \frac{(\tilde{p}_1^{1/{t^*}} + \tilde{p}_2^{1/{t^*}})^{t^*}}{(\tilde{q}_1^{1/{t^*}} + \tilde{q}_2^{1/{t^*}})^{t^*}} \leq \frac{\iota\, (\tilde{q}_1^{1/{t^*}} + \tilde{q}_2^{1/{t^*}})^{t^*}}{(\tilde{q}_1^{1/{t^*}} + \tilde{q}_2^{1/{t^*}})^{t^*}} = \iota. 
\end{equation}
Hence, we have
\begin{equation}
    \log_t \max\left\{\frac{(\tilde{p}_1^{1/{t^*}} + \tilde{p}_2^{1/{t^*}})^{t^*}}{(\tilde{q}_1^{1/{t^*}} + \tilde{q}_2^{1/{t^*}})^{t^*}}, \ldots, \frac{\tilde{p}_d}{\tilde{q}_d}\right\} \leq \log_t \max_i \frac{\tilde{p}_i}{\tilde{q}_i}.
\end{equation}
The proof is complete by the definition of the $t$-Funk distance.
\end{proof}
\begin{proof}[Proof of Theorem~\ref{thm:unconst}]
The mapping from $\tilde{\Delta}_t^d$ via~\eqref{eq:thetc} is bijective. Given $\tilde{\ve{p}}, \tilde{\ve{q}} \in \tilde{\Delta}_t^d$, the distance $\rho_{\tNH}$ between $\check{\ve{\theta}} = \log_t\frac{\tilde{\ve{p}}}{\tilde{\lambda}_t(\tilde{\ve{p}})}$ and $\check{\ve{\theta}}' = \log_t\frac{\tilde{\ve{q}}}{\tilde{\lambda}_t(\tilde{\ve{q}})}$ amounts to 
\begin{align*}
  \rho_{\tNH}(\check{\ve{\theta}}, \check{\ve{\theta}}') & = \inf_\tau \Big\vert  \log_t\frac{\tilde{p}_i}{\tilde{\lambda}_t(\tilde{\ve{p}})} \ominus_t \log_t\frac{\tilde{q}_i}{\tilde{\lambda}_t(\tilde{\ve{q}})} \ominus_t (\log_t\frac{\tilde{p}_j}{\tilde{\lambda}_t(\tilde{\ve{p}})} \ominus_t \log_t\frac{\tilde{q}_j}{\tilde{\lambda}_t(\tilde{\ve{q}})})\Big\vert   \in B^{\tau}_{\tilde{V}^d_t}\,,\,\, i \neq j\\
  & = \inf_\tau \Big\vert  \log_t\frac{\frac{\tilde{p}_i}{\tilde{q}_i}}{\frac{\tilde{p}_j}{\tilde{q}_j}}\Big\vert  \in B^{\tau}_{\tilde{V}^d_t}\,,\,\, i \neq j\\
  & = \log_t\frac{\max_i\frac{\tilde{p}_i}{\tilde{q}_i}}{\min_j\frac{\tilde{p}_j}{\tilde{q}_j}}\\
  & = \rho_{\tHG}(\tilde{\ve{p}}, \tilde{\ve{q}})\,.
\end{align*}
\end{proof}
\begin{proof}[Proof of Theorem~\ref{thm:const}]
The mapping from $\tilde{\Delta}_t^d$ to $\bbR^d$ via $\log_t$ is bijecttive. Given $\tilde{\ve{p}}, \tilde{\ve{q}} \in \tilde{\Delta}_t^d$, the distance $\rho_{tvar}$ between $\ve{\theta} = \log_t \tilde{\ve{p}}$ and $\ve{\theta}' = \log_t \tilde{\ve{q}}$ amounts to
\begin{align*}
    \rho_{\tvar}(\ve{\theta}, \ve{\theta}') = \Vert \ve{\theta} \ominus_t \ve{\theta}' \Vert_{\tvar} & = \max_i \{\log_t \tilde{p}_i \ominus_t \log_t \tilde{q}_i\} \ominus_t \min_j \{\log_t \tilde{p}_j \ominus_t \log_t \tilde{q}_j\}\\ 
    & = \log_t\frac{\max_i\frac{\tilde{p}_i}{\tilde{q}_i}}{\min_j\frac{\tilde{p}_j}{\tilde{q}_j}} = \rho_{\tHG}(\tilde{\ve{p}}, \tilde{\ve{q}})\,.
\end{align*}
\end{proof}

\section{Differentiable Approximations}
In the following, we provide the tempered log-sum-exp function and bound its approximation error. Let
\begin{equation*}
    \text{LSE}_{t}(\ve{x}, T) = \frac{1}{T}\, \log_t \sum_i \exp_t(T x_i)\,.
\end{equation*}
We have
\begin{equation}
   \frac{1}{T}\,\big(\max_i T x_i \oplus_t \varepsilon^\ell_t(\ve{x}, T)\big) \leq \rho_{t-diff}(\ve{x}, T) \leq \frac{1}{T}\,\big(\max_i T x_i \oplus_t  \varepsilon^r_t(\ve{x}, T)\big)\,,
\end{equation}
where
\begin{align*}
    \varepsilon_t^\ell(\ve{x}, T) & = \log_t\big(1 + (d - 1)\,\exp_t \ominus_t\Vert T\ve{x}\Vert_{\tvar}\big)\,,\\
    \varepsilon_t^r(\ve{x}, T) & = \log_t\big((d - 1) + \exp_t \ominus_t\Vert T\ve{x}\Vert_{\tvar}\big)\,.
\end{align*}
Additionally, for $t \leq 1$,
\begin{equation}
    \lim_{T\rightarrow \infty}\text{LSE}_{t}(\ve{x}, T) = \max_i x_i\,.
\end{equation}
Note that we can also write the bounds as a standard sum 
\begin{equation*}
        \frac{1}{T}\,\big(\max_i T x_i \oplus_t \log_t \varepsilon^\cdot_t(\ve{x}, T)\big)
        = \max_i x_i + \frac{1}{T}\big((\max_i T x_i)^{1 - t}\,\varepsilon^\cdot_t(\ve{x}, T)\big)\,.
\end{equation*}
We now define the differentiable $t$-Funk and $t$-Hilbert distance with smoothing factor $T$ via the $\text{LSE}_{t}$ approximation by setting $x_i = \log_t \frac{\tilde{p}_i}{\tilde{q}_i}$ as
\begin{align*}
    \rho_{\tFDdiff}(\tilde{\ve{p}}, \tilde{\ve{q}}, T) & = \text{LSE}_{t}(\log_t\frac{\tilde{\ve{p}}}{\tilde{\ve{q}}}, T)\,,\\
    \rho_{\tHGdiff}(\tilde{\ve{p}}, \tilde{\ve{q}}, T) & = \rho_{\tFDdiff}(\tilde{\ve{p}}, \tilde{\ve{q}}, T) \oplus_t \rho_{\tFDdiff}(\tilde{\ve{q}}, \tilde{\ve{p}}, T)\,.
\end{align*}
Note that when $T \rightarrow \infty$, we have $\rho_{\tFDdiff}(\tilde{\ve{p}}, \tilde{\ve{q}}, T) \rightarrow \rho_{\tFD}(\tilde{\ve{p}}, \tilde{\ve{q}})$ and $\rho_{\tHGdiff}(\tilde{\ve{p}}, \tilde{\ve{q}}, T) = \rho_{\tHG}(\tilde{\ve{p}}, \tilde{\ve{q}})$. We show the distance balls of different radii using the differentiable approximation of the $t$-Hilbert distance in Figure~\ref{fig:diff-simplex}.

To empirically demonstrate the approximation error, we calculate the $t$-Hilbert distance as well as its differentiable approximation  between \num{10000} randomly sampled pairs of \acrotem s. We plot the histogram of the normalized relative error,
\[
\frac{\rho_{\tHGdiff}(\tilde{\ve{p}}, \tilde{\ve{q}}, T) - \rho_{\tHG}(\tilde{\ve{p}}, \tilde{\ve{q}})}{\rho_{\tHG}(\tilde{\ve{p}}, \tilde{\ve{q}})}\,,
\]
across different temperatures $t$ and smoothing factors $T$ in Figure~\ref{fig:diff-approx}(a)-(c). The results indicate that $\rho_{\tHGdiff}$ is an under (over) estimator of the true distance $\rho_{\tHG}$ for $t > 1$ ($t < 1$). Also, the relative approximation error is larger when $t$ deviates further from $1$. Nonetheless, we note experimentally that the differentiable max operator~\eqref{eq:lse} is less stable numerically for larger $T$ but more stable when $t < 1$. To utilize this property, we explore applying the differentiable max operator~\eqref{eq:lse} using a temperature $t < 1 - \delta\,,\,\,0 < \delta \ll 1$ for calculating the differentiable $t$-Hilbert distance. Let $\rho_{\ttpHGdiff}(\tilde{\ve{p}}, \tilde{\ve{q}}, T) = \text{LSE}_{1-\delta}(\log_t\sfrac{\tilde{\ve{p}}}{\tilde{\ve{q}}}, T) \oplus_t \text{LSE}_{1-\delta}(\log_t\sfrac{\tilde{\ve{q}}}{\tilde{\ve{p}}}, T)$ denote such differentiable $t$-Hilbert distance with a mismatched differentiable-max temperature $1-\delta$. We compare the relative error of the mismatched approximation for different values of $\delta$ and $T$ in Figure~\ref{fig:diff-approx}(d).

\begin{figure*}[t!]
\begin{center}
    \subfigure[]{\includegraphics[width=0.24\linewidth]{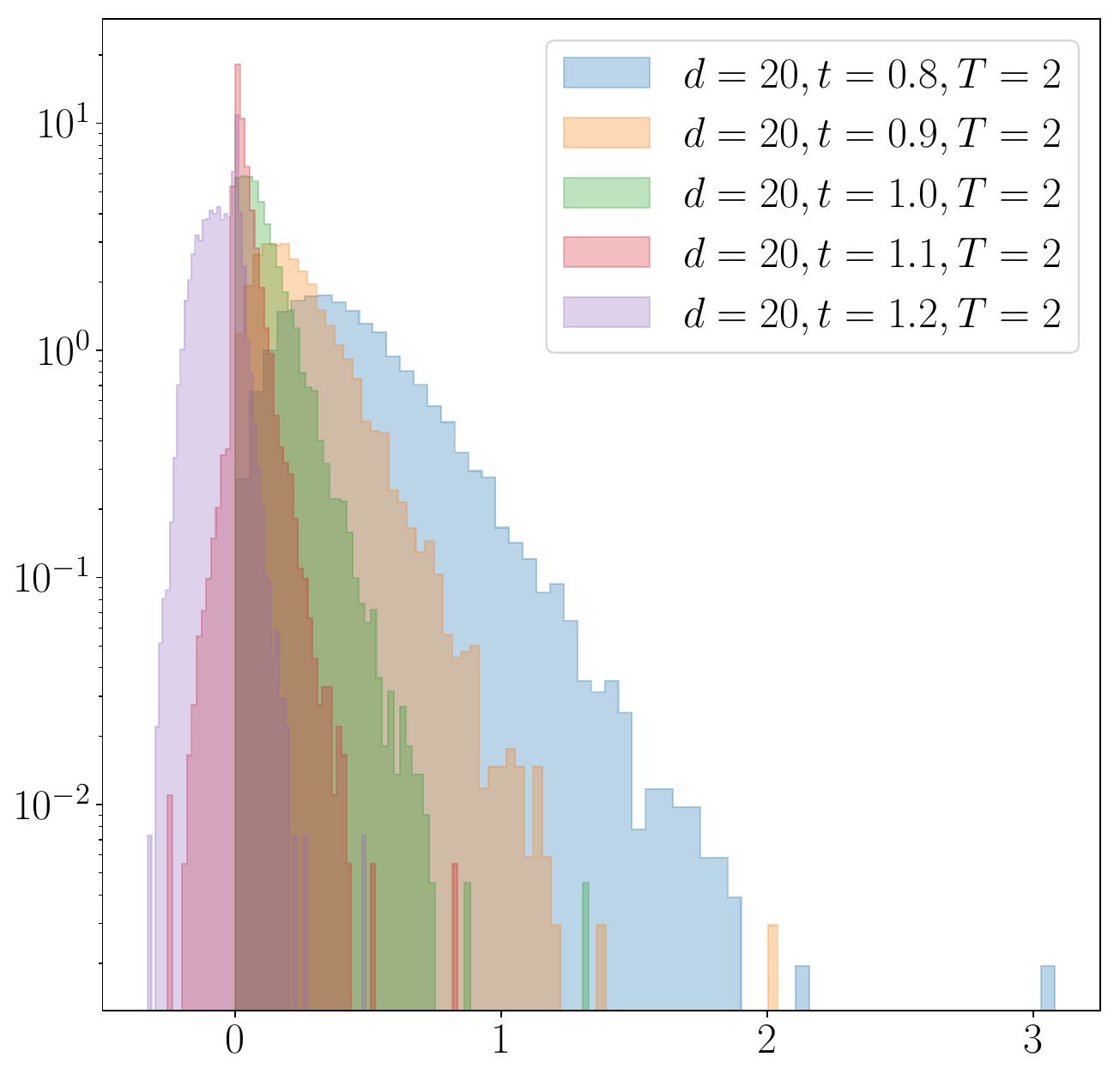}}
    \subfigure[]{\includegraphics[width=0.24\linewidth]{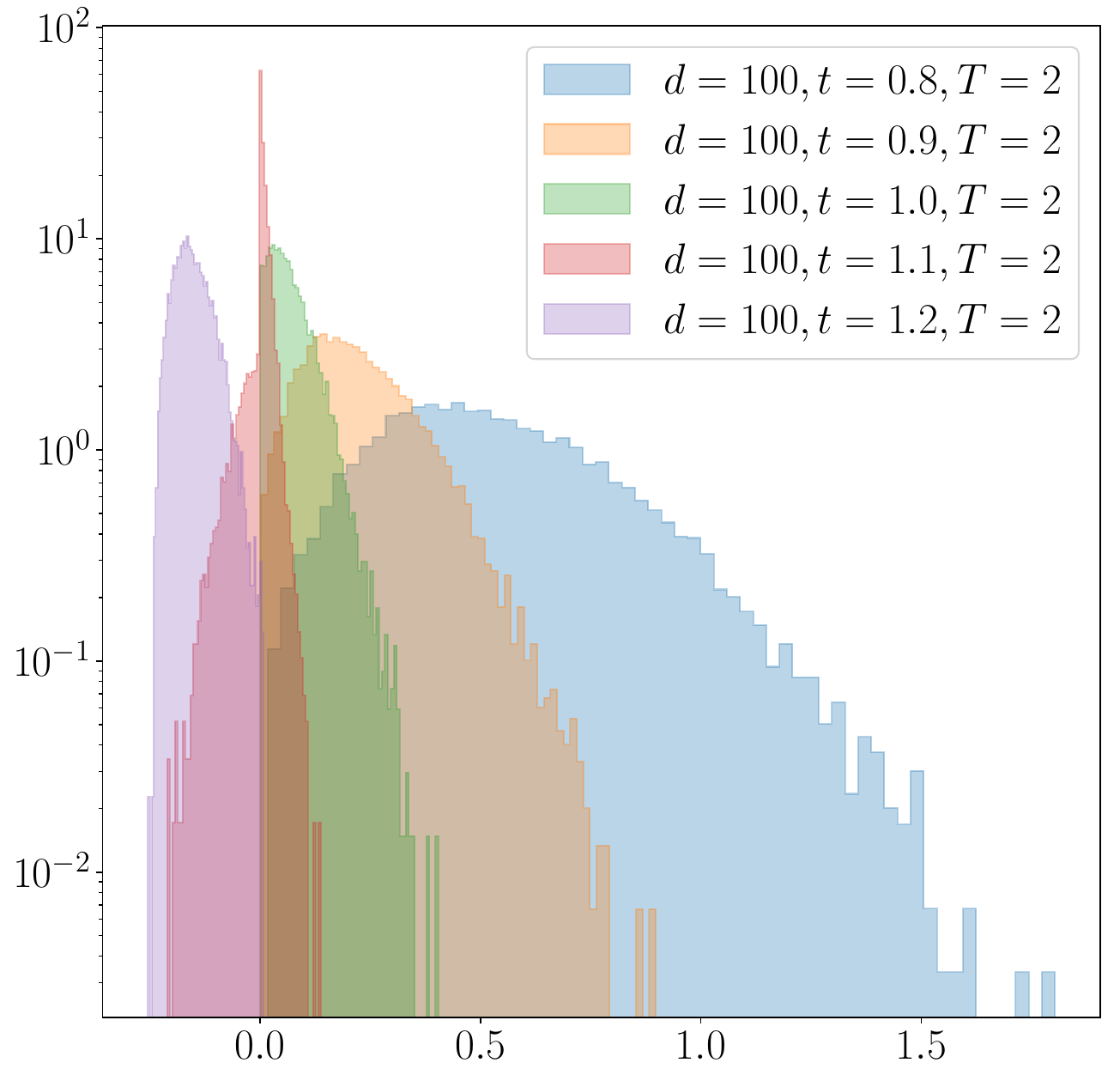}}
    \subfigure[]{\includegraphics[width=0.24\linewidth]{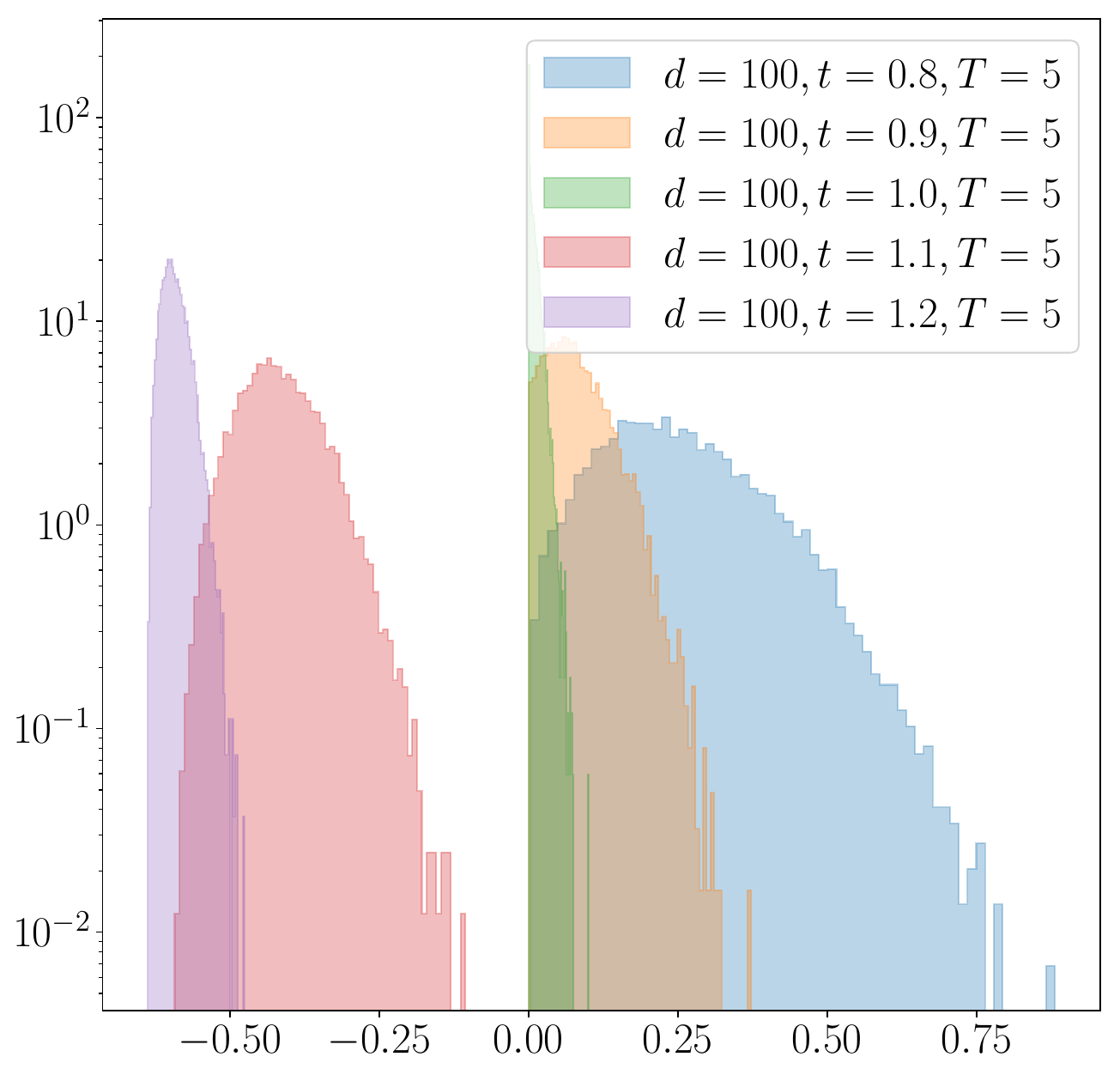}}
    \subfigure[]{\includegraphics[width=0.25\linewidth]{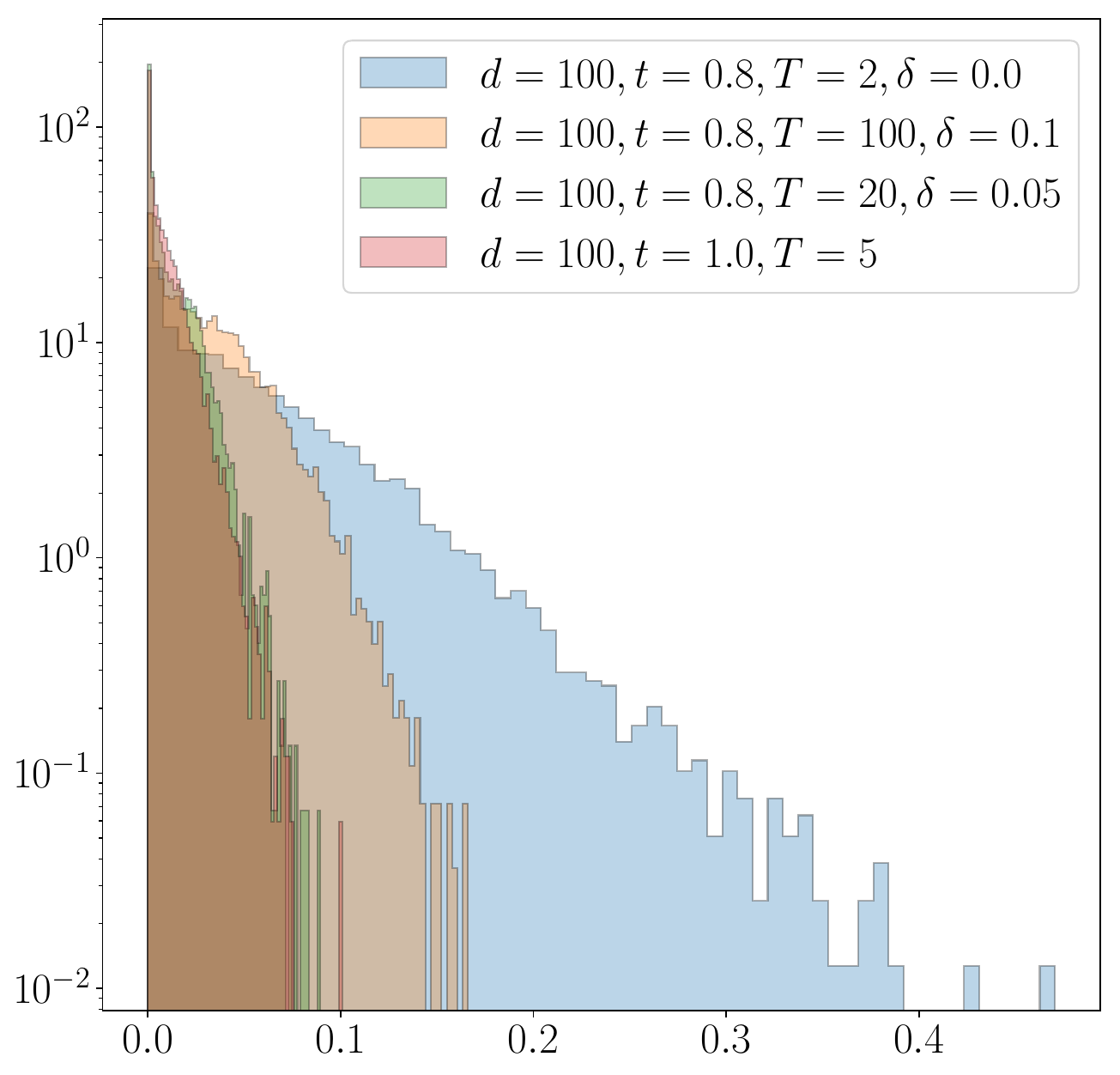}}
    \caption{(a)-(c) Histograms of the normalized relative error across different temperatures $t$ and smoothing factors $T$ across different dimensions $d$. $\rho_{\tHGdiff}$ is an under (over) estimator of the true distance $\rho_{\tHG}$ for $t > 1$ ($t < 1$). Also, the relative approximation error is larger when $t$ deviates further from $1$. (d) The histograms of relative errors when using a mismatched temperature $1-\delta$ for the differentiable-max operator.}
    \label{fig:diff-approx}
    \end{center}
\end{figure*}

\begin{figure*}[t!]
\begin{center}
    $T=1\,$  \subfigure{\includegraphics[width=0.25\linewidth]{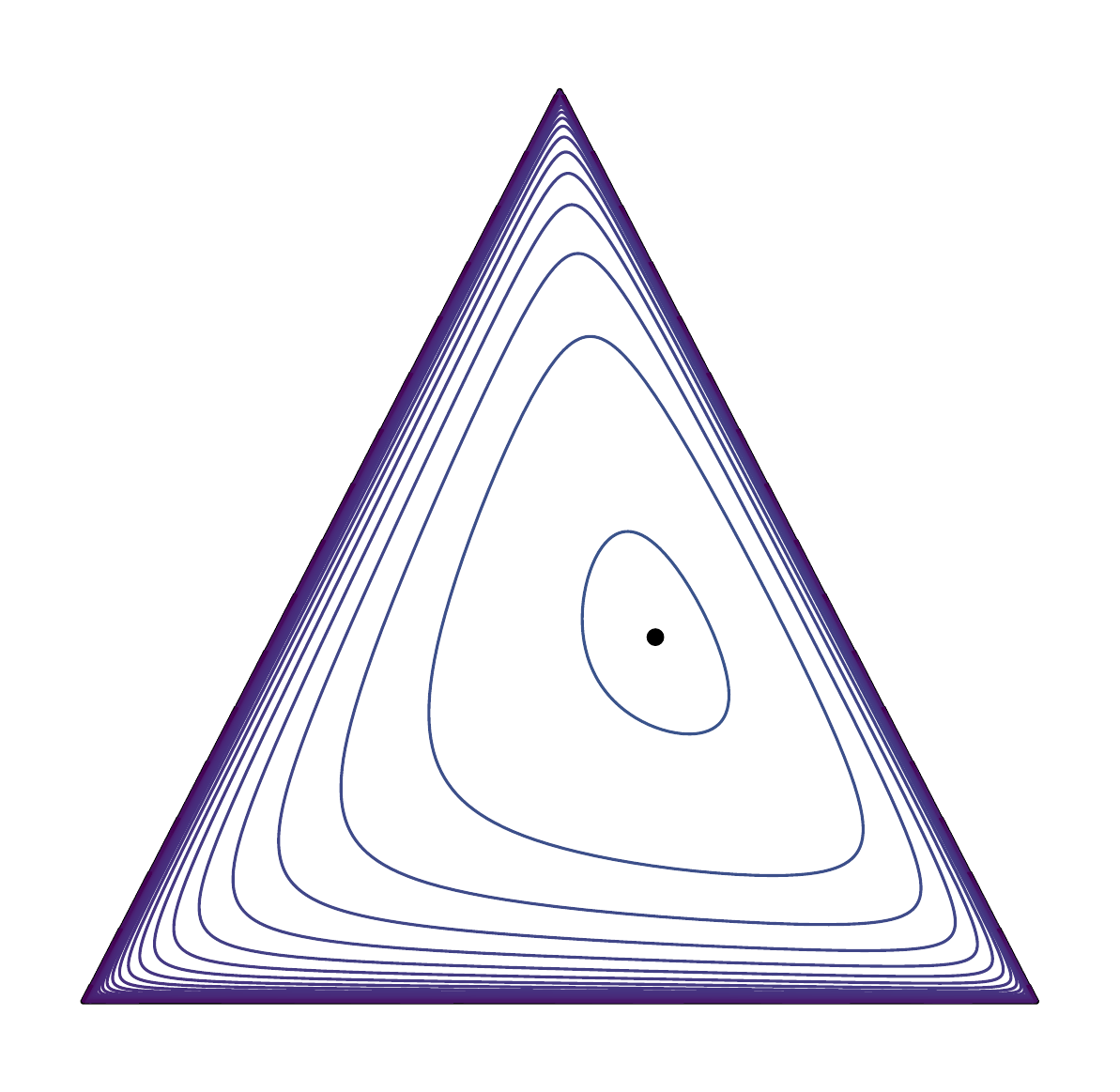}}
    \subfigure{\includegraphics[width=0.255\linewidth]{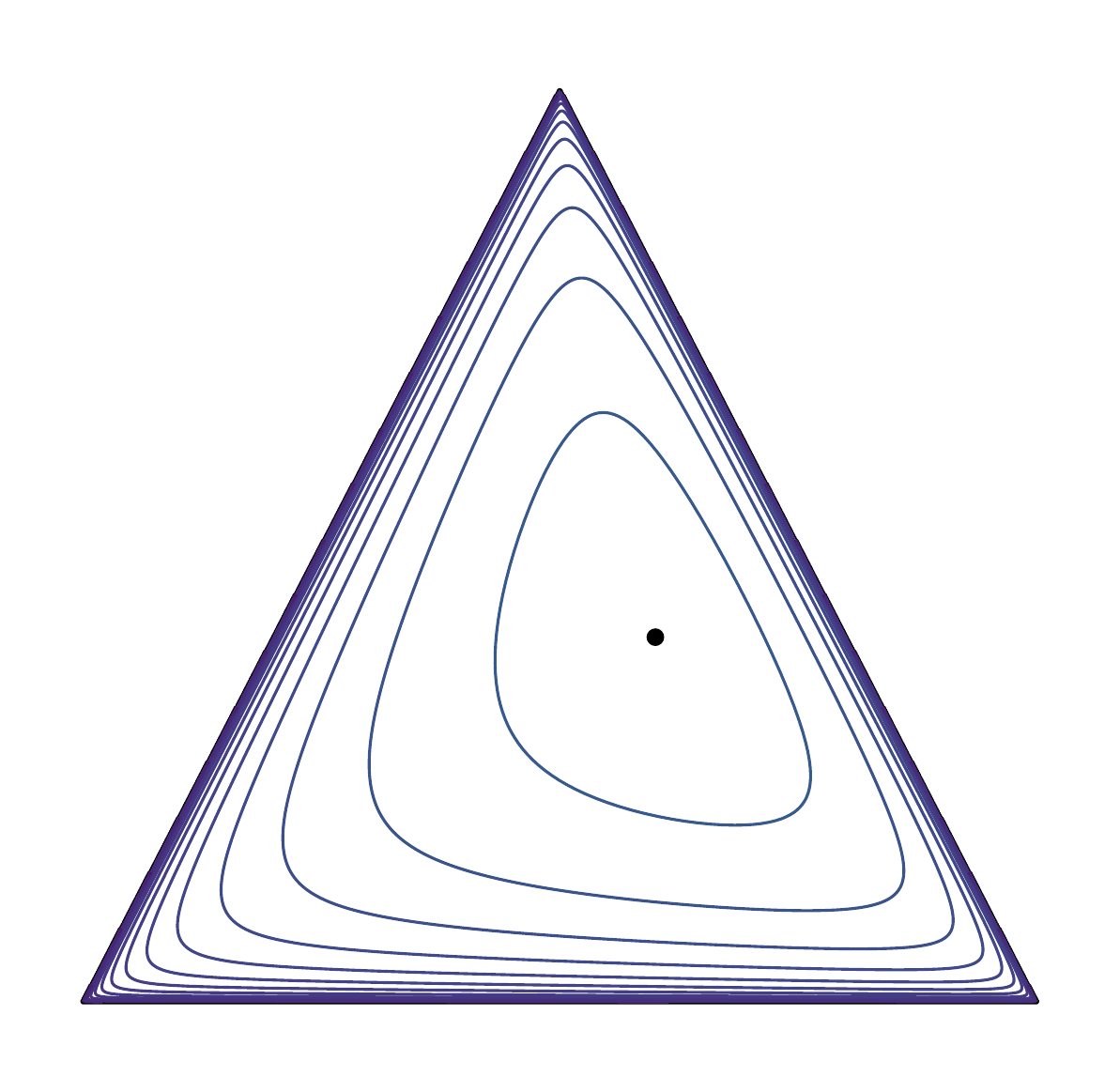}}
    \subfigure{\includegraphics[width=0.25\linewidth]{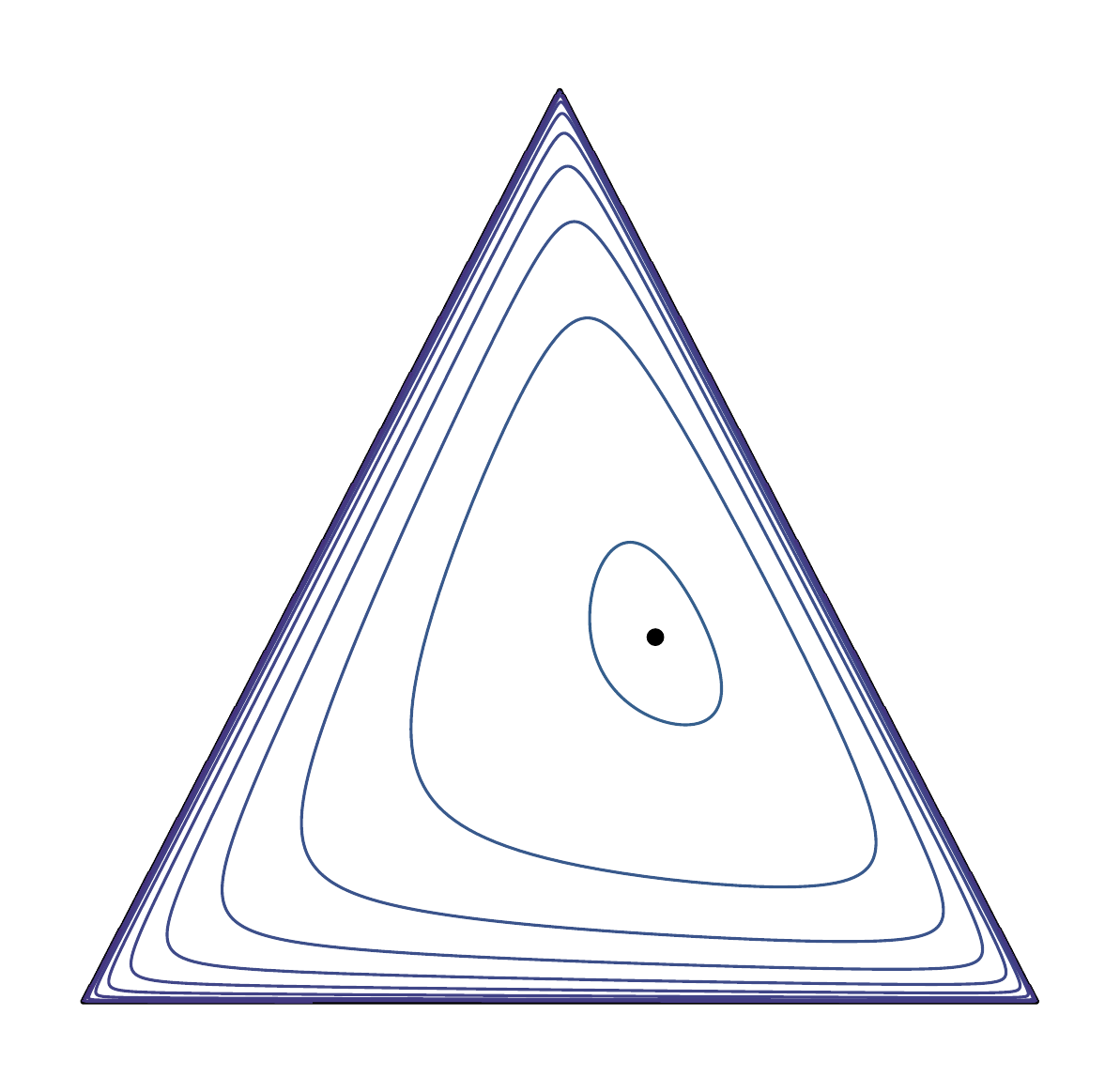}}\\\vspace{-0.5cm}
    $T=10$  \subfigure{\includegraphics[width=0.25\linewidth]{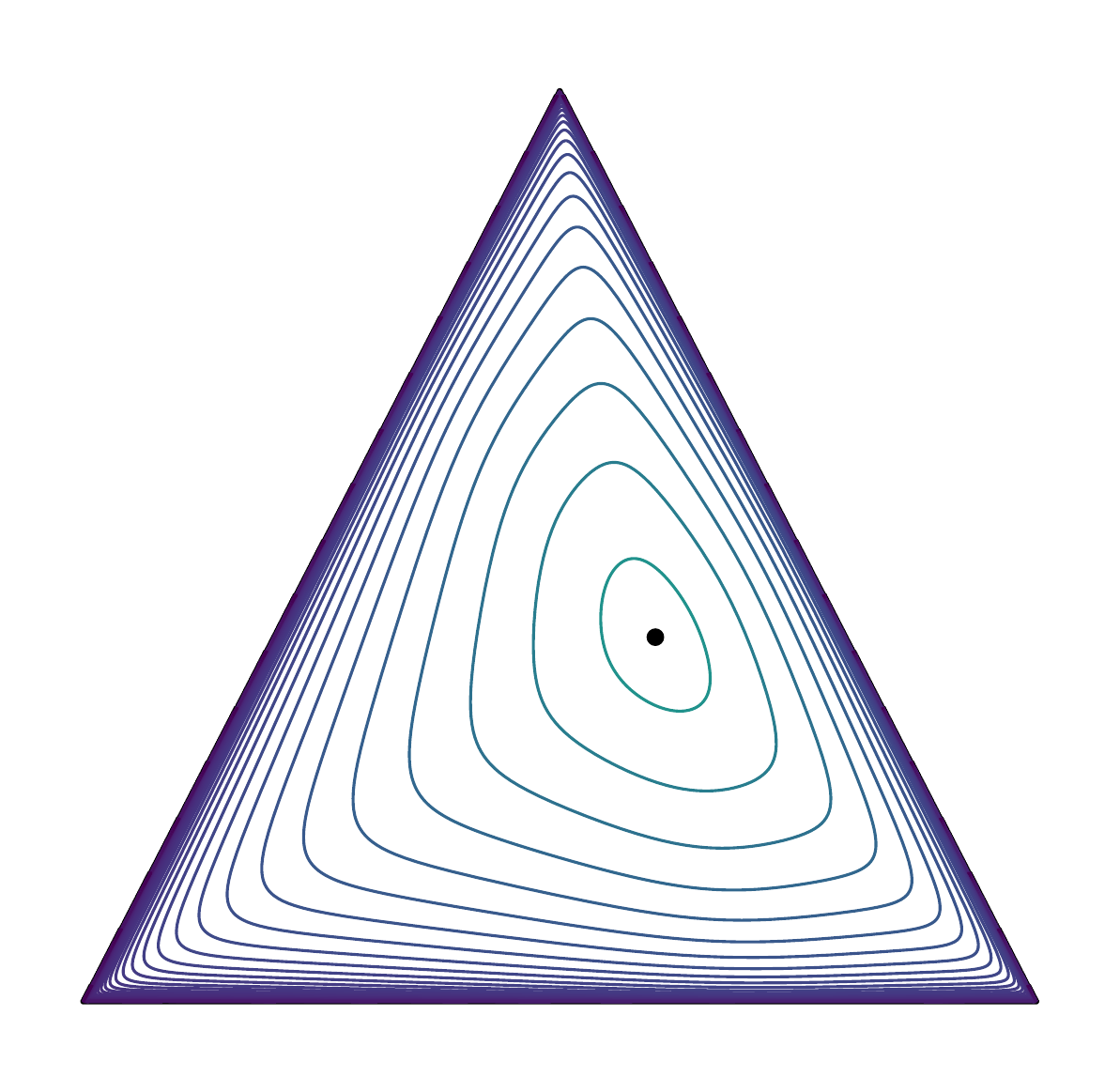}}
    \subfigure{\includegraphics[width=0.255\linewidth]{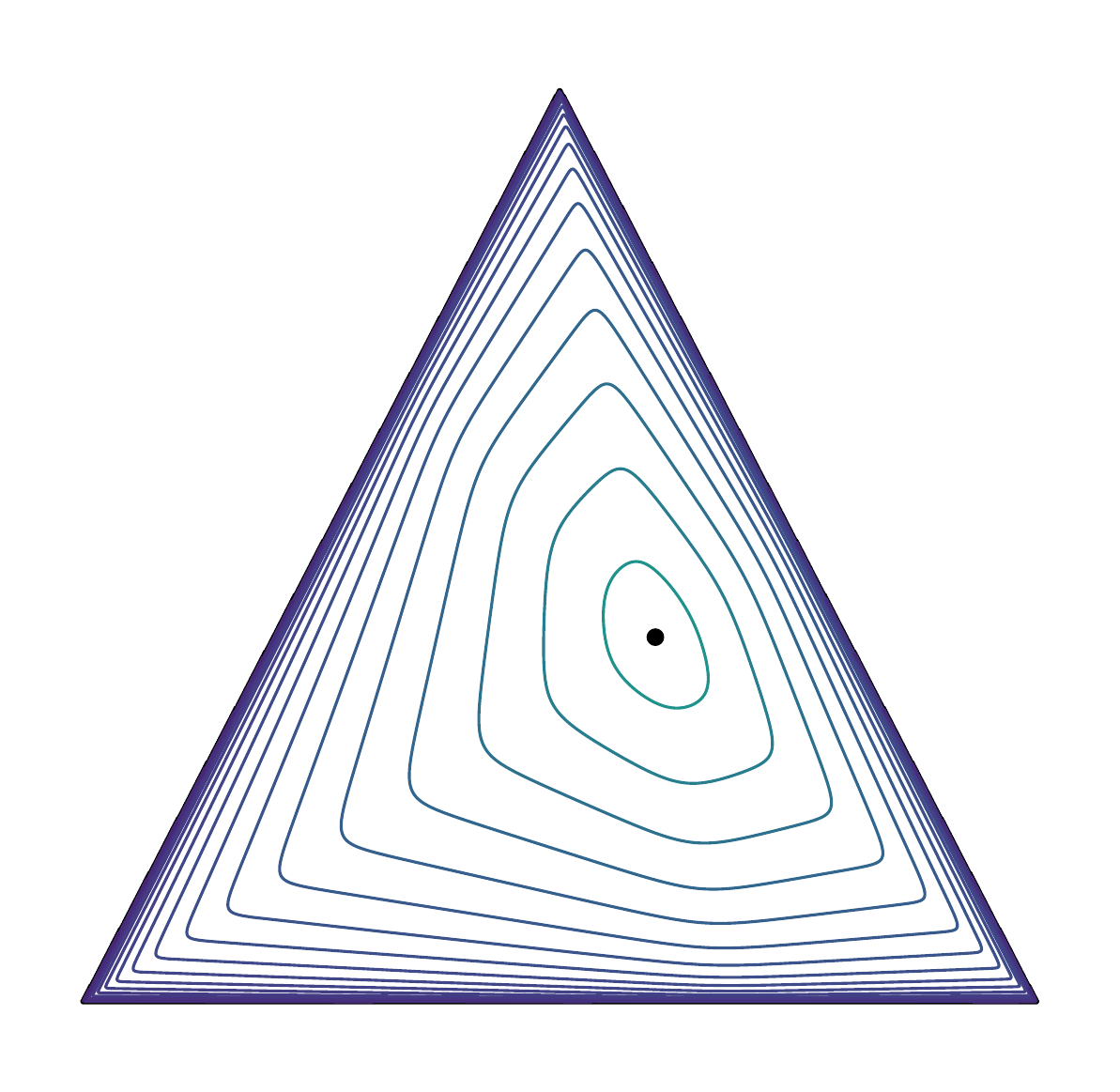}}
    \subfigure{\includegraphics[width=0.25\linewidth]{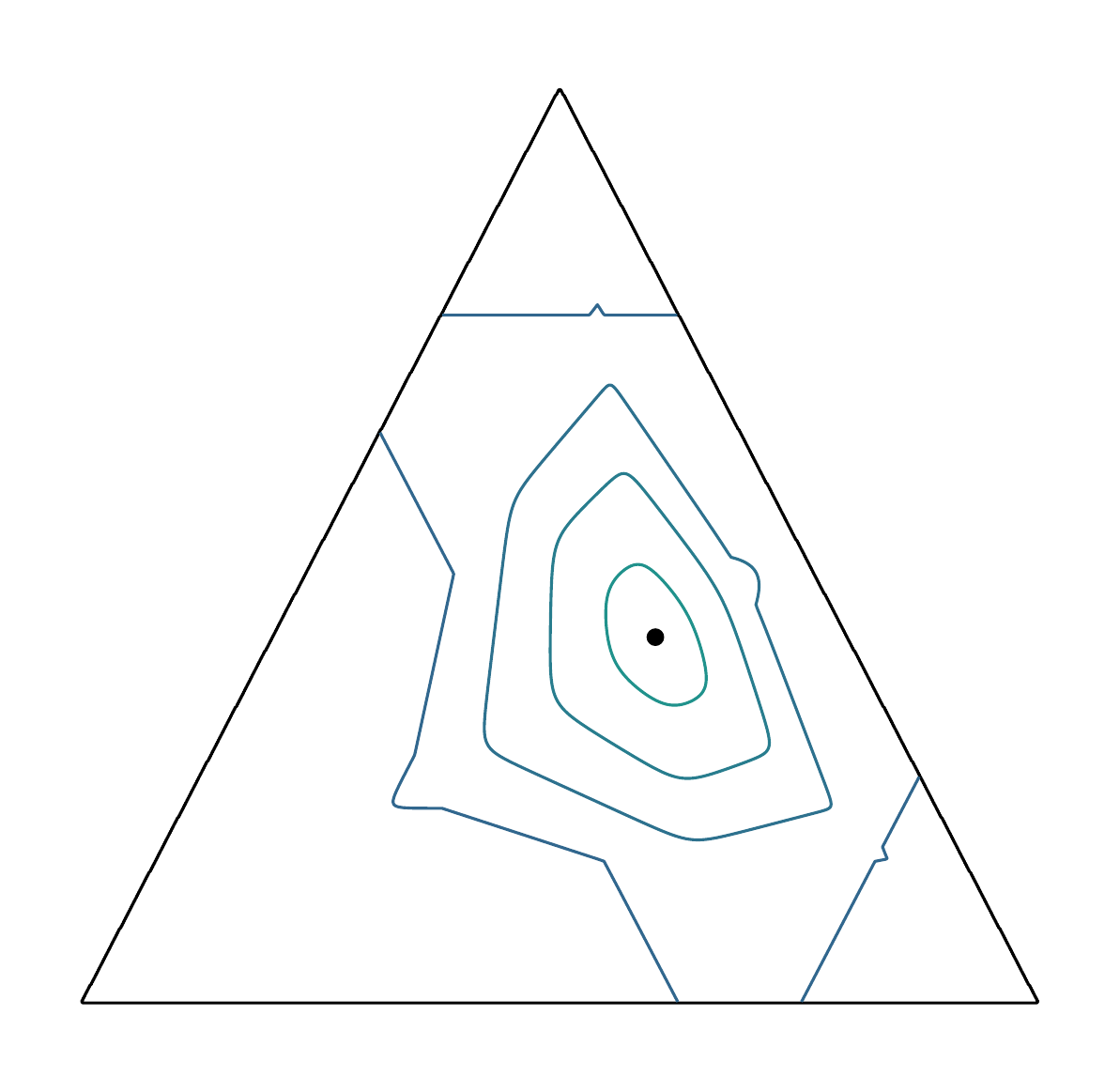}}\\\vspace{-0.5cm}
    $T=20$ \subfigure[$t=0.8$]{\addtocounter{subfigure}{-6}\includegraphics[width=0.25\linewidth]{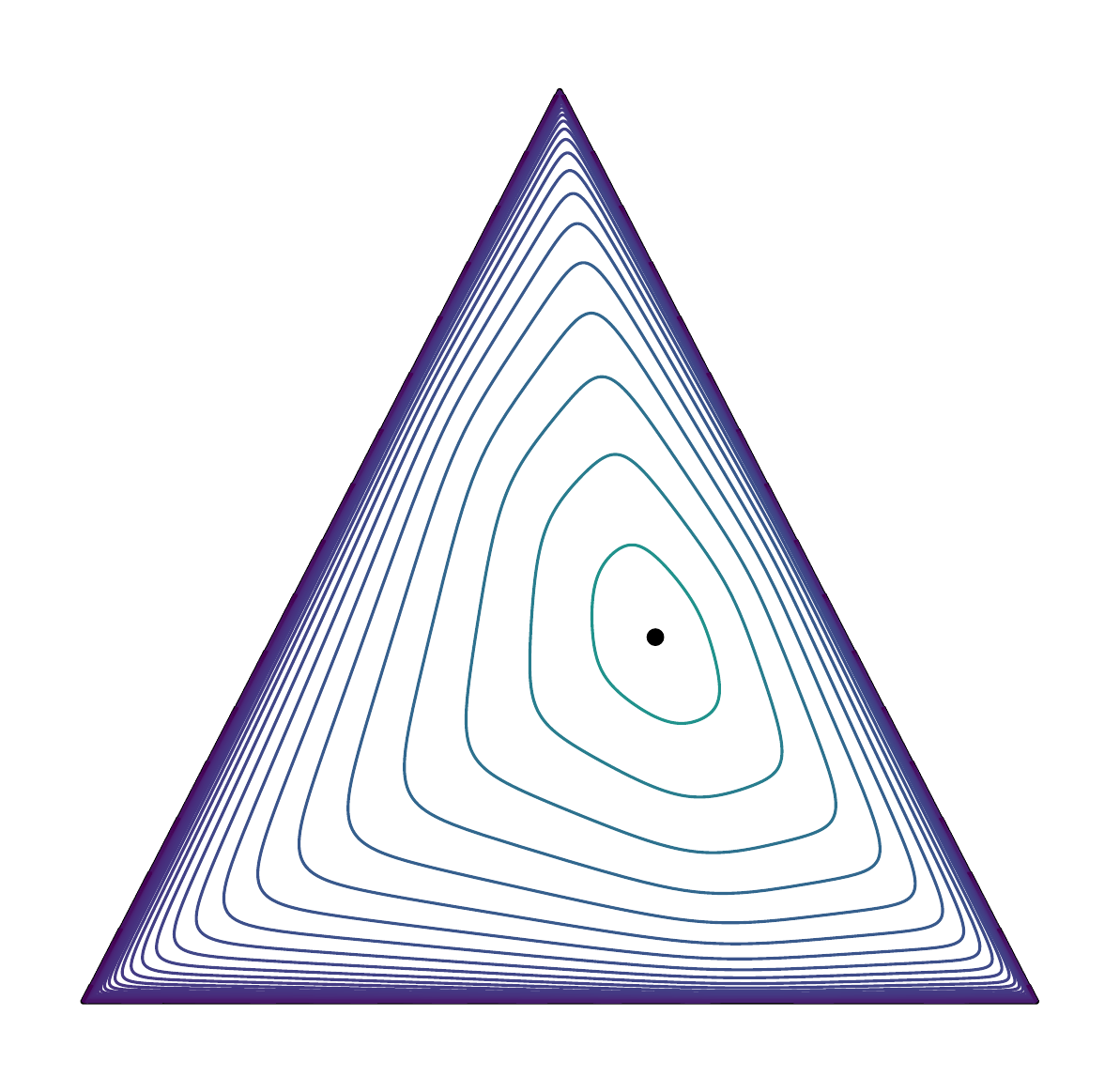}}
    \subfigure[$t=1.0$]{\includegraphics[width=0.255\linewidth]{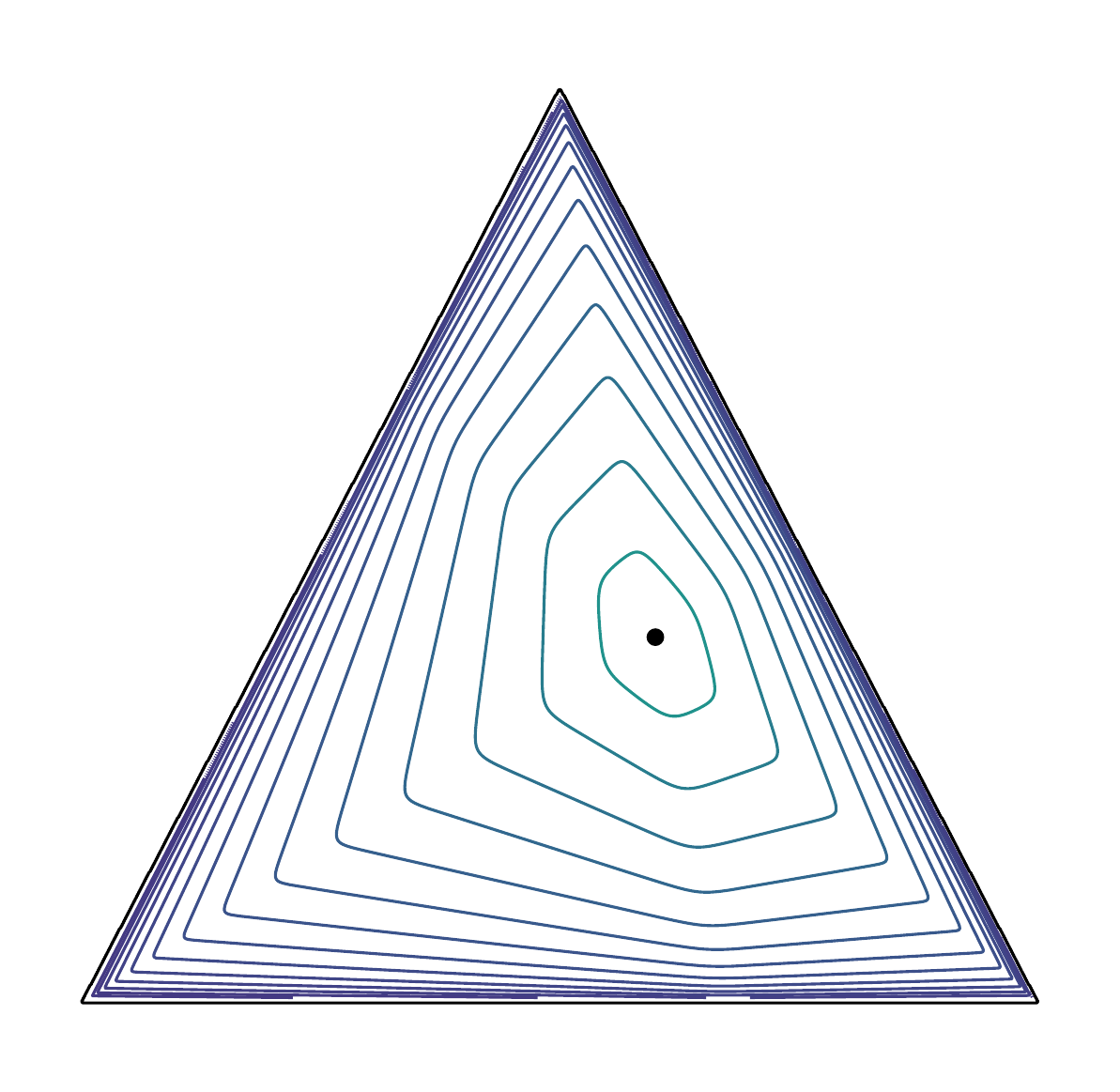}}
    \subfigure[$t=1.1$]{\includegraphics[width=0.25\linewidth]{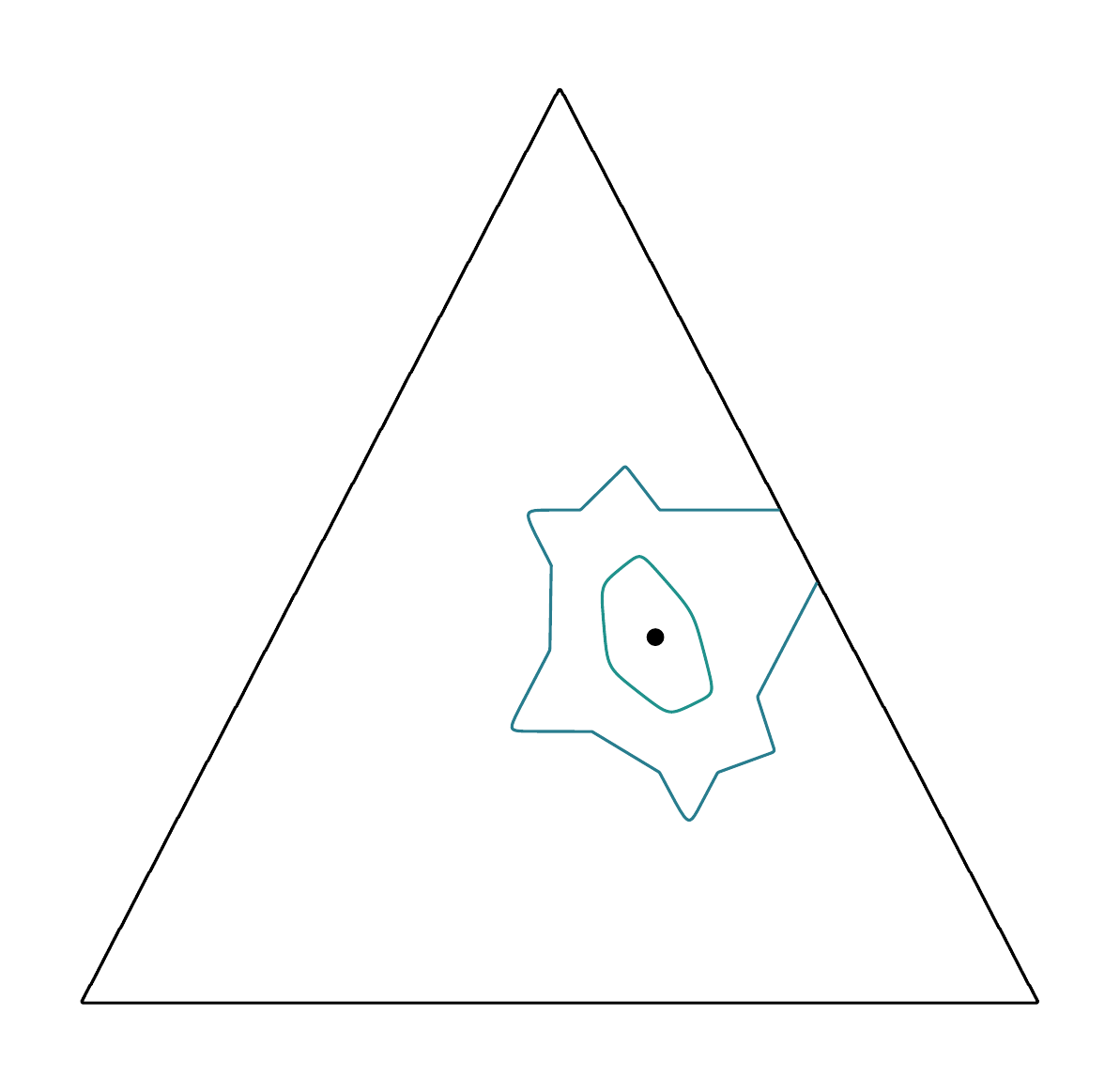}}
    \caption{Balls of different radii with respect to $\rho_{\tHGdiff}$ shown on the simplex for different temperature $t$ and smoothing factor $T$. The center is shown with a black dot. Darker colors indicate larger distances. Only when $t \leq 1$, we have $\rho_{\tHGdiff} \rightarrow \rho_{\tHG}$ as $T \rightarrow \infty$.}
    \label{fig:diff-simplex}
    \end{center}
\end{figure*}

\section{Details of the Experiments on Comparing Different Geometries}
We follow the procedure in~\citet{NLEHilbert-2023} for comparing the quality of the embeddings. Let $\mathcal{M}^d$ denote the manifold that we use to embed a set of points from a dataset $\mathcal{D}$ and $\mathcal{Y} = \{\ve{y}_i\}_{i\in [n]}$ denote the embedding. Given the pairwise distances  $D_{ij}$ between the points $i,j\in[n]$ in the original representation, we measure the embedding loss
\begin{equation}
    \mathcal{L}(D, \mathcal{M}^d) = \inf_{\mathcal{Y}\in  (\mathcal{M}^d)^n} \frac{1}{n^2} \sum_{i\in[n]}\sum_{j\in[n]} (D_{ij} - \rho_{\mathcal{M}}(\ve{y}_i, \ve{y}_j))\,,
\end{equation}
where $\rho_{\mathcal{M}}$ denotes the distance function of the geometry. We consider: i) Euclidean, ii) Minkowski hyperboloid model, iii) Hilbert simplex embedding, and iv) $t$-Hilbert co-simplex embedding. For Hilbert simplex embedding, we represent the points in log-coordinates. For $t$-Hilbert, we use the same $\log$-representation as Hilbert embedding but apply the function $x\mapsto\log_t\exp x$ on the Hilbert distance to convert the pairwise distances. We use \texttt{PyTorch} for the implementation and \texttt{Adam} as the optimizer. The details of the initialization and tuning follow~\citet{NLEHilbert-2023} and are omitted.

\section{Tempered Klein and Poincar\'{e} Disk Models}
The Klein model represents the hyperbolic plane as the interior of a unit ball. The distance between any two points $\ve{r}$ and $\ve{s}$ in the interior of the unit ball corresponds to Hilbert's distance with $\chi=\frac{1}{2}$ and $\Omega=\{ \ve{x}\in\bbR^d \st \|\ve{x}\|_2<1\}$ as the convex domain~\citep{ProjectiveBook-2011}:
$$
\rho_{\text{\tiny K}}(\ve{r},\ve{s})=\arccosh\left( \frac{1-\ve{r}\cdot \ve{s}}{\sqrt{(1-\ve{r}\cdot \ve{r})\, (1-\ve{s}\cdot \ve{s})}}\right)\,,
$$
where $\ve{r}\cdot \ve{s}$ denotes the Euclidean inner-product and
$\arccosh(x)=\log(x+\sqrt{x^2-1})$ for $x>1$. The geodesics in this model connecting any two points correspond to line segments, although the representation is non-conformal (i.e., it does not preserve the angles except at the disk origin).

Consider the map
\begin{equation}
    \label{eq:dist-map}
    \psi_{t, \chi}(u) = \chi \log_t \exp \frac{u}{\chi}\,,\,\,\chi > 0\,.
\end{equation}
Note that the map $\psi_{t, \chi}$ converts a given input $\log$-cross-ratio, scaled by $\chi$, into the $\log_t$ of the cross-ratio, scaled by the same factor $\chi$. Thus, to generalize the Klein model, we define the tempered Klein distance as $$\rho_{t\text{\tiny -K}}(\ve{r},\ve{s}) =  \psi_{t,\sfrac{1}{2}}(\rho_{\text{\tiny K}}(\ve{r},\ve{s}))\,.$$
Clearly, we have $\rho_{t\text{\tiny -K}}(\ve{r},\ve{s}) \rightarrow \rho_{\text{\tiny K}}(\ve{r},\ve{s})$ as $t \rightarrow 1$. 

The Poincar\'{e} disk model is a conformal representation also defined on the interior of a unit ball. In the Poincar\'{e} model, the distance between two points $\ve{r}$ and $\ve{s}$ in the domain is defined as 
$$
\rho_{\text{\tiny P}}(\ve{r},\ve{s})=\arccosh\left(1 + 2\, \frac{\|\ve{r} - \ve{s}\|^2_2}{\sqrt{(1-\|\ve{r}\|^2_2)\, (1-\ve{s}\cdot \|\ve{s}\|^2_2)}}\right)\,.
$$
In this model, the straight lines in the hyperbolic geometry are represented by arcs of circles perpendicular to the unit
ball (including straight line segments passing through the disk origin). We similarly define the tempered Poincar\'{e} distance as $$\rho_{t\text{\tiny -P}}(\ve{r},\ve{s}) =  \psi_{t,\sfrac{1}{2}}(\rho_{\text{\tiny P}}(\ve{r},\ve{s}))\,,$$
where similarly we have $\rho_{t\text{\tiny -P}}(\ve{r},\ve{s}) \rightarrow \rho_{\text{\tiny P}}(\ve{r},\ve{s})$ as $t \rightarrow 1$. 
A point $\ve{k}$ in the Klein disk model is mapped to a point in the Poincar\'{e} disk via the radial rescaling map
\begin{equation}
\label{eq:map-k-to-p}
\pi:\,\ve{k} \mapsto \frac{1 - \sqrt{1 - \|\ve{k}\|^2_2}}{\|\ve{k}\|^2_2}\,\ve{k}\,.
\end{equation}
Since the tempered Klein distance $\rho_{t\text{\tiny -K}}$ is obtained from the Klein distance $\rho_{\text{\tiny K}}$ via the monotonic transformation $\psi_{t, \sfrac{1}{2}}$, the same formula~\eqref{eq:map-k-to-p} maps the tempered Klein model to the tempered Poincar\'{e} model. We visualize several straight lines in the tempered Klein disk model that have identical distances between the start point $\bullet$ and the endpoint $\ve{\times}$ in Figure~\ref{fig:klein-poincare}(a) for different $t$. Note that the value of the distance changes with $t$, but the distances are identical for the same value of $t$. We also show the point on the line that is $0.2\times$ away from the start point $\bullet$ relative to the total distance. The relative distance is dilated for $t < 1$  and contracted $t > 1$, compared to the Klein model. Similarly, we show these lines in the Poincar\'{e} model in Figure~\ref{fig:klein-poincare}(b), where the same dilation and contraction of the distances are evident. We repeat the same procedure, but for a point $0.5\times$ away relative to the start point $\bullet$, located at the origin, for the Klein and Poincar\'{e} models in Figure~\ref{fig:klein-poincare}(c) and (d), respectively.

\begin{figure*}[t!]
\begin{center}
    \subfigure[]{\includegraphics[width=0.24\linewidth]{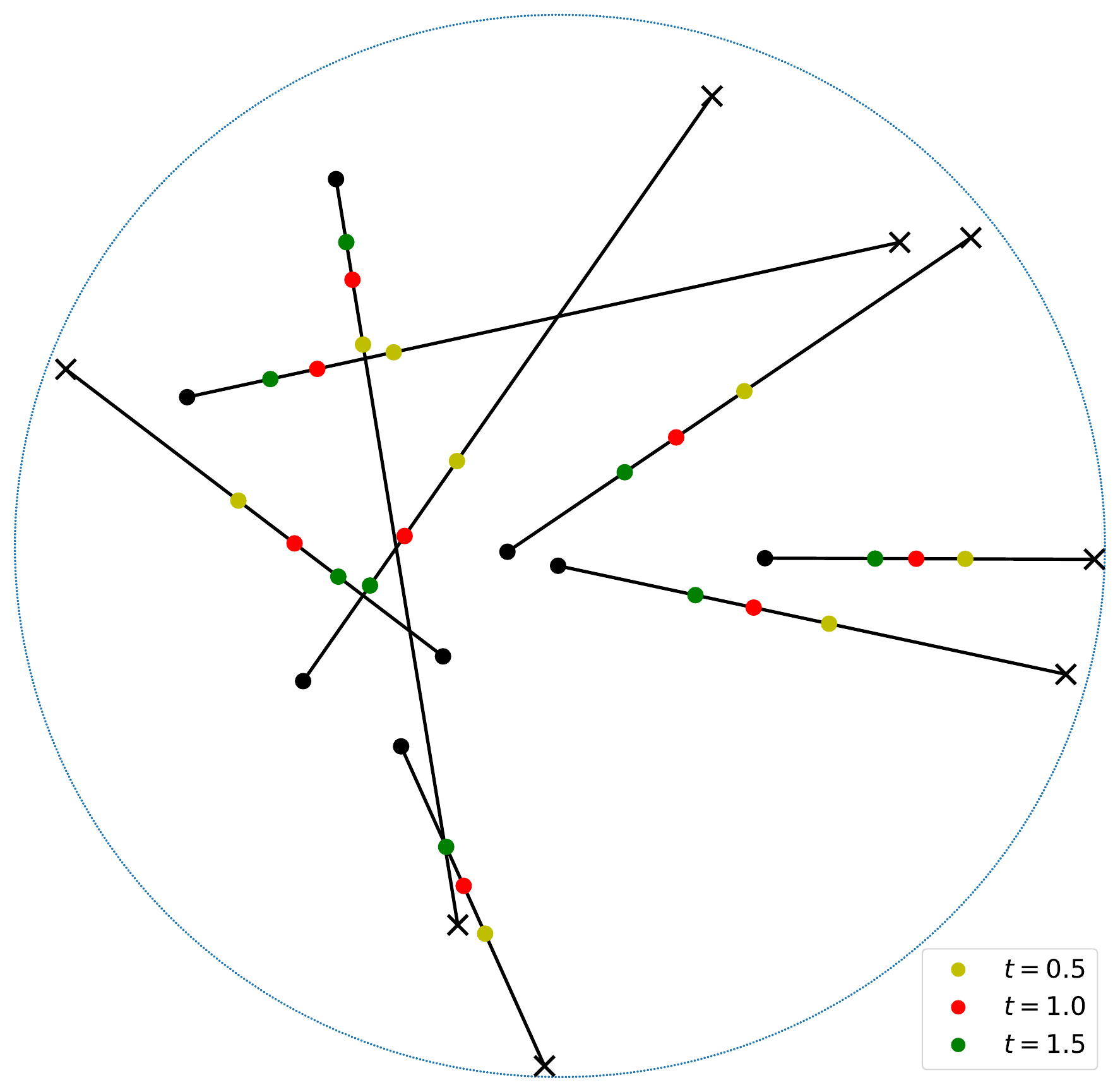}}
    \subfigure[]{\includegraphics[width=0.24\linewidth]{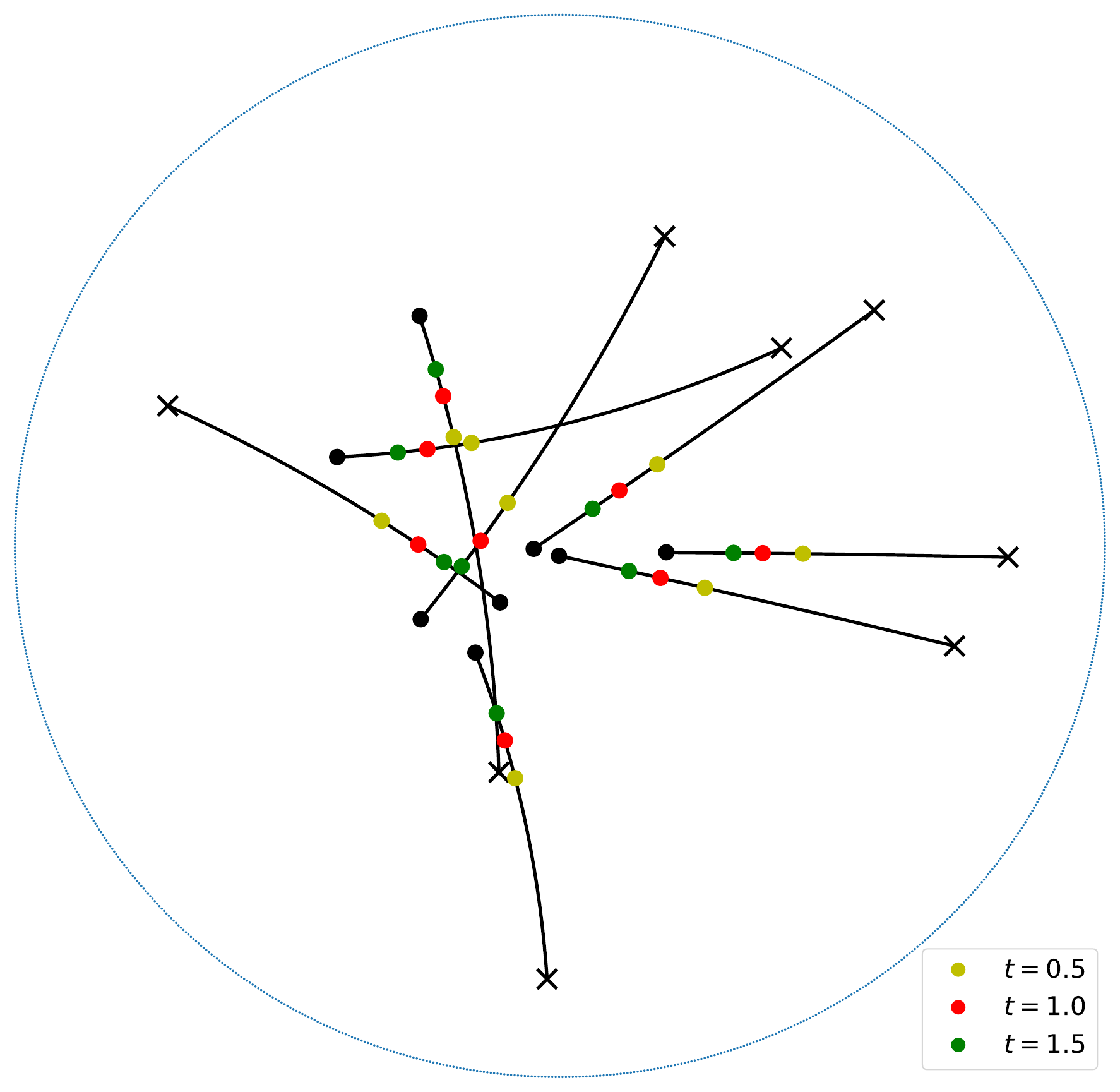}}
    \subfigure[]{\includegraphics[width=0.24\linewidth]{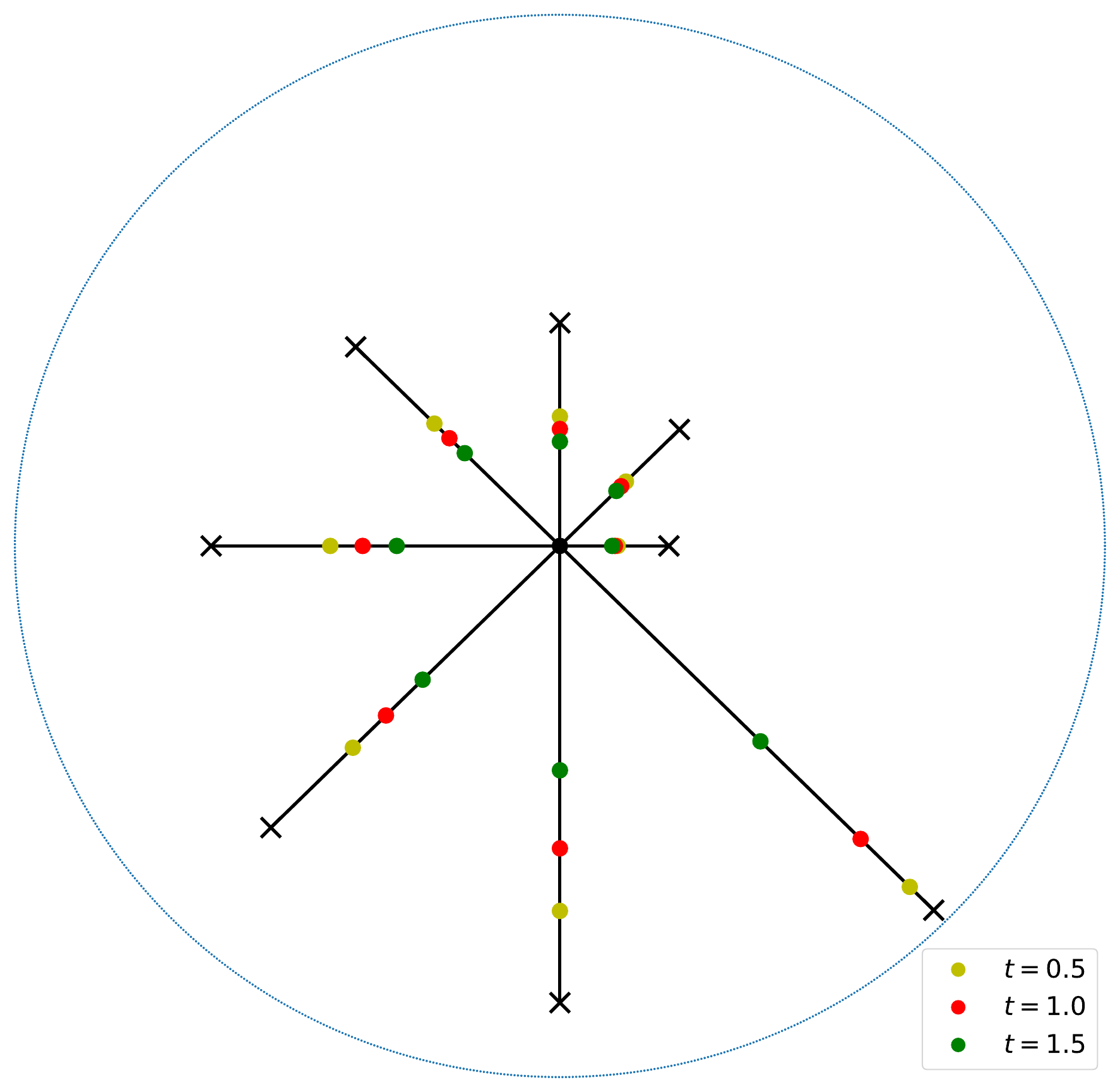}}
    \subfigure[]{\includegraphics[width=0.24\linewidth]{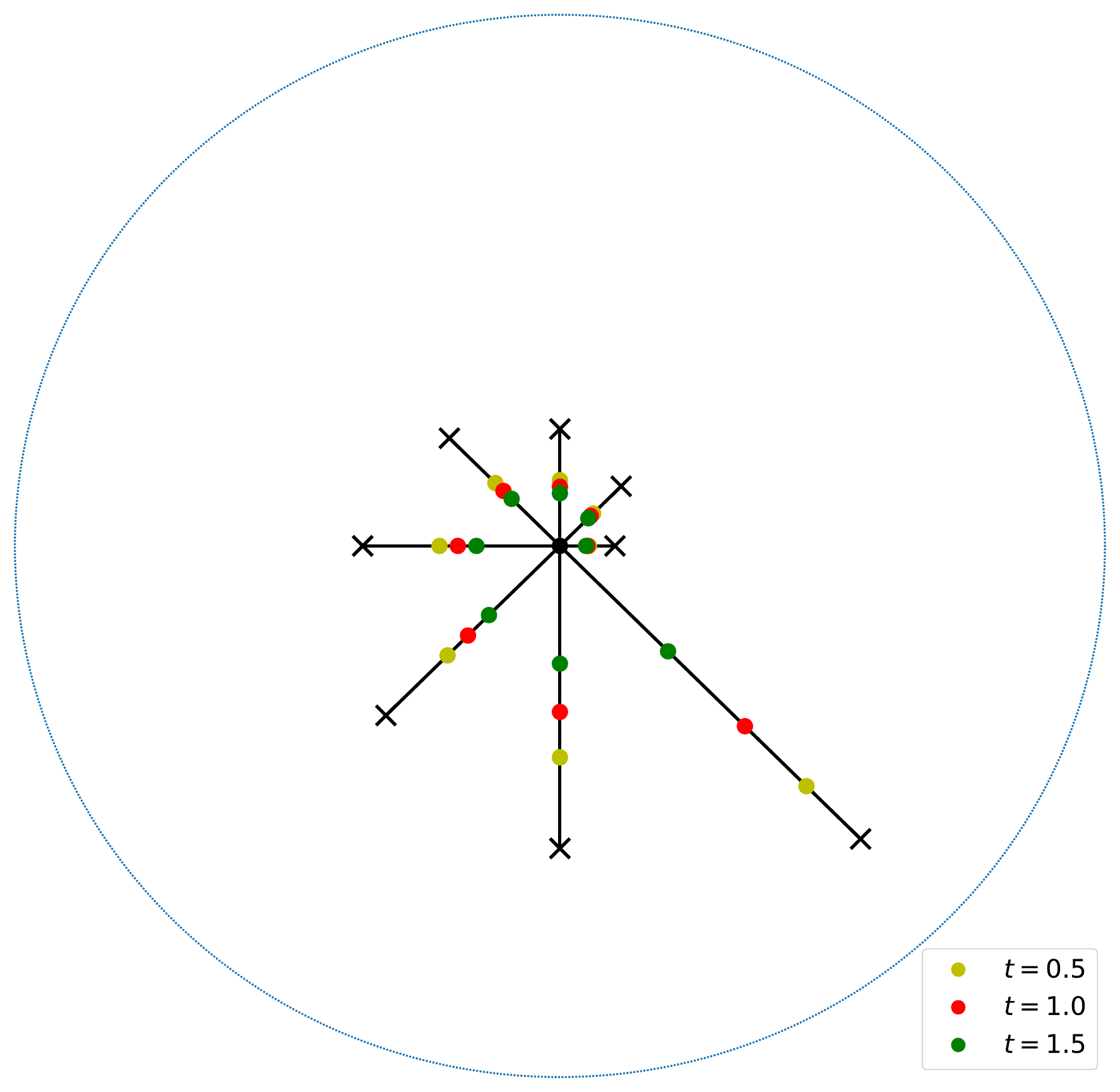}}
    \caption{Visualization of straight lines in the Klein ((a) and (c)) and Poincar\'{e} ((b) and (d)) models. In (a) and (b), the lines have identical lengths, and the colored point indicates the point $0.2\times$ away from the start point $\bullet$ relative to the total length for different $t$. We similarly show the points $0.5\times$ away (midpoints) from the start point $\bullet$ relative to the total length in (c) and (d). The tempered distances represents a dilation for $t < 1$ and a contraction for $t > 1$.}
    \label{fig:klein-poincare}
    \end{center}
\end{figure*}
\end{document}